\documentclass{article}
\usepackage{arxiv}
\usepackage[toc,page]{appendix}

\usepackage[usenames]{xcolor}

\usepackage{inputenc} %
\usepackage[T1]{fontenc}    %
\usepackage{hyperref}       %
\usepackage{url}            %
\usepackage{booktabs}       %
\usepackage{amsfonts}       %
\usepackage{nicefrac}       %
\usepackage{microtype}      %
\usepackage{lipsum}
\usepackage{mathtools}
\usepackage{cite}
\usepackage{amsmath}
\usepackage{amssymb}
\usepackage{amsthm}
\usepackage{algorithm,algcompatible}
\usepackage[toc,page]{appendix}

\usepackage{lineno}
\mathtoolsset{showonlyrefs=true}

\newtheorem{theorem}{Theorem}
\newtheorem{lemma}{Lemma}

\newtheorem{corollary}{Corollary}
\newtheorem{proposition}{Proposition}

\numberwithin{equation}{section}

\DeclareMathOperator{\sign}{sgn}

\DeclareMathOperator*{\esssup}{ess\,sup}

\usepackage{mathptmx}      %
\begin{document}

\title{Optimal Approximation Rates for Deep ReLU Neural Networks on Sobolev and Besov Spaces
}

\author{Jonathan W. Siegel \\
 Department of Mathematics\\
 Texas A\&M University\\
 College Station, TX 77843 \\
 \texttt{jwsiegel@tamu.edu} \\
}

\maketitle

\begin{abstract}
Let $\Omega = [0,1]^d$ be the unit cube in $\mathbb{R}^d$. We study the problem of how efficiently, in terms of the number of parameters, deep neural networks with the ReLU activation function can approximate functions in the Sobolev spaces $W^s(L_q(\Omega))$ and Besov spaces $B^s_r(L_q(\Omega))$, with error measured in the $L_p(\Omega)$ norm. This problem is important when studying the application of neural networks in a variety of fields, including scientific computing and signal processing, and has previously been solved only when $p=q=\infty$. Our contribution is to provide a complete solution for all $1\leq p,q\leq \infty$ and $s > 0$ for which the corresponding Sobolev or Besov space compactly embeds into $L_p$. The key technical tool is a novel bit-extraction technique which gives an optimal encoding of sparse vectors. This enables us to obtain sharp upper bounds in the non-linear regime where $p > q$. We also provide a novel method for deriving $L_p$-approximation lower bounds based upon VC-dimension when $p < \infty$. 
Our results show that very deep ReLU networks significantly outperform classical methods of approximation in terms of the number of parameters, but that this comes at the cost of parameters which are not encodable.
\end{abstract}

\section{Introduction}
Deep neural networks have achieved remarkable success in both machine learning \cite{lecun2015deep} and scientific computing \cite{raissi2019physics,han2018solving}. However, a precise theoretical understanding of why deep neural networks are so powerful has not been attained and is an active area of research. An important part of this theory is the study of the approximation properties of deep neural networks, i.e. to understand how efficiently a given class of functions can be approximated using deep neural networks. In this work, we solve this problem for the class of deep ReLU neural networks \cite{nair2010rectified} when approximating functions lying in a Sobolev or Besov space with error measured in the $L_p$-norm. We remark that the ReLU activation functions is very widely used and is a major driver of many recent breakthroughs in deep learning \cite{goodfellow2016deep,lecun2015deep,nair2010rectified}.

Let us begin by giving a description of the Sobolev function classes, which are widely used in the theory of solutions to partial differential equations (PDEs) \cite{evans2010partial}, and the Besov function classes, which are widely used in approximation theory \cite{devore1993constructive}, statistics \cite{donoho1995adapting,donoho1998minimax}, and signal processing \cite{devore1992image}.

Let $\Omega\subset \mathbb{R}^d$ be a bounded domain, which we take to be the unit cube $\Omega = [0,1]^d$ in the following. Due to a variety of extension theorems for Sobolev and Besov spaces (see for instance \cite{evans2010partial,di2012hitchhikers,devore1993constructive,whitney1934analytic}), this is not a significant restriction and our results will apply to many other sufficiently well-behaved domains. We denote by $L_p(\Omega)$ the set of functions $f$ for which the $L_p$-norm on $\Omega$ is finite, i.e.
\begin{equation}
	\|f\|_{L_p(\Omega)} = \left(\int_{\Omega}|f(x)|^pdx\right)^{1/p} < \infty.
\end{equation}
When $p=\infty$, this becomes $\|f\|_{L_\infty(\Omega)} = \esssup_{x\in \Omega} |f(x)|$. Suppose that $s > 0$ is a positive integer. Then $f\in W^s(L_q(\Omega))$ is in the Sobolev space (see \cite{demengel2012functional}, Chapter 2 for instance) with $s$ derivatives in $L_q$ if $f$ has weak derivatives of order $s$ and
\begin{equation}
	\|f\|^q_{W^s(L_q(\Omega))} := \|f\|^q_{L_q(\Omega)} + \sum_{|\alpha| = k} \|D^\alpha f\|^q_{L_q(\Omega)} < \infty.
\end{equation}
Here $\alpha = (\alpha_i)_{i=1}^d$ with $\alpha_i\in \mathbb{Z}_{\geq 0}$ is a multi-index and $|\alpha| = \sum_{i=1}^d \alpha_i$ is the total degree. The $W^s(L_q(\Omega))$ semi-norm is defined by
\begin{equation}\label{sobolev-semi-norm-definition}
    |f|_{W^s(L_q(\Omega))} := \left(\sum_{|\alpha| = k} \|D^\alpha f\|^q_{L_q(\Omega)}\right)^{1/q},
\end{equation}
and the standard modifications are made when $q = \infty$.

When $s>0$ is not an integer, we write $s = k + \theta$ with $k \geq 0$ an integer and $\theta\in (0,1)$. The Sobolev semi-norm is defined by (see \cite{demengel2012functional} Chapter 4 or \cite{di2012hitchhikers} Chapter 1 for instance)
\begin{equation}\label{sobolev-semi-norm-definition-fractional}
    |f|^q_{W^s(L_q(\Omega))} := \int_{\Omega\times \Omega} \frac{|D^\alpha f(x) - D^\alpha f(y)|^q}{|x - y|^{d+\theta q}}dxdy
\end{equation}
when $1\leq q < \infty$ and
\begin{equation}
    |f|_{W^s(L_\infty(\Omega))} := \sup_{|\alpha| = k}\sup_{x,y\in \Omega}\frac{|D^\alpha f(x) - D^\alpha f(y)|}{|x - y|^\theta}.
\end{equation}
We define the Sobolev norm by
\begin{equation}
	\|f\|^q_{W^s(L_q(\Omega))} := \|f\|^q_{L_q(\Omega)} + |f|^q_{W^s(L_q(\Omega))},
\end{equation}
with the usual modification when $q = \infty$. We remark that in the case of non-integral $s$ these spaces are also called Sobolev-Slobodeckij spaces. Sobolev spaces are widely used in PDE theory and a priori estimates for PDE solutions are often given in terms of Sobolev norms \cite{evans2010partial}. For applications of neural networks to scientific computing it is thus important to understand how efficiently neural networks can approximate functions from $W^s(L_q(\Omega))$.

Next, we consider the Besov spaces, which we define in terms of moduli of smoothness. Given a function $f\in L_q(\Omega)$ and an integer $k$, the $k$-th order modulus of smoothness of $f$ is given by
\begin{equation}\label{definition-of-modulus-of-smoothness}
    \omega_k(f,t)_q = \sup_{|h|\leq t}\|\Delta^k_h f\|_{L_q(\Omega_{kh})},
\end{equation}
where $h\in \mathbb{R}^d$, the $k$-th order finite difference $\Delta^k_h$ is defined by
$$
    \Delta^k_h f(x) = \sum_{j=0}^k (-1)^j\binom{k}{j} f(x+jh),
$$
and the $L_q$ norm is taken over the set $\Omega_{kh} := \{x\in \Omega,~x + kh\in \Omega\}$ to guarantee that all terms of the finite difference are contained in the domain $\Omega$. Fix an integer $k > s$. The Besov space $B^s_r(L_q(\Omega))$ is defined via the norm
\begin{equation}
    \|f\|_{B^s_r(L_q(\Omega))} := \|f\|_{L_q(\Omega)} + |f|_{B^s_r(L_q(\Omega))},
\end{equation}
with Besov semi-norm given by
\begin{equation}
    |f|_{B^s_r(L_q(\Omega))} := \left(\int_0^\infty \frac{\omega_k(f,t)_q^r}{t^{sr+1}}dt\right)^{1/r}
\end{equation}
when $r < \infty$ and by
\begin{equation}
    |f|_{B^s_\infty(L_q(\Omega))} := \sup_{t > 0} t^{-s}\omega_k(f,t)_q,
\end{equation}
when $r = \infty$. It can be shown that different choices of $k > s$ result in equivalent norms \cite{devore1993constructive}.  One can think of the Besov space $B^s_r(L_q(\Omega))$ roughly as being a space of functions with $s$ derivatives lying in $L_q$, similar to the Sobolev space $W^s(L_q(\Omega))$, with the additional index $r$ providing a finer gradation. Indeed, a variety of embedding and interpolation results relating Besov spaces and Sobolev spaces are known (see for instance \cite{devore1988interpolation,devore1984maximal,yuan2010morrey,kufner1977function}).

Besov spaces are central objects in approximation theory due to their close connection with approximation by trigonometric polynomials (on the circle) and splines \cite{devore1993constructive,petrushev1988direct}. In fact, there are equivalent definitions of the Besov semi-norms in terms of approximation error by trigonometric polynomials and splines. They are also closely connected to the theory of wavelets \cite{daubechies1992ten}, and one can give equivalent definitions of the Besov norms in terms of the wavelet coefficients of $f$  as well \cite{devore1992image}. For this reason, Besov spaces play an important role in signal processing \cite{chambolle1998nonlinear,donoho1998data} and statistical recovery of functions from point samples \cite{donoho1995adapting,donoho1998minimax}, for instance.

Our goal is to study the approximation of Sobolev and Besov functions by neural networks. One of the most important classes of neural networks are deep ReLU neural networks, which we define as follows. We use the notation $A_{\textbf{W},b}$ to denote the affine map with weight matrix $\textbf{W}$ and offset, or bias, $b$, i.e.
\begin{equation}
    A_{\textbf{W},b}(x) = \textbf{W}x + b.
\end{equation}
When the weight matrix $\textbf{W}$ is an $k\times n$ and the bias $b\in \mathbb{R}^k$, the function $A_{\textbf{W},b}:\mathbb{R}^n\rightarrow \mathbb{R}^k$ maps $\mathbb{R}^n$ to $\mathbb{R}^k$. Let $\sigma$ denote the ReLU activation function \cite{nair2010rectified}, specifically
\begin{equation}
    \sigma(x) = \begin{cases}
    0 & x < 0\\
    x & x \geq 0.
    \end{cases}
\end{equation}
The ReLU activation function $\sigma$ has become ubiquitous in deep learning in the last decade and is used in most state-of-the-art architectures. Since $\sigma$ is continuous and piecewise linear, it also has the nice theoretical property that neural networks with ReLU activation function represent continuous piecewise linear functions. This property has been extensively studied in the computer science literature \cite{arora2018understanding,wang2005generalization,serra2018bounding,hanin2019complexity} and has been connected with traditional linear finite element methods \cite{he2020relu}.

When $x\in \mathbb{R}^n$, we write $\sigma(x)$ to denote the application of the activation function $\sigma$ to each component of $x$ separately, i.e. $\sigma(x)_i = \sigma(x_i)$. The set of deep ReLU neural networks with width $W$ and depth $L$ mapping $\mathbb{R}^d$ to $\mathbb{R}^k$ is given by
\begin{equation}
    \Upsilon^{W,L}(\mathbb{R}^d,\mathbb{R}^k) := \{A_{\textbf{W}_L,b_L} \circ \sigma \circ A_{\textbf{W}_{L-1},b_{L-1}} \circ \sigma \circ \cdots \circ \sigma \circ A_{\textbf{W}_1,b_1} \circ \sigma \circ A_{\textbf{W}_0,b_0}\},
\end{equation}
where the weight matrices satisfy $\textbf{W}_L\in \mathbb{R}^{k\times W}$, $\textbf{W}_0\in \mathbb{R}^{W\times d}$, and $\textbf{W}_1,...,\textbf{W}_{L-1}\in \mathbb{R}^{W\times W}$, and the biases satisfy $b_0,...,b_{L-1}\in \mathbb{R}^W$ and $b_L\in \mathbb{R}^k$. Notice that our definition of width does not include the input and output dimensions and only includes the intermediate layers. When the depth $L = 0$, i.e. when the network is an affine function, there are no intermediate layers and the width is undefined, in this case we write $\Upsilon^0(\mathbb{R}^d,\mathbb{R}^k)$.
We also use the notation
\begin{equation}
    \Upsilon^{W,L}(\mathbb{R}^d) := \Upsilon^{W,L}(\mathbb{R}^d,\mathbb{R})
\end{equation}
to denote the set of ReLU deep neural networks with width $W$, depth $L$ which represent scalar functions. We note that our notation only allows neural networks with fixed width. We do this to avoid excessively cumbersome notation. We remark that the dimension of any hidden layer can naturally be expanded and thus any fully connected network can be made to have a fixed width.

The problem we study in this work is to determine optimal $L_p$-approximation rates
\begin{equation}\label{approximation-rate-equation}
	\sup_{\|f\|_{W^s(L_q(\Omega))} \leq 1}\left(\inf_{f_L\in \Upsilon^{W,L}(\mathbb{R}^d)} \|f - f_L\|_{L_p(\Omega)}\right)~~\text{and}~~\sup_{\|f\|_{B^s_r(L_q(\Omega))} \leq 1}\left(\inf_{f_L\in \Upsilon^{W,L}(\mathbb{R}^d)} \|f - f_L\|_{L_p(\Omega)}\right)
\end{equation}
for the class of Sobolev and Besov functions using very deep ReLU networks, i.e. using networks with a fixed (large enough) width $W$ and depth $L\rightarrow \infty$. We will prove that this gives the best possible approximation rate in terms of the number of parameters. One can more generally consider approximation error in terms of both the width $W$ and depth $L$ simultaneously \cite{shen2022optimal}, but we leave this more general analysis as future work.

This problem has been previously solved (up to logarithmic factors) in the case where $p = q = \infty$, where the optimal rate is given by
\begin{equation}\label{optimal-rate-l-infty}
	\inf_{f_L\in \Upsilon^{W,L}(\mathbb{R}^d)} \|f - f_L\|_{L_\infty(\Omega)} \leq C\|f\|_{W^s(L_\infty(\Omega))}L^{-2s/d}
\end{equation}
for a sufficiently large but fixed width $W$. Specifically, this result was obtained for $0 < s \leq 1$ in \cite{yarotsky2018optimal} and for all $s > 0$ (up to logarithmic factors) in \cite{lu2021deep}. An analogous result also holds for $B^s_r(L_\infty(\Omega))$ for $1\leq r \leq \infty$. Further, the best rate when both the width and depth vary (which generalizes \eqref{optimal-rate-l-infty}) has been obtained in \cite{shen2022optimal}.

The method of proof in these cases uses the bit-extraction technique introduced in \cite{bartlett1998almost} and developed further in \cite{bartlett2019nearly} to represent piecewise polynomial functions on a fixed regular grid with $N$ cells using only $O(\sqrt{N})$ parameters. This enables an approximation rate of $CN^{-2s/d}$ in terms of the number of parameters $N$, which is significantly faster than traditional methods of approximation. This phenomenon has been called the \textit{super-convergence} of deep ReLU networks \cite{yarotsky2018optimal,shen2022optimal,devore2021neural,daubechies2022nonlinear}. The super-convergence has a limit, however, and the rate \eqref{optimal-rate-l-infty} is shown to be optimal using the VC-dimension of deep ReLU neural networks \cite{yarotsky2018optimal,shen2022optimal,bartlett2019nearly}.

In this work, we generalize this analysis to determine the optimal approximation rates \eqref{approximation-rate-equation} for all $1\leq p,q\leq \infty$ and $s > 0$, i.e. to the approximation of any Sobolev or Besov class in $L_p(\Omega)$, with the exception of the Sobolev embedding endpoint (described below). This was posed as a significant open problem in \cite{devore2021neural}. We remark that the existing upper bounds in $L_\infty$ clearly imply corresponding upper bounds in $L_p$ for $p < \infty$. The key problem lies in extending the upper bounds to that case where $q < \infty$, in which case we must approximate a larger function class. A further problem is the extension of the lower bounds to the case $p < \infty$, in which we are measuring error in a weaker norm.

A necessary condition that we have any approximation rate in \eqref{approximation-rate-equation} at all is for the Sobolev space $W^s(L_q(\Omega))$ or Besov space $B^s_r(L_q(\Omega)$ to be contained in $L_p$, i.e. $W^s(L_q(\Omega)), B^s_r(L_q(\Omega)\subset L_p(\Omega)$. Indeed, any deep ReLU neural network represents a continuous function and so if $f\notin L_p(\Omega)$ it cannot be approximated at all by deep ReLU networks. We will in fact consider the case where we have a compact embedding $W^s(L_q(\Omega)),B^s_r(L_q(\Omega)\subset\subset L_p(\Omega)$. Here the symbol $A \subset\subset B$ for two Banach spaces $A$ and $B$ means that $A$ is contained in $B$ and the unit ball of $A$ is a compact subset of $B$.
This compact embedding is guaranteed for both Besov and Sobolev spaces by the strict Sobolev embedding condition
\begin{equation}\label{strict-sobolev-embedding-condition-intro}
	\frac{1}{q} - \frac{1}{p} - \frac{s}{d} < 0.
\end{equation}
We determine the optimal rates in \eqref{approximation-rate-equation} under this condition. Specifically, we prove the following Theorems. The first two give an upper bound on the approximation rate by deep ReLU networks on Sobolev and Besov spaces, respectively.
\begin{theorem}\label{deep-network-upper-bound-theorem}
    Let $\Omega = [0,1]^d$ be the unit cube in $\mathbb{R}^d$ and let $0<s<\infty$ and $1\leq p,q \leq \infty$. Assume that $\frac{1}{q} - \frac{1}{p} < \frac{s}{d}$, which guarantees that we have the compact embedding
    \begin{equation}
        W^s(L_q(\Omega)) \subset\subset L_p(\Omega).
    \end{equation}
    Then we have that
    \begin{equation}
        \inf_{f_L\in \Upsilon^{25d+31,L}(\mathbb{R}^d)} \|f - f_L\|_{L_p(\Omega)} \leq C\|f\|_{W^s(L_q(\Omega))}L^{-2s/d}
    \end{equation}
    for a constant $C:=C(s,q,p,d) < \infty$.
\end{theorem}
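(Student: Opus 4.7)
The plan is to construct the network in two stages. First, I would approximate $f \in W^s(L_q(\Omega))$ by a piecewise polynomial $p_N$ of degree $k \geq \lceil s \rceil - 1$ on a partition of $\Omega$ into $N$ cells, achieving $L_p$-error $\lesssim \|f\|_{W^s(L_q(\Omega))} N^{-s/d}$. Second, I would realize $p_N$ as a deep ReLU network of depth $L \sim \sqrt{N}$ via a bit-extraction encoding, so that the final error is $\lesssim L^{-2s/d}$ as claimed.

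For the approximation step I would split into two regimes. When $p \leq q$, a uniform partition into $N$ congruent cubes suffices: local polynomial approximation together with the Bramble-Hilbert lemma gives $L_q$-error $\lesssim N^{-s/d}$ on each cube, and the inclusion $L_q(\Omega) \subset L_p(\Omega)$ on the bounded domain lifts this to the same global $L_p$-bound. The difficult regime is $p > q$, where only nonlinear approximation attains the rate. Here I would invoke classical nonlinear-approximation results of Birman--Solomyak and DeVore--Popov: under the strict Sobolev condition $\frac{1}{q} - \frac{1}{p} < \frac{s}{d}$, one obtains an adaptive dyadic partition with $N$ cells, chosen via a greedy procedure based on the local $L_q$-norms of $D^\alpha f$ for $|\alpha| \leq k+1$, and a piecewise polynomial $p_N$ on this partition satisfying $\|f - p_N\|_{L_p(\Omega)} \lesssim \|f\|_{W^s(L_q(\Omega))} N^{-s/d}$.

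The network-construction step then implements $p_N$. Using Yarotsky-style gadgets for approximate multiplication and polynomial evaluation, together with ReLU approximations of cube indicators, one can represent $p_N$ by a network with $O(N)$ parameters and width $O(d)$. The essential compression comes from the new bit-extraction scheme: the adaptive partition (a binary tree with $N$ leaves) together with the list of polynomial coefficients forms a sparse data vector of length $\sim N$, which is packed into $O(\sqrt{N})$ real-valued weights carrying $\sqrt{N}$ bits each, with the bits recovered sequentially by $O(\sqrt{N})$ ReLU layers. Arranging cell-selection and coefficient retrieval so that they are driven by the same compressed weights, and parallelising the extraction with the $d$-dimensional polynomial evaluation, is what can be made to fit inside fixed width $25d + 31$. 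Setting $L \sim \sqrt{N}$ then converts the $N^{-s/d}$ approximation rate into the claimed $L^{-2s/d}$.

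The main obstacle is the nonlinear regime $p > q$. Unlike the $p = q = \infty$ case of \cite{lu2021deep}, the target partition is adaptive, so the compressed representation must jointly encode a variable-geometry tree and its coefficients, and correctness is now only required in the weaker $L_p$-norm. Showing that the sparse bit-extraction preserves the $L_p$-rate while controlling quantization error through polynomial evaluation on cubes of vastly different sizes (so that the $L_p$-error contributions aggregate correctly across scales) is the technical heart of the argument, and it is precisely here that the novel optimal encoding of sparse vectors advertised in the abstract is required.
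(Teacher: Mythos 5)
Your high-level strategy (classical piecewise-polynomial approximation followed by bit-extraction compression to turn an $N^{-s/d}$ rate into an $L^{-2s/d}$ rate with $L\sim\sqrt{N}$) matches the paper, and you correctly identify $p>q$ as the crux. But the specific decomposition you propose -- a single piecewise polynomial $p_N$ on an \emph{adaptive dyadic partition} found by a greedy local-norm procedure -- is genuinely different from what the paper does, and it carries an unresolved difficulty.

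The paper does \emph{not} use an adaptive, variable-geometry partition. It instead writes $f = \sum_{l\geq 0} f_l$ as a multiscale decomposition over \emph{uniform} $b$-adic grids, with $f_l = \Pi_k^l(f)-\Pi_k^{l-1}(f)$ the level-$l$ detail. At each level the cells are fixed and aligned, so the cell-lookup network is just a $b$-ary quantizer of depth $O(l)$ (Lemma~\ref{mapping-to-integers-lemma}). The nonlinearity needed for $p>q$ is realized entirely by \emph{thresholding the coefficients}: at level $l$ one rounds $a^\alpha_{l,\mathbf{i}}$ to the nearest multiple of a level-dependent $\delta(l)$, chosen so that at fine scales $l>l_0$ the rounded coefficient vector is sparse in $\ell^1$. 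That sparse integer vector is what Theorem~\ref{sparse-approximation-theorem} encodes in depth $O(\sqrt{M\log(N/M)})$, and the geometry of the cells never has to be encoded at all.

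Your adaptive-partition route, by contrast, requires the network, given $x$, to determine which leaf of a data-dependent binary tree contains $x$ and retrieve the corresponding polynomial -- while the tree itself is part of the compressed data. You gesture at ``arranging cell-selection and coefficient retrieval so that they are driven by the same compressed weights,'' but this is precisely what is hard: a greedy adaptive tree can be highly unbalanced (depth up to $\Theta(N)$), the leaf containing $x$ is not determined by a fixed quantization, and implementing tree traversal in fixed width $O(d)$ with depth only $O(\sqrt{N})$ is not achieved by any standard bit-extraction gadget. The paper's multilevel design is specifically engineered to sidestep this: cell lookup stays trivial and uniform, and only coefficient values get the sparse encoding. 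So your proposal is a different route that, as sketched, leaves a genuine gap at its technical heart; to make it rigorous you would either have to show a balanced near-optimal adaptive tree exists and can be encoded/decoded with the required depth and width, or convert it into the threshold-based multilevel form the paper actually uses.

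Two smaller points. First, you do not address the trifling-region issue: ReLU networks are continuous, so the $b$-adic cell quantizer is wrong on an $\epsilon$-neighborhood of the cell boundaries. The paper handles this by running the construction with $2d+2$ pairwise coprime bases and taking a median via a sorting network; an adaptive-partition construction would need an analogous fix. Second, your claim that uniform partitions plus the embedding $L_q(\Omega)\subset L_p(\Omega)$ handle $p\leq q$ is fine, but the paper treats $p\leq q$ by the same multilevel machinery with the threshold parameter set so that every level is in the dense regime, avoiding a case split at the construction stage.
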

\begin{theorem}\label{deep-network-upper-bound-theorem-besov}
    Let $\Omega = [0,1]^d$ be the unit cube in $\mathbb{R}^d$ and let $0<s<\infty$ and $1\leq r,p,q \leq \infty$. Assume that $\frac{1}{q} - \frac{1}{p} < \frac{s}{d}$, which guarantees that we have the compact embedding
    \begin{equation}
        B^s_r(L_q(\Omega)) \subset\subset L_p(\Omega).
    \end{equation}
    Then we have that
    \begin{equation}
        \inf_{f_L\in \Upsilon^{25d+31,L}(\mathbb{R}^d)} \|f - f_L\|_{L_p(\Omega)} \leq C\|f\|_{B^s_r(L_q(\Omega))}L^{-2s/d}
    \end{equation}
    for a constant $C:=C(s,r,q,p,d) < \infty$.
\end{theorem}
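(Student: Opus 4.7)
The plan is to reduce Theorem~\ref{deep-network-upper-bound-theorem-besov} to the extremal case $r=\infty$, using the standard continuous embedding $B^s_r(L_q(\Omega)) \hookrightarrow B^s_\infty(L_q(\Omega))$ valid for all $r\in[1,\infty]$ with constants depending only on $s$, $r$, and $d$. Since $B^s_\infty(L_q)$ is the largest Besov space of smoothness $s$, proving the rate there immediately yields the claim for every $r$. The argument then parallels the strategy underlying Theorem~\ref{deep-network-upper-bound-theorem}: approximate $f$ by a piecewise polynomial on a (possibly adaptive) dyadic partition of $\Omega$, and realize that piecewise polynomial as an element of $\Upsilon^{25d+31,L}(\mathbb{R}^d)$ whose parameter count is compressed via bit-extraction.

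For the approximation step, I would fix an integer $k > s$ and invoke the characterization of $B^s_\infty(L_q)$ by best local polynomial approximation, equivalently by the $k$-th order modulus of smoothness used in the definition of the Besov norm. On a uniform dyadic partition of $\Omega$ into $N$ subcubes, classical Jackson-type estimates yield a piecewise polynomial $P_N f$ of degree $<k$ with $\|f - P_N f\|_{L_q(\Omega)} \leq C N^{-s/d}|f|_{B^s_\infty(L_q)}$. When $p \leq q$, this combined with $\|\cdot\|_{L_p(\Omega)} \leq \|\cdot\|_{L_q(\Omega)}$ handles the linear regime. When $p > q$ the uniform partition is suboptimal, and I would instead select the $N$ dyadic cubes contributing the largest local error, equivalently keeping the $N$ largest coefficients of a Besov-equivalent wavelet expansion. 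The strict Sobolev embedding hypothesis $\tfrac{1}{q} - \tfrac{1}{p} < \tfrac{s}{d}$, together with the summability of the coefficient sequence provided by the Besov norm, then yields the same rate $\|f - P_N f\|_{L_p(\Omega)} \leq C N^{-s/d}$ in this nonlinear regime.

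The final and most delicate step is to realize the resulting $N$-piece piecewise polynomial inside a fixed-width ReLU network of depth $L \leq C \sqrt{N}$, so that the rate $N^{-s/d}$ becomes the claimed $L^{-2s/d}$. This is where the bit-extraction technique is essential: it packs the $N$ local polynomial coefficients — and, in the nonlinear case, the $N$ associated dyadic location indices — into $O(\sqrt{N})$ network parameters within width $25d+31$. In the uniform regime, the encoding can follow the template developed for the $p=q=\infty$ results of Yarotsky and Lu et al. In the nonlinear regime one must instead encode a \emph{sparse} vector rather than a dense one, and this is the main obstacle: constructing, and then decoding inside a narrow fixed-width ReLU network, the novel sparse-vector bit-extraction advertised in the abstract, and verifying that the resulting network reconstructs the adaptive piecewise polynomial with the precision required to preserve the $N^{-s/d}$ error bound.
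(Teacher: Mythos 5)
Your reduction to $r=\infty$ via the embedding $B^s_r(L_q)\hookrightarrow B^s_\infty(L_q)$ is sound and in fact mirrors the paper's own move: the paper's Besov proof rests on the inequality $\omega_k(f,t)_q \leq C t^s \|f\|_{B^s_r(L_q(\Omega))}$ (for $k>s$), which is precisely this embedding written at the level of moduli of smoothness; the rest of the argument is then formally identical to the Sobolev case with the Bramble--Hilbert lemma replaced by Whitney's theorem and Sobolev subadditivity replaced by the set-subadditivity of $\omega_k(\cdot,t)_q^q$ over dyadic children.

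Where your plan diverges in an important way is in the adaptive step. You propose to select the $N$ dyadic cubes carrying the largest local errors (equivalently, keep the $N$ largest wavelet coefficients), creating a tree-structured nonuniform partition, and then to ``pack the $N$ local polynomial coefficients and the $N$ associated dyadic location indices into $O(\sqrt{N})$ parameters.'' That second half hides a genuine gap. A fixed-width ReLU circuit of depth $O(\sqrt{N})$ must, given an input $x$, locate which of $N$ cubes at different scales contains $x$ before it can even begin to evaluate the right local polynomial. On a \emph{uniform} grid this localization is a one-line bit-extraction (the paper's Lemma~\ref{mapping-to-integers-lemma}); on an arbitrary adaptive dyadic tree it is a tree traversal whose depth grows with the number of scales present, and you give no construction that does this within the claimed budget. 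The paper's Theorem~\ref{sparse-approximation-theorem} does not encode nonuniform partitions or location indices of a tree --- it encodes a sparse integer vector living on a \emph{single uniform} index set $\{1,\dots,N\}$.

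The paper circumvents this by organizing the adaptivity differently. It never selects cubes. Instead it decomposes $f=\sum_l f_l$ across all dyadic levels (a redundant multiscale decomposition, not an $N$-term selection), stays on the full uniform grid at every level, and quantizes the level-$l$ coefficients down to multiples of a level-dependent $\delta(l)$. At coarse levels (below $l_0$) $\delta(l)$ is tiny and the quantized coefficient vector is dense; at fine levels ($l_0$ to $l^*$) $\delta(l)$ is comparatively large, so the quantized integer vector is sparse and its $\ell^1$-norm is controlled. Theorem~\ref{sparse-approximation-theorem} then encodes each such vector within depth proportional to the square root of its effective information content, and the levels are summed with Proposition~\ref{summing-networks}. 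Localization to the correct cube at each level remains the trivial uniform-grid version, and ``which cubes matter'' is encoded implicitly in the support structure of the sparse integer vectors rather than as explicit tree addresses. If you want your plan to go through, you should restructure it along these lines; as written, the ``select-$N$-cubes-then-encode-their-locations'' step does not come with a network realization and is the point where the argument breaks.
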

We remark that the constant in Theorem \ref{deep-network-upper-bound-theorem-besov} can be chosen uniformly in $r$. Note that the width $W = 25d + 31$ of our networks are fixed as $L\rightarrow \infty$, but scale linearly with the input dimension $d$. We remark that a linear scaling with the input dimension is necessary since if $d \geq W$, then the set of deep ReLU networks is known to not be dense in $C(\Omega)$ \cite{hanin2019universal}. The next Theorem gives a lower bound which shows that the rates in Theorems \ref{deep-network-upper-bound-theorem} and \ref{deep-network-upper-bound-theorem-besov} are sharp in terms of the number of parameters.
\begin{theorem}\label{deep-network-lower-bound-theorem}
	Let $r,p,q \geq 1$ and $s > 0$, $\Omega = [0,1]^d$ be the unit cube, and $W,L\geq 1$ be integers. Then there exists an $f$ with $\|f\|_{W^s(L_q(\Omega))} \leq 1$ and $\|f\|_{B^s_r(L_q(\Omega))} \leq 1$ such that
	\begin{equation}
		\inf_{f_{W,L}\in \Upsilon^{W,L}(\mathbb{R}^d)} \|f - f_{W,L}\|_{L_p(\Omega)} \geq C(p,d,s)\min\{W^2L^2\log(WL),W^3L^2\}^{-s/d}.
	\end{equation}
\end{theorem}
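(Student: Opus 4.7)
The strategy is to construct, for each $N$, a large family of functions in the unit ball of $W^s(L_q)\cap B^s_r(L_q)$ that is pairwise $L_p$-separated by $\sim N^{-s/d}$, and then to show that $\Upsilon^{W,L}(\mathbb{R}^d)$ cannot $L_p$-approximate all of them below this scale when $N$ exceeds the pseudo-dimension. In the classical $L_\infty$ case the required signs are recoverable pointwise at bump centers and the count reduces to the standard pseudo-dimension, but for $L_p$ with $p<\infty$ pointwise values are not controlled and a cell-averaging device is needed. Concretely, fix a smooth bump $\phi\geq 0$ supported in $(0,1)^d$ with $\phi\geq \tfrac12$ on $[\tfrac14,\tfrac34]^d$. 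For an integer $m\geq 1$ set $h=1/m$, $N=m^d$, partition $\Omega$ into cubes $Q_i$ of side $h$ with centers $x_i$, and define $\phi_i(x)=h^s\phi((x-x_i)/h)$. For $\epsilon\in\{\pm 1\}^N$ let $f_\epsilon=\sum_{i=1}^N\epsilon_i\phi_i$; a direct scaling computation shows that $\|f_\epsilon\|_{W^s(L_q(\Omega))}$ and $\|f_\epsilon\|_{B^s_r(L_q(\Omega))}$ are bounded by a fixed constant independent of $\epsilon$, so after rescaling each $f_\epsilon$ lies simultaneously in both unit balls.

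\textbf{Cell averaging.} Introduce the linear functional $T_i(g):=\int_{Q_i}g(x)\phi_i(x)\,dx$, so that $T_i(f_\epsilon)=\epsilon_i\|\phi_i\|_{L_2(Q_i)}^2\asymp\epsilon_i h^{2s+d}$. If $g\in\Upsilon^{W,L}$ satisfies $\|g-f_\epsilon\|_{L_p(\Omega)}\leq \eta h^s$, H\"older on each cell gives
\begin{equation}
|T_i(g)-T_i(f_\epsilon)|\leq \|g-f_\epsilon\|_{L_p(Q_i)}\,\|\phi_i\|_{L_{p'}(Q_i)} \lesssim h^{s+d/p'}\|g-f_\epsilon\|_{L_p(Q_i)},
\end{equation}
and since $\sum_i\|g-f_\epsilon\|_{L_p(Q_i)}^p\leq \eta^p h^{sp}$, Markov's inequality forces the cellwise error to be at most $c'h^{s+d/p}$ for all but a $c\eta^p$-fraction of indices; on such good cells one concludes $\mathrm{sgn}(T_i(g))=\epsilon_i$ as soon as $\eta$ is a small enough universal constant. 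A Gilbert--Varshamov greedy argument then yields $\mathcal{E}\subset\{\pm 1\}^N$ with $|\mathcal{E}|\geq 2^{cN}$ and pairwise Hamming separation $\geq N/3$, so if for every $\epsilon\in\mathcal{E}$ there were an approximant $g_\epsilon\in\Upsilon^{W,L}$ with $\|g_\epsilon-f_\epsilon\|_{L_p(\Omega)}\leq \eta h^s$, the induced sign vectors $(\mathrm{sgn}(T_i(g_\epsilon)))_{i=1}^N$ would all be distinct.

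\textbf{Sign-pattern bound and conclusion.} For each fixed $\phi_i$, $T_i(f_L)$ is a piecewise-algebraic function of the network parameters $\theta$, with pieces indexed by the activation patterns of $f_L$: within a single activation pattern $f_L$ is affine in $x$ with coefficients polynomial of degree $O(L)$ in $\theta$, and integration against a fixed $\phi_i$ over polytopal subregions of $Q_i$ preserves the bounded algebraic complexity. Combining Warren's theorem on sign patterns of polynomial systems with standard counts of activation patterns of deep ReLU networks (as in Goldberg--Jerrum and Bartlett--Harvey--Liaw--Mehrabian), the total number of realizable sign vectors $(\mathrm{sgn}(T_i(f_L)))_{i=1}^N$ over $f_L\in\Upsilon^{W,L}$ is at most $(eN/V)^V$ for $V\leq C\min\{W^2L^2\log(WL),\,W^3L^2\}$. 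Matching this against $|\mathcal{E}|\geq 2^{cN}$ forces $N\leq C'V$, so taking $N\asymp V$ yields a contradiction and the desired lower bound $\eta h^s\asymp V^{-s/d}\asymp \min\{W^2L^2\log(WL),W^3L^2\}^{-s/d}$ for at least one $f_\epsilon\in\mathcal{E}$.

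\textbf{Main obstacle.} The crux is the combination of the cell-averaging step with the sign-counting step: in $L_\infty$ one recovers the signs from pointwise evaluations and the count is exactly the pseudo-dimension of $\Upsilon^{W,L}$, but for $L_p$ one must replace pointwise values by the local integrals $T_i$ and still control the number of realizable sign patterns of these integrals as functions of $\theta$. Verifying that the piecewise-algebraic structure of $f_L$ in $\theta$ survives integration against a fixed bump $\phi_i$, and that the implicit constants in the cell-averaging step are uniform in $\epsilon$, constitute the principal technical burden.
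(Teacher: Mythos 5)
Your cell-averaging reduction is clean and correct as far as it goes: the H\"older bound on $|T_i(g)-T_i(f_\epsilon)|$, the Markov argument showing $\operatorname{sgn}(T_i(g))=\epsilon_i$ on all but a $(\eta/c')^p$-fraction of cells, and the Gilbert--Varshamov separation are all fine, and the scaling check that $f_\epsilon$ lies in the unit balls works. The gap is in the sign-pattern counting step. The claim that $T_i(f_L)=\int_{Q_i} f_L(\cdot\,;\theta)\,\phi_i$ is piecewise-algebraic in $\theta$, with complexity controlled so that Warren's theorem applies, is not justified and is generally false. Already for a single ReLU unit in one dimension, $\int_{Q_i}\sigma(ax+b)\phi_i(x)\,dx$ evaluates the antiderivatives of $\phi_i$ and $x\phi_i$ at the $\theta$-dependent breakpoint $-b/a$; for a smooth bump this is not algebraic in $(a,b)$, and even if $\phi_i$ were chosen piecewise polynomial it yields a piecewise \emph{rational} function. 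For depth-$L$ networks, the regions of $Q_i$ over which $f_L$ is affine are semi-algebraic sets whose boundary polynomials have $\theta$-degree growing with $L$, and integrating a polynomial over a $\theta$-dependent semi-algebraic region does not stay within a polynomial family of controllable degree. So the $(eN/V)^V$ bound, which is a Sauer--Shelah/Warren bound for sign patterns of point evaluations, does not transfer to sign patterns of the functionals $T_i$; this is exactly the ``principal technical burden'' you flagged, and it is not resolved.

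The paper sidesteps this entirely in Theorem \ref{main-theorem} by using translation invariance of $\Upsilon^{W,L}$. It works only with sign patterns of \emph{point} evaluations on shifted uniform grids $X_\lambda$, observes (via replacing $g(\cdot)$ by $g(\cdot+\lambda-\mu)$) that the realizable sign-pattern set $\operatorname{sgn}(K|_{X_\lambda})$ is the same for every shift $\lambda$, bounds its size by Sauer--Shelah using the VC-dimension of $\Upsilon^{W,L}$ at points, picks a single target sign pattern $\alpha$ that is Hamming-far from every realizable pattern, sets $f=\sum_z\alpha(z)\phi(k\cdot-z)$ with $\phi\geq 0$, and then integrates the resulting pointwise mismatch $|f-g|\geq\phi(k\lambda)$ over the shift $\lambda\in[0,1/k)^d$ to produce the $L_p$ lower bound. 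In this route the only object whose complexity is counted is the sign pattern of a ReLU network at a finite point set, which is precisely what the Bartlett--Harvey--Liaw--Mehrabian and Goldberg--Jerrum bounds control; no $\theta$-dependent integral ever needs to be enumerated.
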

We remark that if the embedding condition \eqref{strict-sobolev-embedding-condition-intro} strictly fails, then a simply scaling argument shows that $W^s(L_q(\Omega)), B^s_r(L_q(\Omega)) \nsubseteq L_p(\Omega)$ and we cannot get any approximation rate. On the boundary where the embedding condition \eqref{strict-sobolev-embedding-condition-intro} holds with equality it is not a priori clear whether one has an embedding or not (this depends on the precise values of $s,p,q$ and $r$). Consequently this boundary case is much more subtle and we leave this for future work.

The key technical difficulty in proving Theorem \ref{deep-network-upper-bound-theorem} is to deal with the case when $p > q$, i.e. when the target function's (weak) derivatives are in a weaker norm than the error. Classical methods of approximation using piecewise polynomials or wavelets can attain an approximation rate of $CN^{-s/d}$ with $N$ wavelet coefficients or piecewise polynomials with $N$ pieces. When $p \leq q$ this rate can be achieved by linear methods, while for $p > q$ nonlinear, i.e. adaptive, methods are required. For the precise details of this theory, see for instance \cite{devore1993constructive,lorentz1996constructive,devore1998nonlinear}.

Thus, in the linear regime where $p\leq q$ we can use piecewise polynomials on a fixed \textit{uniform} grid to approximate $f$, while in the non-linear regime we need to use piecewise polynomials on an adaptive (i.e. depending upon $f$) \textit{non-uniform} grid. This greatly complicates the bit-extraction technique used to obtain super-convergence, since the methods in \cite{yarotsky2018optimal,shen2022optimal,shijun2021deep} are only applicable to regular grids. The tool that we develop to overcome this difficulty is a novel bit-extraction technique, presented in Theorem \ref{sparse-approximation-theorem}, which optimally encodes \textit{sparse} vectors using deep ReLU networks. Specifically, suppose that $\textbf{x}\in \mathbb{Z}^N$ is an $N$-dimensional integer vector with $\ell^1$-norm bounded by
\begin{equation}
    \|\textbf{x}\|_{\ell^1} \leq M.
\end{equation}
In Theorem \ref{sparse-approximation-theorem} we give (depending upon $N$ and $M$) a deep ReLU neural network construction which optimally encodes $\textbf{x}$.

We remark, however, that super-convergence comes at the cost of parameters which are non-encodable, i.e. cannot be encoded using a fixed number of bits, and this makes the numerical realization of this approximation rate inherently unstable. In order to better understand this, we recall the notion of metric entropy first introduced by Kolmogorov. The metric entropy numbers $\epsilon_N(A)$ of a set $A\subset X$ in a Banach space $X$ are given by (see for instance \cite{lorentz1996constructive}, Chapter 15)
\begin{equation}
 \epsilon_N(A)_H = \inf\{\epsilon > 0:~\text{$A$ is covered by $2^N$ balls of radius $\epsilon$}\}.
\end{equation}
An encodable approximation method consists of two maps, an encoding map $E:A\rightarrow \{0,1\}^N$ mapping the class $A$ to a bit-string of length $N$, and a decoding map $D:\{0,1\}^N\rightarrow X$ which maps each bit-string to an element of $X$. This reflects the fact that any method which is implemented on a classical computer must ultimately encode all parameters using some number of bits. The metric entropy numbers give the minimal reconstruction error of the best possible encoding scheme.

Let $U^s(L_q(\Omega)) := \{f:~\|f\|_{W^s(L_q(\Omega))} \leq 1\}$ denote the unit ball of the Sobolev space $W^s(L_q(\Omega))$. The metric entropy of this function class is given by
\begin{equation}
	\epsilon_N(U^s(L_q(\Omega)))_{L_p(\Omega)} \eqsim N^{-s/d}
\end{equation}
whenever the Sobolev embedding condition \eqref{strict-sobolev-embedding-condition-intro} is strictly satisfied. This is known as the Birman-Solomyak Theorem \cite{birman1967piecewise}. The same asymptotics for the metric entropy also hold for the unit balls in the Besov spaces $B^s_r(L_q(\Omega))$ if the compact embedding condition \eqref{strict-sobolev-embedding-condition} is satisfied. So the approximation rates in Theorems \ref{deep-network-upper-bound-theorem} and \ref{deep-network-upper-bound-theorem-besov} are significantly smaller than the metric entropy of the Sobolev and Besov classes. This manifests itself in the fact that in the construction of the upper bounds in Theorems \ref{deep-network-upper-bound-theorem} and \ref{deep-network-upper-bound-theorem-besov} the parameters of the neural network cannot be specified using a fixed number of bits, but rather need to be specified to higher and higher accuracy as the network grows \cite{yarotsky2020phase}, which is a direct consequence of the bit-extraction technique.

Concerning the lower bounds, the key difficulty in proving Theorem \ref{deep-network-lower-bound-theorem} is to extend the VC-dimension arguments used to obtain lower bounds when the error is measured in $L_\infty$ to the case when the error is measured in the weaker norm $L_p$ for $p < \infty$. We do this by proving Theorem \ref{main-theorem}, which gives a general lower bound for $L_p$-approximation of Sobolev spaces by classes with bounded VC dimension. We have recently learned of a different approach to obtaining $L_p$ lower bounds using VC-dimension \cite{achour2022general}, which is more generally applicable but introduces additional logarithmic factors in the lower bound.

We remark that there are other results in the literature which obtain approximation rates for deep ReLU networks on Sobolev spaces, but which do not achieve superconvergence, i.e. for which the approximation rate is only $CN^{-s/d}$ (up to logarithmic factors), where $N$ is the number of parameters \cite{yarotsky2017error,guhring2020error}. In addition, the approximation of other novel function classes (other than Sobolev spaces, which suffer the curse of dimensionality) by neural networks has been extensively studied recently, see for instance \cite{daubechies2022nonlinear,petersen2018optimal,daubechies2022neural,siegel2020approximation,siegel2022high,siegel2022sharp,bach2017breaking,klusowski2018approximation}.

Finally, we remark that although we focus on the ReLU activation function due to its popularity and to simplify the presentation, our results also apply to more general activation functions as well. Specifically, the lower bounds in Theorem \ref{deep-network-lower-bound-theorem} based upon VC-dimension hold for any piecewise polynomial activation function. The upper bounds in Theorems \ref{deep-network-upper-bound-theorem} and \ref{deep-network-upper-bound-theorem-besov} hold as long as we can approximate the ReLU to arbitrary accuracy on compact subsets (i.e. finite intervals) using a network with a fixed size. Using finite differences this can be done for the ReLU$^k$ activation functions defined by
\begin{equation}
        \sigma_k(x) = \begin{cases}
    0 & x < 0\\
    x^k & x \geq 0
    \end{cases}
\end{equation}
when $k \geq 1$ for instance. In fact, a similar construction using finite differences can approximate the ReLU as long as the activation function is a continuous piecewise polynomial which is not a polynomial.

The rest of the paper is organized as follows. First, in Section \ref{basic-network-constructions-section} we describe a variety of deep ReLU neural network constructions which will be used to prove Theorem \ref{deep-network-upper-bound-theorem}. Many of these constructions are trivial or well-known, but we collect them for use in the following Sections. Then, in Section \ref{sparse-vector-representation-section} we prove Theorem \ref{sparse-approximation-theorem} which gives an optimal representation of sparse vectors using deep ReLU networks and will be key to proving superconvergence in the non-linear regime $p > q$. In Section \ref{sobolev-approximation-deep-networks-section} we give the proof of the upper bounds in Theorems \ref{deep-network-upper-bound-theorem} and \ref{deep-network-upper-bound-theorem-besov}. Finally, in Section \ref{lower-bounds-section} we prove the lower bound Theorem \ref{deep-network-lower-bound-theorem} and also prove the optimality of Theorem \ref{sparse-approximation-theorem}. We remark that throughout the paper, unless otherwise specified, $C$ will represent a constant which may change from line to line, as is standard in analysis. The constant $C$ may depend upon some parameters and this dependence will be made clear in the presentation.

\section{Basic Neural Network Constructions}\label{basic-network-constructions-section}
In this section, we collect some important deep ReLU neural network constructions which will be fundamental in our construction of approximations to Sobolev and Besov functions. Many of these constructions are well-known and will be used repeatedly to construct more complex networks later on, so we collect them here for the reader's convenience.

We being by making some fundamental observations and constructing some basic networks. Much of these are trivial consequences of the definitions, but we collect them here for future reference. We begin by noting that by definition we can compose two networks by summing their depths.

\begin{lemma}[Composing Networks]\label{composition-lemma}
    Suppose $L_1,L_2\geq 1$ and that $f\in \Upsilon^{W,L_1}(\mathbb{R}^d,\mathbb{R}^k)$ and $g\in \Upsilon^{W,L_2}(\mathbb{R}^k,\mathbb{R}^l)$. Then the composition satisfies 
    \begin{equation}
        g(f(x))\in \Upsilon^{W,L_1 + L_2}(\mathbb{R}^d,\mathbb{R}^l).
    \end{equation}
    Further, if $f$ is affine, i.e. $f\in \Upsilon^{0}(\mathbb{R}^d,\mathbb{R}^k)$, then
    \begin{equation}
        g(f(x))\in \Upsilon^{W,L_2}(\mathbb{R}^d,\mathbb{R}^l).
    \end{equation}
    Finally, if instead $g$ is affine, i.e. $g\in \Upsilon^{0}(\mathbb{R}^k,\mathbb{R}^l)$ then
    \begin{equation}
        g(f(x))\in \Upsilon^{W,L_1}(\mathbb{R}^d,\mathbb{R}^l)
    \end{equation}
\end{lemma}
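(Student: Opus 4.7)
The plan is to unpack the definition of $\Upsilon^{W,L}$ and simply concatenate the two networks, noting that the composition of two affine maps is again affine and can therefore be absorbed into a single layer. Recall that an element of $\Upsilon^{W,L}(\mathbb{R}^d,\mathbb{R}^k)$ is a composition of $L+1$ affine maps interleaved with $L$ applications of $\sigma$, so the ``depth'' counts precisely the number of ReLU layers.

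First I would write
\begin{equation}
    f = A_{\textbf{W}^f_{L_1},b^f_{L_1}} \circ \sigma \circ A_{\textbf{W}^f_{L_1-1},b^f_{L_1-1}} \circ \sigma \circ \cdots \circ \sigma \circ A_{\textbf{W}^f_0,b^f_0},
\end{equation}
and similarly $g = A_{\textbf{W}^g_{L_2},b^g_{L_2}} \circ \sigma \circ \cdots \circ \sigma \circ A_{\textbf{W}^g_0,b^g_0}$. Forming $g \circ f$ and inspecting the ``seam'' in the middle, the two adjacent affine maps $A_{\textbf{W}^g_0,b^g_0} \circ A_{\textbf{W}^f_{L_1},b^f_{L_1}}$ can be combined into the single affine map $A_{\textbf{W}',b'}$ with $\textbf{W}' = \textbf{W}^g_0 \textbf{W}^f_{L_1}$ and $b' = \textbf{W}^g_0 b^f_{L_1} + b^g_0$. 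The key dimension check is that $\textbf{W}^f_{L_1}\in\mathbb{R}^{k\times W}$ and $\textbf{W}^g_0\in\mathbb{R}^{W\times k}$, so $\textbf{W}'\in\mathbb{R}^{W\times W}$, matching the required size of an intermediate weight matrix at width $W$. All other weight matrices retain their original shapes and are therefore admissible. Counting: the concatenation has $L_1+L_2$ ReLU layers (unchanged by the merger) and $L_1+L_2+1$ affine maps, which exactly fits the definition of $\Upsilon^{W,L_1+L_2}(\mathbb{R}^d,\mathbb{R}^l)$. This proves the first claim.

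For the second claim, $f\in\Upsilon^0(\mathbb{R}^d,\mathbb{R}^k)$ means $f=A_{\textbf{W}^f_0,b^f_0}$ is a single affine map from $\mathbb{R}^d$ to $\mathbb{R}^k$. Then $A_{\textbf{W}^g_0,b^g_0}\circ f$ is an affine map $\mathbb{R}^d \to \mathbb{R}^W$ with weight matrix of size $W\times d$, which is the required shape of a first-layer weight. Substituting this for $A_{\textbf{W}^g_0,b^g_0}$ in the expression for $g$ produces a valid element of $\Upsilon^{W,L_2}(\mathbb{R}^d,\mathbb{R}^l)$, since the ReLU count and all intermediate widths are unchanged. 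The third claim is symmetric: when $g\in\Upsilon^0(\mathbb{R}^k,\mathbb{R}^l)$, absorbing $g$ into the outermost affine map of $f$ yields a new output layer of size $l\times W$, preserving the admissibility conditions and the ReLU count $L_1$.

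There is no real obstacle here — the statement is essentially a bookkeeping lemma — the only thing to be careful about is the dimension accounting at the merged layer, which, as checked above, gives exactly the $W\times W$ (resp.\ $W\times d$, $l\times W$) format required by the definition of $\Upsilon^{W,L}$. Since the merged map combines two affine maps into one without altering any $\sigma$, the depth count drops by exactly the expected amount in each of the three cases.
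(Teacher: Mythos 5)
Your proof is correct and supplies exactly the definition-unpacking bookkeeping that the paper treats as immediate: the paper states this lemma without proof, noting only that ``by definition we can compose two networks by summing their depths.'' Merging the two adjacent affine maps at the seam and checking that the resulting weight matrix has the required $W\times W$ (resp.\ $W\times d$, $l\times W$) shape is precisely the argument intended.
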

We remark that combining this with the simple fact that we can always increase the width of a network, we can apply Lemma \ref{composition-lemma} to networks with different widths and the width of the resulting network will be the maximum of the two widths. We will use this extension without comment in the following.

Next, we give a simple construction allowing us to apply two networks networks in parallel.
\begin{lemma}[Concatenating Networks]\label{concatenating-lemma}
    Let $d = d_1 + d_2$ and $k = k_1 + k_2$ with $d_i,k_i \geq 1$. Suppose that $f_1\in \Upsilon^{W_1,L}(\mathbb{R}^{d_1},\mathbb{R}^{k_1})$ and $f_2\in \Upsilon^{W_2,L}(\mathbb{R}^{d_2},\mathbb{R}^{k_2})$. We view $\mathbb{R}^d = \mathbb{R}^{d_1}\oplus \mathbb{R}^{d_2}$ and $\mathbb{R}^k = \mathbb{R}^{k_1}\oplus \mathbb{R}^{k_2}$. Then the function $f = f_1\oplus f_2:\mathbb{R}^d\rightarrow \mathbb{R}^k$ defined by
    \begin{equation}
        (f_1\oplus f_2)(x_1\oplus x_2) = f_1(x_1)\oplus f_2(x_2)
    \end{equation}
    satisfies $f_1\oplus f_2\in \Upsilon^{W_1 + W_2,L}(\mathbb{R}^{d},\mathbb{R}^{k})$.
\end{lemma}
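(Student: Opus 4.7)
The proof is a direct block-diagonal construction, and the main observation is that since ReLU acts componentwise, two networks stacked in parallel never "see" each other's coordinates.

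The plan is to write each $f_i$ in the canonical form
\begin{equation}
f_i = A_{\mathbf{W}_L^{(i)}, b_L^{(i)}} \circ \sigma \circ A_{\mathbf{W}_{L-1}^{(i)}, b_{L-1}^{(i)}} \circ \sigma \circ \cdots \circ \sigma \circ A_{\mathbf{W}_0^{(i)}, b_0^{(i)}}, \quad i = 1, 2,
\end{equation}
where $\mathbf{W}_0^{(i)} \in \mathbb{R}^{W_i \times d_i}$, $\mathbf{W}_L^{(i)} \in \mathbb{R}^{k_i \times W_i}$, and the intermediate $\mathbf{W}_j^{(i)} \in \mathbb{R}^{W_i \times W_i}$. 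I would then define, for each layer index $j$, the block-diagonal weight matrix
\begin{equation}
\mathbf{W}_j := \begin{pmatrix} \mathbf{W}_j^{(1)} & 0 \\ 0 & \mathbf{W}_j^{(2)} \end{pmatrix}, \qquad b_j := \begin{pmatrix} b_j^{(1)} \\ b_j^{(2)} \end{pmatrix},
\end{equation}
with the obvious dimensional interpretation: $\mathbf{W}_0$ is $(W_1+W_2)\times(d_1+d_2)$, $\mathbf{W}_L$ is $(k_1+k_2)\times(W_1+W_2)$, and the intermediate $\mathbf{W}_j$ are $(W_1+W_2)\times(W_1+W_2)$. The concatenated network is then
\begin{equation}
F := A_{\mathbf{W}_L, b_L} \circ \sigma \circ A_{\mathbf{W}_{L-1}, b_{L-1}} \circ \sigma \circ \cdots \circ \sigma \circ A_{\mathbf{W}_0, b_0},
\end{equation}
which by construction lies in $\Upsilon^{W_1 + W_2, L}(\mathbb{R}^d, \mathbb{R}^k)$.

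It remains to verify that $F(x_1 \oplus x_2) = f_1(x_1) \oplus f_2(x_2)$. This follows by a straightforward induction on the layer: if the pre-activation at layer $j$ splits as $z_j^{(1)} \oplus z_j^{(2)}$ with $z_j^{(i)}$ equal to the corresponding pre-activation in $f_i$, then because $\sigma$ is applied componentwise we have $\sigma(z_j^{(1)} \oplus z_j^{(2)}) = \sigma(z_j^{(1)}) \oplus \sigma(z_j^{(2)})$, and the block-diagonal structure of $\mathbf{W}_{j+1}$ together with the concatenated bias $b_{j+1}$ preserves the splitting at the next layer. The base case $j=0$ is just the definition of $\mathbf{W}_0$ and $b_0$.

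There is really no substantive obstacle here; the only points to be careful about are (i) matching the depths, which is given by hypothesis (and is essential — otherwise one would first have to pad the shallower network, which in turn relies on an identity-representation lemma not yet established at this point), and (ii) writing the block decomposition at the first and last layers correctly, where the input and output dimensions differ from the hidden width. Once these are handled the claim is immediate.
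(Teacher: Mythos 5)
Your proof is correct and is essentially the same as the paper's: the paper also constructs the concatenated network by taking block-diagonal (direct sum) weight matrices $\mathbf{W}_j = \mathbf{W}_j^{(1)} \oplus \mathbf{W}_j^{(2)}$ and stacked biases $b_j = b_j^{(1)} \oplus b_j^{(2)}$. You simply spell out the inductive verification that the paper leaves implicit.
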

\begin{proof}
    This follows by setting the weight matrices $\mathbf{W}_i = \mathbf{W}_i^1\oplus \mathbf{W}_i^2$ and $b_i = b^1_i\oplus b^2_i$, where $\mathbf{W}^1_i,b^1_i$ and $\mathbf{W}^2_i,b^2_i$ represent the parameters defining $f_1$ and $f_2$ respectively. Recally that the direct sum of matrices is simply given by
    \begin{equation}
        A\oplus B = \begin{pmatrix}
            A & \mathbf{0}\\
            \mathbf{0} & B
        \end{pmatrix}.
    \end{equation}
\end{proof}
Note that this result can be applied recursively to concatenate multiple networks. Combining this with the trivial fact that the identity map is in $\Upsilon^{2,1}(\mathbb{R},\mathbb{R})$ we see that a network can be applied to only a few components of its input.

\begin{lemma}\label{select-coordinates-lemma}
    Let $m \geq 0$ and suppose that $f\in \Upsilon^{W,L}(\mathbb{R}^{d},\mathbb{R}^k)$. Then the function $f\oplus I$ on $\mathbb{R}^{d + m}$ defined by
    \begin{equation}
        (f\oplus I)(x_1 \oplus x_2) = f(x_1)\oplus x_2
    \end{equation}
    satisfies $f\oplus I\in \Upsilon^{W+2m,L}(\mathbb{R}^{d + m}, \mathbb{R}^{k + m})$.
\end{lemma}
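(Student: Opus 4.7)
The plan is to build a width-$2m$, depth-$L$ ReLU network that implements the identity map on $\mathbb{R}^m$, and then concatenate it with $f$ via Lemma \ref{concatenating-lemma} to obtain $f\oplus I$ in $\Upsilon^{W+2m,L}(\mathbb{R}^{d+m},\mathbb{R}^{k+m})$.

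For the identity construction, I will use the standard ReLU identity $t = \sigma(t) - \sigma(-t)$. Applied coordinatewise, this gives a depth-$1$, width-$2m$ realization of the identity on $\mathbb{R}^m$: the input affine map sends $x \mapsto (x,-x) \in \mathbb{R}^{2m}$, the hidden layer applies $\sigma$, and the output affine map sends $(y_1,y_2) \mapsto y_1 - y_2$. (This can also be obtained via $m$-fold application of Lemma \ref{concatenating-lemma} to the width-$2$, depth-$1$ identity on $\mathbb{R}$ noted just above the lemma statement.) To lift the depth from $1$ to an arbitrary $L \geq 1$, I will compose this network with itself $L$ times using Lemma \ref{composition-lemma}; since composition of depth-$L_1$ and depth-$L_2$ networks produces a depth-$(L_1+L_2)$ network, this yields the identity in $\Upsilon^{2m,L}(\mathbb{R}^m,\mathbb{R}^m)$ while keeping the width at $2m$. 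Concretely, the intermediate $2m\to 2m$ affine maps act as $(y_1,y_2) \mapsto (y_1-y_2,\,y_2-y_1)$, so the subsequent $\sigma$ refreshes each pair $(y_1,y_2)$ (encoding the value $t = y_1-y_2$) back to $(\sigma(t),\sigma(-t))$, ready to be decoded again.

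With the identity in $\Upsilon^{2m,L}(\mathbb{R}^m,\mathbb{R}^m)$ in hand, a single application of Lemma \ref{concatenating-lemma} with $f_1 = f$ and $f_2 = I$ produces $f\oplus I \in \Upsilon^{W+2m,L}(\mathbb{R}^{d+m},\mathbb{R}^{k+m})$, which is the desired conclusion. The edge cases $m=0$ and $L=0$ are immediate: the former reduces the claim to $f$ itself, while in the latter $f$ is affine and so $f\oplus I$ is a direct sum of two affine maps, hence affine. I do not anticipate a real obstacle; the only substantive content is the observation that an identity channel can be propagated through arbitrarily many ReLU layers with only $2m$ extra neurons per layer, which is exactly what the representation $t = \sigma(t)-\sigma(-t)$ provides.
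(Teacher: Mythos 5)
Your proof is correct and takes essentially the approach the paper intends: the paper's only comment on this lemma is that it follows from Lemma \ref{concatenating-lemma} together with the identity being in $\Upsilon^{2,1}(\mathbb{R},\mathbb{R})$, and your argument spells out precisely the missing step, namely extending the width-$2m$, depth-$1$ identity to depth $L$ by repeated composition (via $t = \sigma(t)-\sigma(-t)$) so that Lemma \ref{concatenating-lemma} can be applied with matching depths.
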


Using these basic lemmas we obtain the well-known construction of a deep network which represents the sum of a collection of smaller networks.
\begin{proposition}[Summing Networks]\label{summing-networks}
	Let $f_i\in \Upsilon^{W,L_i}(\mathbb{R}^d,\mathbb{R}^k)$ for $i=i,...,n$. Then we have
	\begin{equation}
		\sum_{i=1}^n f_i\in \Upsilon^{W+2d+2k,L}(\mathbb{R}^d,\mathbb{R}^k),
	\end{equation}
	where $L = \sum_{i=1}^n L_i$.
\end{proposition}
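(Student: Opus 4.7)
The plan is to chain the $n$ subnetworks sequentially, carrying both the input $x$ and the running partial sum through the composition. Concretely, I will construct for each $i$ a block network $g_i \in \Upsilon^{W+2d+2k, L_i}(\mathbb{R}^{d+k}, \mathbb{R}^{d+k})$ realizing the map
\begin{equation}
(x, s) \mapsto (x,\; s + f_i(x)),
\end{equation}
and then compose the blocks as $g_n \circ g_{n-1} \circ \cdots \circ g_1$ using Lemma \ref{composition-lemma}. Composition accumulates the depths to $L = \sum_i L_i$ while keeping the width fixed at $W + 2d + 2k$, producing a network that sends $(x, 0)$ to $(x, \sum_{i=1}^n f_i(x))$. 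Pre-composing this with the affine embedding $x \mapsto (x, 0)$ and post-composing with the affine projection $(x, s) \mapsto s$ then yields the desired element of $\Upsilon^{W + 2d + 2k, L}(\mathbb{R}^d, \mathbb{R}^k)$; by the affine cases of Lemma \ref{composition-lemma}, these outer wrapping maps are absorbed into the first and last affine layers of the composition and cost no additional depth.

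To build $g_i$, I apply Lemma \ref{select-coordinates-lemma} to $f_i \in \Upsilon^{W, L_i}(\mathbb{R}^d, \mathbb{R}^k)$ with $m = d + k$. This directly yields $f_i \oplus I \in \Upsilon^{W + 2d + 2k, L_i}(\mathbb{R}^{2d+k}, \mathbb{R}^{d + 2k})$ sending $(x_1, x_2, s) \mapsto (f_i(x_1), x_2, s)$. I then pre-compose with the duplication affine map $(x, s) \mapsto (x, x, s)$ (which feeds one copy of $x$ to $f_i$ and preserves a second copy for future blocks) and post-compose with the summation affine map $(y, x, s) \mapsto (x, s + y)$. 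Invoking the two affine cases of Lemma \ref{composition-lemma} again, these wrappers may be folded into the first and last affine layers of $f_i \oplus I$ without altering depth or width, giving the block $g_i$ with the claimed parameters.

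The argument is essentially bookkeeping with the earlier lemmas, so there is no real obstacle; the only step requiring care is the width accounting. The $+2d$ overhead pays to propagate a second copy of $x$ through each block (so that subsequent blocks can still feed it into $f_{i+1}, \ldots, f_n$), while the $+2k$ pays to propagate the running sum, both through the identity-in-parallel mechanism of Lemma \ref{select-coordinates-lemma}. It is worth noting that the naive alternative of applying the $f_i$ in parallel via Lemma \ref{concatenating-lemma} and summing at the output would produce a width of $nW$ rather than $W + 2d + 2k$, which is why the sequential accumulator construction is the correct approach if we want a width bound independent of $n$.
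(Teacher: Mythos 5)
Your proof is correct and mirrors the paper's construction: the paper proves by induction on $j$ that $(x,0)\mapsto(x,\sum_{i\le j}f_i(x))$ has width $W+2d+2k$, and your block $g_i$ is exactly the paper's inductive step (duplicate $x$, apply $f_i\oplus I$ via Lemma \ref{select-coordinates-lemma} with $m=d+k$, then collapse with an affine sum), so iterating your $g_i$'s reproduces the same network. The only difference is cosmetic — you package the step as a reusable block and chain, while the paper unwinds it as a forward induction.
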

For completeness we give a detailed proof in Appendix \ref{elementary-appendix}. An important application of this is the following well-known result showing how piecewise linear continuous functions can be represented using deep networks.
\begin{proposition}\label{piecewise-linear-function-proposition}
    Suppose that $f:\mathbb{R}\rightarrow \mathbb{R}$ is a continuous piecewise linear function with $k$ pieces. Then $f\in \Upsilon^{5,k-1}(\mathbb{R})$.
\end{proposition}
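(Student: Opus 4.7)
The plan is to use the standard decomposition of a continuous piecewise linear function into a sum of shifted ReLUs. Let $t_1 < t_2 < \cdots < t_{k-1}$ denote the $k-1$ breakpoints of $f$, with slopes $m_0, m_1, \ldots, m_{k-1}$ on the consecutive pieces. Setting $c_i := m_i - m_{i-1}$ for $i=1,\ldots,k-1$, one checks directly (using continuity of $f$) that
\begin{equation}
    f(x) = a_0 + b_0 x + \sum_{i=1}^{k-1} c_i\, \sigma(x - t_i),
\end{equation}
where $b_0 = m_0$ and $a_0 = f(0) - m_0\cdot 0 = f(0)$ (with adjustments if $0$ falls past a breakpoint, but the expression is valid by straightforward piecewise verification).

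Next, I would observe that each individual summand $c_i\,\sigma(x-t_i)$ is in $\Upsilon^{1,1}(\mathbb{R})$: the inner affine map is $x \mapsto x - t_i$, followed by $\sigma$, followed by multiplication by $c_i$. I would then apply Proposition \ref{summing-networks} to the $k-1$ networks $f_i(x) := c_i\,\sigma(x-t_i) \in \Upsilon^{1,L_i}(\mathbb{R})$ with $L_i = 1$. Plugging into the stated width formula $W + 2d + 2k$ with $W=1$ and $d=k=1$ (input and output dimension both $1$) gives width $1 + 2 + 2 = 5$, and the depth is $\sum_{i=1}^{k-1} L_i = k-1$. Thus $\sum_{i=1}^{k-1} c_i\,\sigma(x-t_i) \in \Upsilon^{5,k-1}(\mathbb{R})$.

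Finally, I would handle the affine part $a_0 + b_0 x$ by folding it into the output affine map of the summation network. The summing construction underlying Proposition \ref{summing-networks} carries the input $x$ through the network (this is precisely the role of the $2d$ extra neurons), so $x$ is available at the last hidden layer, and adding $b_0 x + a_0$ to the running sum costs nothing in width or depth since it modifies only the final affine map $A_{\textbf{W}_{k-1},b_{k-1}}$. This shows $f \in \Upsilon^{5,k-1}(\mathbb{R})$, as required. There is no real obstacle here: the only thing to be careful about is matching the bookkeeping of the summing lemma so that the width is exactly $5$, which works precisely because we start from the minimal building block of width $1$ and depth $1$ in a single input/output dimension.
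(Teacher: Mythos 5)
Your proof is correct and takes essentially the same route as the paper: write $f$ as an affine function plus a sum of shifted ReLUs, then invoke Proposition~\ref{summing-networks}. The one small variation is that you handle the affine part $a_0 + b_0 x$ by hand-modifying the final affine map of the summing construction, whereas the paper simply includes the affine term as one of the summands, a depth-$0$ network in $\Upsilon^{0}(\mathbb{R})$, when invoking Proposition~\ref{summing-networks}; this accomplishes the same bookkeeping without having to open up that proposition's proof.
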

For the readers convenience, we give the proof in Appendix \ref{elementary-appendix}.

Next we describe how to approximate products using deep ReLU networks. This will be necessary in the following to approximate piecewise polynomial functions. The method for doing this is based upon a construction of Telgarsky \cite{telgarsky2016benefits} and was first applied to approximating smooth functions using neural networks by Yarotsky \cite{yarotsky2017error}. This construction has since become an important tool in the analysis of deep ReLU networks and has been used by many different authors \cite{devore2021neural,lu2021deep,petersen2018optimal}. For the readers convenience, we reproduce a complete description of the construction in Appendix \ref{product-network-appendix}.
\begin{proposition}[Product Network, Proposition 3 in \cite{yarotsky2017error}]\label{produt-network-proposition}
    Let $k \geq 1$. Then there exists a network $f_k\in \Upsilon^{13,6k+3}(\mathbb{R}^2)$ such that for all $x,y\in [-1,1]$ we have
    \begin{equation}
        |f_k(x,y) - xy| \leq 6\cdot 4^{-k}.
    \end{equation}
\end{proposition}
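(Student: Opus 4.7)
The plan is to follow the classical Telgarsky--Yarotsky construction: build a deep ReLU network that approximates $x \mapsto x^2$ by iterating the tent map, then obtain the product via the polarization identity $xy = \tfrac{1}{2}\bigl((x+y)^2 - x^2 - y^2\bigr)$.

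First, let $g(x) = 2\sigma(x) - 4\sigma(x - \tfrac{1}{2})$, which equals the tent map on $[0,1]$ and is a single-layer ReLU net of very small width. Writing $g^{(i)}$ for the $i$-fold composition, a direct calculation shows that the piecewise-linear interpolant of $x^2$ at the $2^k+1$ equispaced nodes of $[0,1]$ equals
\[
    \phi_k(x) = x - \sum_{i=1}^{k} 2^{-2i}\, g^{(i)}(x),
\]
and satisfies $\sup_{x\in[0,1]} |\phi_k(x) - x^2| \le 4^{-k-1}$. I would realize $\phi_k$ as a constant-width ReLU network of depth linear in $k$ by propagating through the layers the triple $(x, g^{(i)}(x), \phi_i(x))$: each iteration updates $g^{(i+1)}(x) = g(g^{(i)}(x))$ (a few new neurons) and $\phi_{i+1}(x) = \phi_i(x) - 2^{-2(i+1)} g^{(i+1)}(x)$ (affine, absorbed into the next layer), with the untouched coordinates carried along via Lemma~\ref{select-coordinates-lemma}. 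Preceding everything by one layer computing $|x| = \sigma(x) + \sigma(-x)$ extends the squaring approximation from $[0,1]$ to $[-1,1]$ with no loss in error.

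For $x, y \in [-1,1]$, the points $(x+y)/2$ and $x, y$ themselves lie in $[-1,1]$, so I would run three squaring subnets in parallel (Lemma~\ref{concatenating-lemma}) and combine them via a single final affine layer (Lemma~\ref{composition-lemma}, together with Proposition~\ref{summing-networks}) to define
\[
    f_k(x,y) = 2\,\phi_k\!\bigl(|(x+y)/2|\bigr) - \tfrac{1}{2}\phi_k(|x|) - \tfrac{1}{2}\phi_k(|y|).
\]
Since $2((x+y)/2)^2 - x^2/2 - y^2/2 = xy$, the pointwise error is at most $(2 + \tfrac12 + \tfrac12)\cdot 4^{-k-1} = \tfrac{3}{4}\cdot 4^{-k} \le 6\cdot 4^{-k}$, as required.

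The only real difficulty is bookkeeping. To obtain width exactly $13$, one has to share carry-through channels across the three parallel squaring subnets (e.g.\ feeding the original $x, y$ forward in common coordinates rather than duplicating them inside each subnet), and to land on depth exactly $6k+3$ one must carefully count the initial $|\cdot|$ and linear-rescaling layer, the $k$ tent iterations (each contributing a fixed small number of layers), and the final combination layer. Everything else is routine, and the full calculation is the one carried out in the appendix following \cite{yarotsky2017error}.
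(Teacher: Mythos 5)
Your overall architecture is the same as the paper's (tent map iterates for the squaring network, polarization for the product), and your error arithmetic is fine. Your handling of the $[0,1]\to[-1,1]$ extension, via $\phi_k(|x|)$ with $|x|=\sigma(x)+\sigma(-x)$, is actually slightly cleaner than what the paper does, which is $g_k(\sigma(x)) + g_k(\sigma(-x))$. So far so good.

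The gap is in the final combination, and it is not merely bookkeeping. You propose running the three squaring subnets for $(x+y)/2$, $x$, and $y$ \emph{in parallel} via Lemma~\ref{concatenating-lemma} and then claim width $13$ can be recovered by ``sharing carry-through channels.'' But the three subnets share no internal state that could be merged: after the initial affine layer, the subnet evaluating $\phi_k((x+y)/2)$ carries the pair $\bigl(\phi_i((x+y)/2),\,g^{(i)}((x+y)/2)\bigr)$, and the other two carry their own, independent pairs. The original inputs $x,y$ do not appear in any of these states, so ``feeding $x,y$ forward in common coordinates'' removes nothing. Working inside the lemma toolbox the paper provides (Proposition~\ref{piecewise-linear-function-proposition} for the tent map at width $5$, Lemma~\ref{select-coordinates-lemma} costing $+2$ width per carried scalar), the parallel layout needs width on the order of $3\cdot 7 = 21$, not $13$; nothing in your sketch gets it down.

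The paper obtains width $13$ precisely by abandoning the parallel layout: it composes the affine map $(x,y)\mapsto((x+y)/2,\,x/2,\,y/2)$ with three \emph{successive} applications of its $[-1,1]$-squaring net $h_k\in\Upsilon^{9,2k+1}(\mathbb{R})$, each time carrying the other two coordinates along (width $9+4=13$), and this is exactly what produces the depth $3(2k+1)=6k+3$. That is the missing step in your argument: to hit the stated width, either switch to sequential application of the squaring network, or supply a genuinely different (and more careful) argument exploiting, e.g., the nonnegativity of the internal states to make identity carry-through free — neither of which is present in your write-up.
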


The key to obtaining superconvergence for deep ReLU networks is the bit extraction technique, which was first introduced in \cite{bartlett1998almost} with the goal of lower bounding the VC dimension of the class of neural networks with polynomial activation function. This technique as also been used to obtain sharp approximation results for deep ReLU networks \cite{yarotsky2018optimal,shen2022optimal}. In the following Proposition, which is a minor modification of Lemma 11 in \cite{bartlett2019nearly}, we construct the bit extraction networks that we will need in our approximation of Sobolev and Besov functions. For the readers convenience, we give the complete proof in Appendix \ref{bit-extraction-appendix}.
\begin{proposition}[Bit Extraction Network]\label{bit-extractor-network-proposition}
    Let $n \geq m \geq 0$ be an integer. Then there exists a network $f_{n,m}\in \Upsilon^{9,4m}(\mathbb{R},\mathbb{R}^2)$ such that for any input $x\in [0,1]$ with at most $n$ non-zero bits, i.e.
    \begin{equation}\label{binary-expansion-x}
        x = 0.x_1x_2\cdots x_n
    \end{equation}
    with bits $x_i\in \{0,1\}$, we have
    \begin{equation}
        f_{n,m}(x) = \begin{pmatrix}
            0.x_{m+1}\cdots x_n\\
            x_1x_2\cdots x_m.0
        \end{pmatrix}.
    \end{equation}
\end{proposition}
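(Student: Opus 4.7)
The plan is to realize $f_{n,m}$ iteratively, peeling off one bit per pair of hidden layers. I maintain a state $(y, a) \in \mathbb{R}^2$, initialized (via the first affine map) to $(y_0, a_0) = (x, 0)$. At step $i = 1, \dots, m$, I extract the leading bit $b_i := \lfloor 2 y_{i-1} \rfloor$ and apply the update
\begin{equation*}
(y_i, a_i) := \bigl(2 y_{i-1} - b_i,\ 2 a_{i-1} + b_i\bigr).
\end{equation*}
A one-line induction gives $y_i = 0.x_{i+1} x_{i+2}\cdots x_n$ and $a_i = \sum_{j=1}^{i} x_j\, 2^{i-j}$, so after $m$ steps the state equals the desired output $\bigl(0.x_{m+1}\cdots x_n,\ x_1 x_2\cdots x_m.0\bigr)$.

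The main obstacle is that $y \mapsto \lfloor 2 y \rfloor$ is discontinuous and so cannot be computed exactly on $[0,1]$ by a ReLU network. The hypothesis that the input $x$ has at most $n$ nonzero bits rescues the construction: a second induction shows that $y_{i-1}$ is a multiple of $2^{-(n-i+1)}$ lying in $[0,\, 1 - 2^{-(n-i+1)}]$, so $b_i$ only has to be correct on a discrete grid. I therefore take the continuous piecewise linear function
\begin{equation*}
\beta_i(y) := \sigma\bigl(2^{n-i+1}(y - \tfrac{1}{2}) + 1\bigr) - \sigma\bigl(2^{n-i+1}(y - \tfrac{1}{2})\bigr),
\end{equation*}
which vanishes for $y \leq \tfrac{1}{2} - 2^{-(n-i+1)}$, equals $1$ for $y \geq \tfrac{1}{2}$, and interpolates linearly across the single forbidden gap; in particular $\beta_i(y_{i-1}) = b_i$ in every allowed case.

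Implementing one step is then immediate: the first hidden layer produces the two ReLUs comprising $\beta_i(y)$ together with the carries $\sigma(y) = y$ and $\sigma(a) = a$ (using $y, a \geq 0$), and the next hidden layer takes the affine combinations $2y - b_i$ and $2a + b_i$ followed by ReLU, which is the identity on these nonnegative outputs. This places one step in $\Upsilon^{4,2}(\mathbb{R}^2, \mathbb{R}^2)$. Chaining $m$ such steps via Lemma \ref{composition-lemma}, with the preparatory map $x \mapsto (x, 0)$ absorbed into the first affine layer and the coordinate reordering into the last, yields a network in $\Upsilon^{4, 2m}(\mathbb{R}, \mathbb{R}^2)$, which fits comfortably inside $\Upsilon^{9, 4m}(\mathbb{R}, \mathbb{R}^2)$ by widening with zero weights and inserting no-op ReLU layers (both trivial, since all hidden activations are nonnegative). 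The edge case $m = 0$ reduces to the affine map $f_{n,0}(x) = (x, 0) \in \Upsilon^{0}(\mathbb{R}, \mathbb{R}^2)$.
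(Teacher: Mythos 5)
Your proof is correct and follows the same high-level recursion as the paper (peel one bit per step, maintaining a state consisting of the remaining fraction and the accumulated integer), but the step-level implementation is genuinely slicker. The paper builds each step from two separate piecewise-linear networks $b_\epsilon$ (bit extraction) and $g_\epsilon$ (remainder/shift), each obtained as a width-$5$, depth-$2$ network via Proposition~\ref{piecewise-linear-function-proposition}, carries a $3$-dimensional state so that applying them sequentially via Lemma~\ref{select-coordinates-lemma} costs width $9$ and depth $4$ per step. You instead write the bit indicator directly as a difference of two ReLUs, carry $y$ and $a$ through the first hidden layer via $\sigma(y)=y$, $\sigma(a)=a$, and form $2y-b_i$, $2a+b_i$ in the second hidden layer, so each step lives in $\Upsilon^{4,2}(\mathbb{R}^2,\mathbb{R}^2)$ and you dispense with $g_\epsilon$ and Proposition~\ref{piecewise-linear-function-proposition} entirely. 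This yields $f_{n,m}\in\Upsilon^{4,2m}(\mathbb{R},\mathbb{R}^2)$, strictly better constants than the stated $\Upsilon^{9,4m}$, and the padding step you mention is routine (one can always insert a no-op layer after any existing $\sigma$, since $\sigma$ is idempotent, so nonnegativity of the activations is not even needed). One minor cosmetic difference: you let the transition width of $\beta_i$ grow to $2^{-(n-i+1)}$ as $i$ increases, tracking the coarsening grid of $y_{i-1}$, whereas the paper fixes a single $\epsilon<2^{-n}$ for all steps; either choice works, though in fact a single uniform $\epsilon$ would let you reuse the same $\beta$ at every layer.
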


Finally, in order to deal with the case when the error is measured in $L_\infty$, we will need the following technical construction. We construct a ReLU network which takes an input in $\mathbb{R}^d$ and returns the $k$-th largest entry.
The first step is the following simple Lemma, whose proof can be found in Appendix \ref{elementary-appendix}.
\begin{lemma}[Max-Min Networks]\label{max-min-networks-lemma}
	There exists a network $p\in \Upsilon^{4,1}(\mathbb{R}^2,\mathbb{R}^2)$ such that
	\begin{equation}
		p\left(\begin{pmatrix}
            x\\
            y
        \end{pmatrix}\right) = \begin{pmatrix}
            \max(x,y)\\
            \min(x,y)
        \end{pmatrix}.
	\end{equation}
\end{lemma}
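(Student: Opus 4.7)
The plan is to exploit the elementary identities
\begin{equation}
    \max(x,y) = \tfrac{1}{2}\bigl((x+y)+|x-y|\bigr), \qquad \min(x,y) = \tfrac{1}{2}\bigl((x+y)-|x-y|\bigr),
\end{equation}
together with the two basic ReLU decompositions $|z|=\sigma(z)+\sigma(-z)$ and $z=\sigma(z)-\sigma(-z)$. This reduces the task to producing the four hidden units $\sigma(x+y),\ \sigma(-x-y),\ \sigma(x-y),\ \sigma(y-x)$ in a single hidden layer of width four and then recombining them linearly into $\max(x,y)$ and $\min(x,y)$.

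Concretely, I would choose $A_{\mathbf{W}_0,b_0}:\mathbb{R}^2\to\mathbb{R}^4$ with $b_0=0$ to send $(x,y)$ to the four signed sums $(x+y,\,-x-y,\,x-y,\,y-x)$, apply $\sigma$ componentwise, and then take the output affine map $A_{\mathbf{W}_1,b_1}:\mathbb{R}^4\to\mathbb{R}^2$ with $b_1=0$ whose rows are $\tfrac{1}{2}(1,-1,1,1)$ and $\tfrac{1}{2}(1,-1,-1,-1)$. By the two identities above, the first coordinate of the output evaluates to $\tfrac{1}{2}\bigl((x+y)+|x-y|\bigr)=\max(x,y)$ and the second to $\tfrac{1}{2}\bigl((x+y)-|x-y|\bigr)=\min(x,y)$ for every $(x,y)\in\mathbb{R}^2$. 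By construction this composition lies in $\Upsilon^{4,1}(\mathbb{R}^2,\mathbb{R}^2)$.

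There is essentially no obstacle: the verification reduces to direct substitution into the two ReLU identities. The only point worth remarking on is that the width budget of $4$ is used exactly, with one hidden unit per sign-choice of the two linear forms $x+y$ and $x-y$, and depth $1$ suffices because a single application of $\sigma$ is enough to recover simultaneously both the signed value and the absolute value of each of these forms.
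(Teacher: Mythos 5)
Your proof is correct. The paper reaches the same width-$4$, depth-$1$ network by a different algebraic decomposition: it starts from the asymmetric identities $\max(x,y) = x + \sigma(y-x)$ and $\min(x,y) = x - \sigma(x-y)$, routes $x$ through a two-unit identity subnetwork (using the ``composition'' and ``select-coordinates'' lemmas) and applies $\sigma$ to $y-x$ and $x-y$, so its hidden layer computes $\sigma(x),\sigma(-x),\sigma(y-x),\sigma(x-y)$. You instead use the symmetric identities $\max(x,y)=\tfrac12\bigl((x+y)+|x-y|\bigr)$ and $\min(x,y)=\tfrac12\bigl((x+y)-|x-y|\bigr)$ and feed $\pm(x+y),\pm(x-y)$ through the four hidden units. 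Both are legitimate: neither saves width or depth over the other, and both budgets are tight. The paper's phrasing is deliberately modular so it slots into the earlier lemmas, while yours spells out the weight matrices explicitly; either is fine, though if you were writing this into the paper you would want to emit the argument in the modular style so the width accounting matches the conventions already in place.
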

Using these networks as building blocks, we can implement a sorting network using deep ReLU neural networks.
\begin{proposition}
	Let $k \geq 1$ and $d = 2^k$ be a power of $2$. Then there exists a network $s\in \Upsilon^{4d,L}(\mathbb{R}^d,\mathbb{R}^d)$ where $L = \binom{k+1}{2}$ which sorts the input components.
\end{proposition}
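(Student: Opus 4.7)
The plan is to implement a classical comparator-based sorting network, specifically Batcher's bitonic sorter, using the max-min networks from Lemma \ref{max-min-networks-lemma} as the atomic building blocks and the composing/concatenating lemmas to glue them together.

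First I would construct a single ``comparator layer'' of width $2d$ and depth $1$. Such a layer takes an input $x\in \mathbb{R}^d$, partitions the $d$ coordinates into $d/2$ disjoint pairs according to a prescribed pattern, and on each pair applies the max-min network $p\in \Upsilon^{4,1}(\mathbb{R}^2,\mathbb{R}^2)$ (sometimes with the order of the outputs swapped, to sort a given pair in ``descending'' rather than ``ascending'' direction). The pairing is implemented by a fixed permutation before the comparators and another fixed permutation after; since these permutations are affine maps, they can be absorbed into the outer affine maps of the max-min network by Lemma \ref{composition-lemma} and therefore do not increase the depth. Concatenating $d/2$ copies of $p$ in parallel via Lemma \ref{concatenating-lemma} gives a network in $\Upsilon^{4\cdot(d/2),1}(\mathbb{R}^d,\mathbb{R}^d)=\Upsilon^{2d,1}(\mathbb{R}^d,\mathbb{R}^d)$, which is inside $\Upsilon^{4d,1}(\mathbb{R}^d,\mathbb{R}^d)$ as required.

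Next I would assemble these comparator layers into Batcher's bitonic sort. Let $B(k)$ denote the number of comparator layers needed to sort $2^k$ elements, and let $M(k)$ denote the number of layers needed to merge a bitonic sequence of length $2^k$ into a sorted one. The standard analysis of bitonic merge shows $M(k)=k$: one applies compare-exchanges between positions $i$ and $i+2^{k-1}$ (one comparator layer), splits the result into two bitonic sequences of length $2^{k-1}$, and recursively merges each. The full bitonic sorter sorts the two halves of the input in opposite directions and then applies a bitonic merge, giving the recursion $B(k)=B(k-1)+M(k)=B(k-1)+k$ with $B(0)=0$, so $B(k)=\sum_{i=1}^k i=\binom{k+1}{2}$. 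Composing the $B(k)=\binom{k+1}{2}$ comparator layers through Lemma \ref{composition-lemma} yields a network in $\Upsilon^{4d,\binom{k+1}{2}}(\mathbb{R}^d,\mathbb{R}^d)$ whose output equals the sorted version of the input, proving the proposition.

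The only nontrivial obstacle is verifying correctness of the bitonic sort recursion, but this is a well-known combinatorial fact (Batcher's theorem / the $0$-$1$ principle) and does not depend on anything specific to neural networks. All network-theoretic ingredients — parallel concatenation of comparators, absorption of permutations into affine layers, and sequential composition of the layers — are immediate consequences of Lemmas \ref{composition-lemma}, \ref{concatenating-lemma}, and \ref{max-min-networks-lemma}. The width bound is comfortable (we only use $2d$ of the available $4d$), and the depth bound $\binom{k+1}{2}$ is exactly matched by the recursion.
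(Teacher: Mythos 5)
Your proposal is correct and follows essentially the same route as the paper: both build a depth-$1$ comparator layer by concatenating $d/2$ copies of the max-min network from Lemma \ref{max-min-networks-lemma} (with the pairing permutation absorbed into the affine maps) and then compose $\binom{k+1}{2}$ such layers using Batcher's bitonic sorting network. You spell out the recursion $B(k)=B(k-1)+k$ explicitly and note that width $2d$ already suffices; the paper's proof is terser and states the width bound $4d$, but the underlying construction is identical.
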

Note that the power of $2$ assumption is for simplicity and is not really necessary. It is also known that the depth $\binom{k+1}{2}$ can be replaced by a multiple $Ck$ where $C$ is a very large constant \cite{ajtai19830,paterson1990improved}, but this will not be important in our argument.
\begin{proof}
	Suppose that $(i_1,j_1),...,(i_{2^{k-1}},j_{2^{k-1}})$ is a pairing of the indices of $\mathbb{R}^d$. By Lemma \ref{max-min-networks-lemma} and Lemma \ref{concatenating-lemma}, there exists a network $g\in \Upsilon^{4d,1}(\mathbb{R}^d,\mathbb{R}^d)$ which satisfies for all $l=1,...,k-1$	
	\begin{equation}
	g(x)_{i_l} = \max(x_{i_l},x_{j_l}),~g(x)_{j_l} = \min(x_{i_l},x_{j_l}),
	\end{equation}
	i.e. which sorts the entries in each pair. By a well-known construction of sorting networks (for instance bitonic sort \cite{batcher1968sorting}), composing $\binom{k+1}{2}$ such functions can be used to sort the input.
\end{proof}
Finally, we note that by selecting a single output (which is an affine map), we can obtain a network which outputs any order statistic.
\begin{corollary}\label{order-statistic-network-corollary}
	Let $1\leq \tau\leq d$ and $d = 2^k$ is a power of $2$. Then there exists a network $g_\tau\in \Upsilon^{4d,L}(\mathbb{R}^d)$ with $L = \binom{k+1}{2}$ such that
	\begin{equation}
		g_\tau(x) = x_{(\tau)},
	\end{equation}
	where $x_{(\tau)}$ is the $\tau$-th largest entry of $x$.
\end{corollary}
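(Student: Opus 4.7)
The plan is to obtain $g_\tau$ as the composition of the sorting network from the previous proposition with an affine coordinate-selection map. Specifically, by the preceding proposition there exists $s\in \Upsilon^{4d,L}(\mathbb{R}^d,\mathbb{R}^d)$ with $L=\binom{k+1}{2}$ such that $s(x)$ is the sorted vector with entries in decreasing order (after possibly reversing the sort direction, which amounts to composing with a coordinate permutation, itself affine). Thus $s(x)_\tau = x_{(\tau)}$ by construction.

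Next, let $\pi_\tau \colon \mathbb{R}^d \to \mathbb{R}$ denote the coordinate projection $\pi_\tau(y) = y_\tau$. This is an affine map, i.e.\ $\pi_\tau \in \Upsilon^{0}(\mathbb{R}^d,\mathbb{R})$. Define $g_\tau := \pi_\tau \circ s$. By construction, $g_\tau(x) = s(x)_\tau = x_{(\tau)}$, which is the desired $\tau$-th largest entry.

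Finally, apply Lemma \ref{composition-lemma} in the case where the outer function $g = \pi_\tau$ is affine. That case of the lemma states that composing an affine map after a network of width $W$ and depth $L$ produces a network of the same width $W$ and depth $L$. Therefore $g_\tau \in \Upsilon^{4d,L}(\mathbb{R}^d)$ with $L = \binom{k+1}{2}$, as claimed.

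There is essentially no obstacle here: the result is a direct corollary of the sorting network, and the only ingredient is that reading off a fixed coordinate is an affine operation that can be absorbed into the final linear layer of the sorting network without changing its width or depth. The only minor point to be careful about is making sure the sort is in the correct order (largest first), which is handled either by choosing the pairings $(i_\ell,j_\ell)$ in the sorting construction appropriately or by post-composing with the affine map that reverses coordinate order.
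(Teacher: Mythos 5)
Your proposal is correct and matches the paper's approach exactly: both compose the sorting network from the preceding proposition with an affine coordinate-selection map and invoke Lemma \ref{composition-lemma} (the affine case) to conclude that the width and depth are unchanged. The remark about sort direction is a reasonable point of care, and is handled just as you describe.
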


\section{Optimal Representation of Sparse Vectors using Deep ReLU Networks}\label{sparse-vector-representation-section}
In this section, we prove the main technical result which enables the efficient approximation of Sobolev and Besov functions in the non-linear regime when $q < p$. Specifically, we have the following Theorem showing how to optimally represent sparse integer vectors using deep ReLU neural networks.
\begin{theorem}\label{sparse-approximation-theorem}
    Let $M \geq 1$ and $N \geq 1$ and $\normalfont{\textbf{x}}\in \mathbb{Z}^N$ be an $N$-dimensional vector satisfying
    \begin{equation}\label{l-1-bound-a-eq}
        \|\normalfont{\textbf{x}}\|_{\ell^1} \leq M.
    \end{equation}
    Then if $N\geq M$, there exists a neural network $g\in \Upsilon^{17,L}(\mathbb{R},\mathbb{R})$ with depth $L\leq C\sqrt{M(1+\log(N/M))}$ which satisfies $g(n) = \normalfont{\textbf{x}}_n$ for $n=1,...,N$.
    
    Further, if $N < M$, then there exists a neural network $g\in \Upsilon^{17,L}(\mathbb{R},\mathbb{R})$ with depth $L\leq C\sqrt{N(1+\log(M/N))}$ which satisfies $g(n) = \normalfont{\textbf{x}}_n$ for $n=1,...,N$.
\end{theorem}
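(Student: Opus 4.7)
The plan is to follow the Bartlett bit-extraction paradigm: encode $\textbf{x}$ into a short bit string whose length $K$ matches the information-theoretic content of the constraint $\|\textbf{x}\|_{\ell^1}\le M$, embed this string into the network's weights, and design a constant-width ReLU decoder that, on input $n$, outputs $\textbf{x}_n$ in depth $O(\sqrt{K})$. A crude count of integer vectors with $\|\textbf{x}\|_{\ell^1}\le M$ gives $K = \Theta(M(1+\log(N/M)))$ when $N\ge M$ and $K = \Theta(N(1+\log(M/N)))$ when $N<M$, exactly matching the depths stated in the theorem.

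For the encoding I treat the case $N\ge M$ (the regime $N<M$ is handled symmetrically by coding the $N$ values directly rather than the positions). At most $M$ entries of $\textbf{x}$ are nonzero, so I record the sorted sequence of nonzero positions by their consecutive gaps together with the signed magnitudes, using a standard variable-length prefix code. Concavity of $\log$ compresses the gaps into $O(M\log(N/M))$ bits (they are at most $M$ in number and sum to at most $N$), and the magnitudes into $O(M)$ bits (they sum to $M$), for a total bit string of length $K$.

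For the decoder I partition the string into $T=\lceil\sqrt{K}\rceil$ chunks of $\sqrt{K}$ bits each and fixed-point-encode each chunk as a real $r_j\in[0,1]$ with at most $\sqrt{K}$ nonzero bits after the binary point, which I hard-code into the network's biases. Given $n$, the network (i) computes the chunk index $\tau(n)$ holding the code for $\textbf{x}_n$ via a piecewise-constant function of $n$ with $T$ pieces, realized in depth $O(T)$ by Proposition \ref{piecewise-linear-function-proposition}; (ii) selects $r_{\tau(n)}$ by summing the $T$ hat-function-weighted constants $r_j$, which by Proposition \ref{summing-networks} takes depth $O(T)$; (iii) applies Proposition \ref{bit-extractor-network-proposition} to $r_{\tau(n)}$ to read out the $O(\sqrt{K})$ relevant bits in depth $O(\sqrt{K})$; and (iv) reconstructs the signed integer $\textbf{x}_n$ from those bits via an $O(1)$-depth linear combination. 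Composing and concatenating via Lemmas \ref{composition-lemma} and \ref{concatenating-lemma} keeps the width at an absolute constant, and careful bookkeeping pushes it to $17$.

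The main obstacle is that in a variable-length encoding the starting bit-position of the code for $\textbf{x}_n$ depends on the entire history of preceding entries, which no constant-width ReLU network can compute in depth $o(N)$. I would circumvent this by prepending a coarse ``index table'' of $O(\sqrt{K})$ extra bits that records, for each of the $T$ chunks, the smallest index $n$ whose code begins in that chunk. This both enables computing $\tau(n)$ as an explicit piecewise-constant function with $T$ pieces and guarantees that the bits encoding any single $\textbf{x}_n$ straddle at most two consecutive chunks; two invocations of the bit-extractor then suffice to recover $\textbf{x}_n$ without losing the $O(\sqrt{K})$ depth bound.
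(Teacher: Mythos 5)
Your high-level plan is the right one, and you correctly diagnosed the central obstacle (the starting bit position of the code for $\textbf{x}_n$ is history-dependent). But the proposed fix does not close the gap, and the variable-length prefix code you chose is fundamentally incompatible with a constant-width ReLU decoder of the advertised depth. Two concrete problems.

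First, the claim that after prepending the index table ``the bits encoding any single $\textbf{x}_n$ straddle at most two consecutive chunks'' is false when $M$ is small relative to $N$. A single Elias-type gap code can occupy $\Theta(\log N)$ bits, while a chunk has only $\sqrt{K}=\sqrt{M(1+\log(N/M))}$ bits. When $M\ll\log N$ we have $\log N\gg\sqrt{K}$, so one codeword spans $\Theta(\log N/\sqrt{K})\to\infty$ chunks, not two. Second, even in the regime where each codeword does fit into two chunks, steps (iii)--(iv) of your decoder are missing an idea. Proposition \ref{bit-extractor-network-proposition} peels off a \emph{fixed} number of leading bits; it cannot jump to a data-dependent offset inside the chunk, nor detect prefix-code boundaries. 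The offset of $\textbf{x}_n$'s codeword inside chunk $\tau(n)$ depends on the preceding codewords, so $\textbf{x}_n$ is not recoverable from the extracted bits by an $O(1)$-depth affine map. If you instead parse the chunk sequentially, each parse step must be built for the worst-case codeword length, which is $\Theta(\log N)$, and a chunk can hold $\Theta(\sqrt{K})$ codewords; this gives depth up to $\Theta(\sqrt{K}\log N)$, a $\log N$ factor worse than claimed.

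The paper avoids both problems with two devices you did not use. (a) It employs a \emph{fixed-length} block code (Algorithms \ref{encoding-small-l1-norm} and \ref{encoding-large-l1-norm}): every block has exactly $\alpha=O(1+\log(N/M))$ (resp.\ $O(1+\log(M/N))$) bits and encodes a bounded index advance in binary plus a $\pm 1$ (resp.\ bounded) increment, so each decoding step consumes a constant, known number of bits. (b) It splits $\textbf{x}=\textbf{x}^B+\textbf{x}^s$ by a threshold $S$: the few large entries $\textbf{x}^B$ are handled directly as a piecewise-linear function with $O(M/S)$ pieces, and every small entry is decoded within at most $O(S)$ blocks. The decoder (Propositions \ref{small-l-1-small-reconstrution-proposition} and \ref{large-l-1-small-reconstrution-proposition}) computes a starting index $J(n)$ and a real $R(n)$ packing the relevant block bits via piecewise-linear functions with $O(M/S)$ pieces, then \emph{simulates $O(S)$ steps of Algorithm \ref{decoding-algorithm}} starting from $J(n)$; because blocks are fixed-length, every step advances the bit pointer by exactly $\alpha$ bits, and a running sum recovers $\textbf{x}_n$. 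The depth $O(M/S+S\alpha)$ is then optimized over $S$. These two ideas --- fixed-length blocks and a bounded sequential simulation anchored at a computed $(J(n),R(n))$ --- are precisely what your index-table fix does not supply. Without them the proposal does not reach the stated depth bound.
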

Before proving this theorem, we explain the meaning of the result and give some intuition. We let
\begin{equation}\label{definition-of-S-N-M}
    S_{N,M} = \{\textbf{x}\in \mathbb{Z}^N,~\|\normalfont{\textbf{x}}\|_{\ell^1} \leq M\}
 \end{equation}
 denote the set of integer vectors which we wish to encode. We can estimate the cardinality of this set as follows. Using a stars and bars argument we see that
 $$
 |\{\textbf{x}\in \mathbb{Z}_{\geq 0}^N,~\|\normalfont{\textbf{x}}\|_{\ell^1} \leq M\}| = \binom{N+M}{N} = \binom{N+M}{M}.
 $$
 Further, the signs of each non-zero entry of the above set can be chosen arbitrarily. The number of such choices of sign is equal to the number of non-zero entries and is at most $\min\{M,N\}$. This gives the bound
 \begin{equation}
     |S_{N,M}| \leq \begin{cases}
        2^M\binom{N+M}{M} & N \geq M\\
        2^N\binom{N+M}{N} & N < M.
     \end{cases}
 \end{equation}
 Taking logarithms and utilizing the bound from Lemma \ref{binomial-sum-bound} (proved later), we estimate
 \begin{equation}\label{S-N-M-size-bound-equation}
    \log |S_{N,M}| \leq C\begin{cases}
        M(1+\log(N/M)) & N \geq M\\
        N(1+\log(M/N)) & N < M,
     \end{cases}
 \end{equation}
 and this controls the number of bits required to encode the set $S_{N,M}$. Theorem \ref{sparse-approximation-theorem} implies that using deep ReLU neural networks, the number of parameters required is the \textit{square root} of the number of bits required for such an encoding. This is analogous to the original application of bit extraction \cite{bartlett1998almost} and underlies the superconvergence phenomenon.
Finally, we note that in Theorem \ref{sparse-approximation-lower-bound-theorem} from Section \ref{lower-bounds-section} we prove that Theorem \ref{sparse-approximation-theorem} itself is optimal as long as $M$ is not exponentially small or exponentially large relative to $N$.

\begin{proof}[Proof of Theorem \ref{sparse-approximation-theorem}]
 Let $M\geq 1$ and $N\geq 1$ be fixed. There are two cases to consider, when $N \geq M$ and when $N < M$.  The key to the construction in both cases will be an explicit length $k$ binary encoding of the set $S_{N,M}$ defined in equation \eqref{definition-of-S-N-M}.
 
 By a length $k$ binary encoding we mean a pair of maps:
 \begin{itemize}
    \item $E:S_{N,M}\rightarrow \{0,1\}^{\leq k}$ (an encoding map which maps $S_{N,M}$ to a bit-string of length at most $k$)
    \item $D:\{0,1\}^{\leq k}\rightarrow S_{N,M}$ (a decoding map which recovers $\textbf{x} \in S_{N,M}$ from a bit-string of length at most $k$)
 \end{itemize}
 which satisfy
 \begin{equation}
    D(E(\textbf{x})) = \textbf{x}.
 \end{equation}
 Note that the bound in equation \ref{S-N-M-size-bound-equation} implies that there exists such an encoding as long as
 $$
    k \geq C\begin{cases}
        M(1+\log(N/M)) & N \geq M\\
        N(1+\log(M/N)) & N < M.
     \end{cases}
 $$
 However, in order to construct deep ReLU networks which prove Theorem \ref{sparse-approximation-theorem}, we will need to construct encoding and decoding maps $E$ and $D$ which are given by an \textit{explicit, simple algorithm}. These will then be used to construct the neural network $g$.

 Let us begin with the first case, when $N \geq M$. In this case, we set $k = 2M(3 + \lceil\log(N/M)\rceil)$ (note that all logarithms are taken with base $2$). The encoding map $E$ is defined as
 \begin{equation}
    E(\textbf{x}) = f_1t_1f_2t_2\cdots f_{R}t_{R},
 \end{equation}
 the concatenation of $R\leq 2M$ blocks consisting of $f_i\in \{0,1\}^{1 + \lceil\log(N/M)\rceil}$ and $t_i\in \{0,1\}^2$. The $f_i$-bits encode an offset in $\{0,1,...,\lceil N/M\rceil\}$ (via binary expansion), and the $t_i$-bits encode a value in $\{0,\pm 1\}$ (via $0 = 00$, $1 = 10$, and $-1 = 01$). The $f_i$ and $t_i$ are determined from the input $\textbf{x}\in S_{N,M}$ by Algorithm \ref{encoding-small-l1-norm}.
 
 It is clear that the number of blocks $R$ produced by Algorithm \ref{encoding-small-l1-norm} is at most $2M$ since in each round of the while loop either $f_i = \lceil N/M\rceil$ (which can happen at most $M$ times before the index $j$ reaches the end of the vector) or the entry $\textbf{r}_j$ is decremented (which can happen at most $M$ times since $\|\textbf{x}\|_{\ell^1}\leq M$).
 \begin{algorithm}
\caption{Small $\ell^1$-norm Encoding Algorithm}\label{encoding-small-l1-norm}
\begin{algorithmic}[1]
\REQUIRE
$\textbf{x}\in \mathbb{Z}^N,~\|x\|_{\ell^1}\leq M$
\State Set $j=0$, $\textbf{r} = \textbf{x}$ \COMMENT{Set pointer right before the beginning of the input $\textbf{x}$ and the residual to $\textbf{x}$}
\State Set $i=1$
\WHILE{$\textbf{r}\neq 0$}
\State $l = \min\{i:~\textbf{r}_i \neq 0\}$
\COMMENT{Find the first non-zero index in the residual}
\IF {$l-j \leq \lceil N/M\rceil$} \COMMENT{If we can make it to the next non-zero index, do so}
\State $f_i = l-j$
\State $j=l$
\ELSE 
~\COMMENT{Otherwise go as far as we can}
\State $f_i = \lceil N/M\rceil$
\State $j = j + \lceil N/M\rceil$
\ENDIF
\IF {$j=l$} 
\COMMENT{If we are at the next non-zero index, $t_i$ captures its sign}
\State $t_i = \sign(\textbf{r}_j)$
\State $\textbf{r}_j = \textbf{r}_j - t_i$
\COMMENT{This decrements $\|\textbf{r}\|_{\ell^1}$ which can happen at most $M$ times}
\ELSE 
~\COMMENT{This can only happen if $f_i = \lceil N/M\rceil$, which can occur at most $M$ times}
\State $t_i = 0$
\ENDIF
\State $i = i+1$
\ENDWHILE
\end{algorithmic}
\end{algorithm}

Next, we consider the case $N < M$. In this case we set $k = 2N(3 + \lceil\log(M/N)\rceil)$, and define the encoding map $E$ via
 \begin{equation}
    E(\textbf{x}) = t_1f_1t_2f_2\cdots t_{R}f_{R},
 \end{equation}
 i.e. $E(\textbf{x})$ is the concatenation of $R\leq 2N$ blocks consisting of $t_i\in \{0,1\}^{2 + \lceil\log(M/N)\rceil}$ and $f_i\in \{0,1\}$. The $f_i$-bits encode an offset in $\{0,1\}$, and the $t_i$-bits encode a value in $\{-\lceil M/N\rceil,...,\lceil M/N\rceil\}$. Here the first bit of each $t_i$ determines its sign, while the remaining $1+\lceil\log(M/N)\rceil$ bits consist of the binary expansion of its magnitude (which lies in $\{0,...,\lceil M/N\rceil\}$). The $t_i$ and $f_i$ are determined from the input $\textbf{x}\in S_{N,M}$ by Algorithm \ref{encoding-large-l1-norm}.

 It is clear that the number of blocks $R$ produced by Algorithm \ref{encoding-large-l1-norm} is at most $2N$ since in each round of the while loop either the entry $\textbf{r}_j$ is decremented by at least $\lceil M/N\rceil$ (which can happen at most $N$ times since $\|\textbf{x}\|_{\ell^1}\leq M$), or the entry $\textbf{r}_j$ is zeroed out (which can happen at most $N$ times before $\textbf{r} = 0$ since there are only $N$ entries).

 \begin{algorithm}
\caption{Large $\ell^1$-norm Encoding Algorithm}\label{encoding-large-l1-norm}
\begin{algorithmic}[1]
\REQUIRE
$\textbf{x}\in \mathbb{Z}^N,~\|x\|_{\ell^1}\leq M$
\State Set $j=0$, $\textbf{r} = \textbf{x}$ \COMMENT{Set pointer right before the beginning of the input $\textbf{x}$ and the residual to $\textbf{x}$}
\State Set $i=1$
\WHILE{$\textbf{r}\neq 0$}
\IF {$j=0$ or $\textbf{r}_j = 0$} \COMMENT{If the value at the current index is $0$, then shift the index}
    \State $f_i = 1$ 
    \State $j = j+1$
\ELSE
    \State $f_i = 0$
\ENDIF
\IF {$|\textbf{r}_j| \leq \lceil M/N\rceil$} \COMMENT{If we can fully capture the current value, do so}
\State $t_i = \textbf{r}_j$ 
\State $\textbf{r}_j = 0$ \COMMENT{This zeros out an entry, which can happen at most $N$ times}
\ELSE 
~\COMMENT{Otherwise capture as much as we can}
\State $t_i = \sign(\textbf{r}_j)\lceil M/N\rceil$
\State $\textbf{r}_j = \textbf{r}_j - t_i$
\COMMENT{This reduces $\|\textbf{r}\|_{\ell^1}$ by at least $\lceil M/N\rceil$ which can happen at most $N$ times}
\ENDIF
\State $i = i+1$
\ENDWHILE
\end{algorithmic}
\end{algorithm}

In both cases, the decoding map $D$ is given by algorithm \ref{decoding-algorithm}. It is easy to verify that algorithm \ref{decoding-algorithm} reconstructs the input $\textbf{x}$ from the output of either algorithm \ref{encoding-small-l1-norm} or \ref{encoding-large-l1-norm}.

\begin{algorithm}
\caption{Decoding Algorithm}\label{decoding-algorithm}
\begin{algorithmic}[1]
\REQUIRE
A bit string $f_1t_1\cdots f_{R}t_{R}$
\State Set $\textbf{x}=0$ and $j=0$ \COMMENT{Start with the $0$ vector}
\FOR{$i=1,...,R$}
\State $j = j + f_i$ \COMMENT{Shift index by $f_i$}
\State $\textbf{x}_j = \textbf{x}_j + t_i$ \COMMENT{Increment value by $t_i$}
\ENDFOR
\end{algorithmic}
\end{algorithm}

We now show how to use these algorithms to construct an appropriate deep ReLU neural network $g$. Let $S$ be a threshold parameter, to be chosen later.

Given a vector $\textbf{x}\in \mathbb{Z}^N$, we decompose it into two pieces $\textbf{x} = \textbf{x}^B + \textbf{x}^s$ (here $\textbf{x}^B$ represents the `big' part and $\textbf{x}^s$ the `small' part). We define
\begin{equation}
	\textbf{x}^B_i = \begin{cases}
		|\textbf{x}_i| & \textbf{x}_i \geq S\\
		0 & \textbf{x}_i < S
	\end{cases}
\end{equation}
and
\begin{equation}
	\textbf{x}^s_i = \begin{cases}
		0 & \textbf{x}_i \geq S\\
		|\textbf{x}_i| & \textbf{x}_i < S
	\end{cases}
\end{equation}
The large part $\textbf{x}^B$ has small support and so can be efficiently encoded as a piecewise linear function. Specifically, the $\ell^1$-norm bound \eqref{l-1-bound-a-eq} on $\textbf{x}$ implies that the support of $\textbf{x}^B$ is at most of size
\begin{equation}
	|\{n:~\textbf{x}^B_n \neq 0\}| \leq \frac{\|\textbf{x}^B\|_{\ell^1}}{S} \leq \frac{\|\textbf{x}\|_{\ell^1}}{S} \leq \frac{M}{S}.
\end{equation}
This means that there is a piecewise linear function with at most $3M/S$ pieces which matches the values of $\textbf{x}^B$, so by Proposition \ref{piecewise-linear-function-proposition} there exists a network $$g_B\in \Upsilon^{5,L}(\mathbb{R})$$ 
with depth bounded by $L \leq 3M/S$ such that $g_B(n) = \textbf{x}^B_n$ for $n=1,...,N$.

The heart of the proof is an efficient encoding of the small part $\textbf{x}^s$. This requires the encoding and decoding algorithms \ref{encoding-small-l1-norm}, \ref{encoding-large-l1-norm} and \ref{decoding-algorithm}. We consider first the case $M \leq N$, which is captured in the following Proposition.

\begin{proposition}\label{small-l-1-small-reconstrution-proposition}
    Let $M\leq N$ and suppose that $\normalfont{\textbf{x}}\in \mathbb{Z}^N$ and satisfies $\|\normalfont{\textbf{x}}\|_{\ell^l} \leq M$ and $\|\normalfont{\textbf{x}}\|_{\ell^\infty} < S$. Then there exists a $g\in \Upsilon^{15,L}(\mathbb{R})$ such that $g(n) = \normalfont{\textbf{x}}_n$ for $n=1,...,N$ with
    \begin{equation}
        L \leq 8M/S + 8S(5+\lceil \log(N/M)\rceil) + 4.
    \end{equation}
\end{proposition}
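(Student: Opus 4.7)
The plan is to encode $\textbf{x}$ via Algorithm \ref{encoding-small-l1-norm}, group the resulting blocks into super-blocks of size $S$, and then build a network that first locates the super-block containing the queried index and then simulates Algorithm \ref{decoding-algorithm} within that super-block, using bit extraction to achieve the square-root speedup. Applying Algorithm \ref{encoding-small-l1-norm} to $\textbf{x}$ produces at most $R\le 2M$ blocks $f_i t_i$ of length $b := 3 + \lceil\log(N/M)\rceil$ bits. I group these into consecutive super-blocks of $S$ blocks each, giving at most $R_s \le \lceil 2M/S\rceil + 1$ super-blocks. Let $e_s\in[0,1]$ be the binary fraction whose expansion is super-block $s$'s concatenated $Sb$-bit string, and let $J(s)$ be the pointer $j$ after Algorithm \ref{decoding-algorithm} has processed the first $s$ super-blocks; $J$ is non-decreasing with $J(R_s)\ge N$.

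\emph{Stage one.} Given input $n$, define $s(n) := \min\{s: J(s) \ge n\}$ and $j_0(n) := J(s(n)-1)$. The maps $n\mapsto e_{s(n)}$ and $n\mapsto j_0(n)$ are piecewise constant in $n$ with at most $R_s$ pieces, so by Proposition \ref{piecewise-linear-function-proposition} each is a network of width $5$ and depth $\le R_s - 1 \le 2M/S + 1$. Running them in parallel with an identity channel for $n$ (Lemmas \ref{concatenating-lemma} and \ref{select-coordinates-lemma}) gives a subnetwork of width $\le 12$ mapping $n\mapsto(e_{s(n)}, j_0(n), n)$.

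\emph{Stage two.} Initialize the state $(e,j,y,n) := (e_{s(n)}, j_0(n), 0, n)$ and perform $S$ rounds of the following update, each of which faithfully simulates one iteration of Algorithm \ref{decoding-algorithm}: apply the bit extractor of Proposition \ref{bit-extractor-network-proposition} to $e$ to peel off the next $1+\lceil\log(N/M)\rceil$ bits as the integer $f_i$; apply it again to peel off the next two bits $(u,v)$ and set $t_i = u - v \in \{-1,0,1\}$; update $j \leftarrow j + f_i$; compute the integer indicator
\begin{equation}
\mathbf{1}_{j=n} = \sigma\bigl(1 - \sigma(j-n) - \sigma(n-j)\bigr),
\end{equation}
which is correct since $j,n$ are integers; and update $y \leftarrow y + t_i\mathbf{1}_{j=n}$ using the ReLU identity $t_i\mathbf{1}_{j=n} = \sigma(t_i + \mathbf{1}_{j=n} - 1) - \sigma(-t_i + \mathbf{1}_{j=n} - 1)$, which is valid for $t_i\in\{-1,0,1\}$ and $\mathbf{1}_{j=n}\in\{0,1\}$. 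Because each round starts from the correct offset $j_0(n)$ and exactly tracks Algorithm \ref{decoding-algorithm} on super-block $s(n)$, the accumulator satisfies $y = \textbf{x}_n$ at the end. Each round uses depth $4(1 + \lceil\log(N/M)\rceil) + 8 + O(1)$ by Proposition \ref{bit-extractor-network-proposition}; summing over $S$ rounds and composing with stage one via Lemma \ref{composition-lemma} yields depth within the claimed bound $8M/S + 8S(5 + \lceil\log(N/M)\rceil) + 4$ after absorbing constants.

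The main obstacle is the tight width budget $W = 15$. In each bit-extraction layer of stage two, the width-$9$ extractor must coexist with the three additional state coordinates $(j, y, n)$ carried as identity channels; by Lemma \ref{select-coordinates-lemma} this costs exactly $9 + 2\cdot 3 = 15$, leaving no slack. The arithmetic operations producing $\mathbf{1}_{j=n}$ and the product $t_i\mathbf{1}_{j=n}$ use only a handful of ReLU units placed in layers where the extractor is not active, and these auxiliary quantities need not survive between iterations, so the overall width stays at $15$. Stage one fits comfortably in width $12 \le 15$. With this orchestration in place, verifying the precise depth constants $8$, $8$, $4$ reduces to a routine but careful accounting of the layers contributed by Propositions \ref{piecewise-linear-function-proposition} and \ref{bit-extractor-network-proposition} at each step.
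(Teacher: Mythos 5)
Your decomposition into fixed super-blocks of $S$ consecutive encoder blocks has a correctness gap at the super-block boundaries. In Algorithm~\ref{encoding-small-l1-norm}, the decoder can remain at a single index $n$ for up to $|\textbf{x}_n|$ consecutive iterations (one arrival with $f_i>0$ followed by up to $|\textbf{x}_n|-1$ iterations with $f_i=0$), and with a fixed grouping these iterations can straddle two super-blocks. Concretely, suppose $S=5$, $\textbf{x}_5=3$, and the decoder arrives at index $5$ at iteration $4$, adding to index $5$ at iterations $4,5,6$. Then $J(1)=5$, so $s(5)=\min\{s:J(s)\ge 5\}=1$ and $j_0(5)=J(0)$; stage two simulates only super-block~$1$ (iterations $1$--$5$) and misses the increment at iteration $6$, yielding $g(5)=2\ne 3$. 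No redefinition of $s(n)$ via $\ge$, $\le$, $<$, or $>$ avoids this: whichever way you round, one end of the straddling run is dropped. The paper resolves this by choosing break points $i_k$ that always land on a step with $f_{i_k}\neq 0$ (shifting $1+kS$ back to the left endpoint of the containing run of zeros), which guarantees the pointer strictly increases across block boundaries, so each index is touched in exactly one block. The assumption $\|\textbf{x}\|_{\ell^\infty}<S$ is precisely what bounds the length of each zero-run and hence bounds the shift, so the adjusted blocks still have length $<2S$; this is why the paper simulates $2S$ rounds, not $S$.

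There is also a secondary but real width problem in stage two: you carry four persistent state coordinates $(e,j,y,n)$ through each bit-extraction layer, and Lemma~\ref{select-coordinates-lemma} with the width-$9$ extractor then gives $9+2\cdot 3=15$ only if you pass three coordinates alongside; after extraction the state is temporarily five or six coordinates (you also need the extracted $f_i$ and the two bits of $t_i$), and the subsequent arithmetic computing $\mathbf{1}_{j=n}$ and $t_i\mathbf{1}_{j=n}$ needs both $j$ and $n$ plus $e$ and $y$ live, which does not fit in width $15$ as written. The paper sidesteps this by tracking the single offset $x-J(x)$ rather than both $j$ and $n$: the step network then acts on a three-coordinate state $(x-J(x),R(x),\Sigma)$, decrements the offset by $f_i$, and detects arrival by $\delta(\text{offset})$, so the indicator needs only one coordinate. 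Adopting the offset representation would let you fit in width $15$, but the super-block alignment issue above is the one you must fix for correctness.
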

The proof is quite technical, so let us give a high-level description of the ideas first. The idea is to take the execution of the decoding algorithm \ref{decoding-algorithm} which reconstructs $\textbf{x}$ and to divide it into blocks of length on the order of $S$. Each block will start at a point $i$ in the algorithm at which $\textbf{x}_j = 0$ before step $4$ of the loop in \ref{decoding-algorithm}. During the execution of this block, the index $j$ increases and reaches a larger value at the end of the block. All of the entries $\textbf{x}_n$ for $n$ between these values is reconstructed during the given block of the reconstruction algorithm.

We now construct three networks. Given an input index $n$, find the block during which the value $\textbf{x}_n$ is reconstructed. On the input index $n$, one network outputs the value of $j$ at the beginning of the block, and another network outputs a real number whose binary expansion contains the bits consumed during this block. Both of these can be implemented using piecewise linear functions whose number of pieces is proportional to the number of blocks. The final network extracts the bits from the output of the second network and implements the execution of algorithm \ref{decoding-algorithm} in this block to reconstruct the value of $\textbf{x}$.

Before giving the detailed proof of Proposition \ref{small-l-1-small-reconstrution-proposition}, let us complete the proof of Theorem \ref{sparse-approximation-theorem} in the case $M\leq N$. We apply Proposition \ref{small-l-1-small-reconstrution-proposition} to the small part $\textbf{x}^s$ to get a network $g^s$. Then we use Proposition \ref{summing-networks} to add this network to the network $g^B$ representing the large part $\textbf{x}^B$ to get a network $g\in \Upsilon^{17,L}(\mathbb{R})$ with
\begin{equation}
    L \leq 11M/S + 8S(5+\lceil \log(N/M)\rceil) + 4
\end{equation}
such that $g(n) = \textbf{x}_n$ for $n=1,...,N$. Finally, we choose $S$ optimally, namely
$$
    S = \sqrt{\frac{M}{5+\lceil \log(N/M)\rceil}},
$$
to get $L\leq C\sqrt{M(1+\log(N/M))}$ as desired.

\begin{proof}[Proof of Proposition \ref{small-l-1-small-reconstrution-proposition}]
    Let $f_1t_1\cdots f_{R}t_{R}$ be the output of the encoding Algorithm \ref{encoding-small-l1-norm} run on input $\textbf{x}^s$. Let
    \begin{equation}
        F := \{i\in \{1,...,R\}:~f_i = 0\}.
    \end{equation}
    We decompose the set $F$ into intervals, i.e.
    \begin{equation}
        F = \bigcup_{m=1}^{\mathcal{T}} [B_m, U_m],
    \end{equation}
    where $[I,J] := \{I,I+1,...,J\}$ for $I \leq J$ and $B_{m+1} > U_m+1$.
    
    Note that since $\|\textbf{x}\|_{\ell^\infty} < S$, the length of these intervals is strictly less than $S$, i.e. $U_m - B_m + 1 < S$ for $m=1,...,\mathcal{T}$. This holds since the encoding algorithm \ref{encoding-small-l1-norm} stays at the same index for all steps $i\in [B_m, U_m]$. Hence this index is decremented $U_m - B_m + 1$ times and so this quantity must be smaller than $S$.

    Let $\rho = \lceil R/S\rceil$ and consider steps $i_0,...,i_\rho$ defined by $i_{\rho} = R+1$ (the end of the algorithm) and 
    \begin{equation}
        i_k = \begin{cases}
            1+kS & 1+kS\notin F\\
            B_m - 1 & 1+kS\in [B_m,U_m], 
        \end{cases}
    \end{equation}
    for $k=0,...,\rho - 1$. (Note that $f_1\neq 0$ since the index $j$ starts at $0$ in algorithm \ref{encoding-small-l1-norm}. Thus $1\notin F$ and so $i_0 = 1$.)

    The bound $U_m - B_m + 1 < S$ on the length of the interval $[B_m,U_m]$ implies that $i_k > 1-(k-1)S$. This implies that $i_{k-1} < i_k$ and also that the gaps satisfy $i_k - i_{k-1} < 2S$ for all $k=1,...,\rho$.

    Next, let indices $j_k$ for $k=0,...,\rho-1$ be the values of the index $j$ at the beginning of step $i_k$ in the decoding algorithm \ref{decoding-algorithm}. We also set $j_\rho = N$. Since by construction the intervals are not consecutive, i.e. $B_{m+1} > U_m+1$, the steps $i_k\notin F$, i.e. $f_{i_k} > 0$. This means that $j_{k-1} < j_k$ for all $k=1,...,\rho$.

    Observe that the steps $i_k$ and indices $j_k$ have been constructed such that for an integer $n$ in the interval $j_k < n \leq j_{k+1}$, the value $\textbf{x}_n$ is only affected during the steps $i_k,...,i_{k+1} - 1$ in the reconstruction algorithm \ref{decoding-algorithm}. Further, the length of each block satisfies $i_k - i_{k-1} < 2S$.
    
    Next, we construct two piecewise linear functions $J$ and $R$ as follows. For integers $n=1,...,N$, we set
    \begin{equation}
        J(n) = 
            j_k~~\text{for}~~j_k < n\leq j_{k+1},
    \end{equation}
    and
    \begin{equation}
        R(n) = 
            r_k ~~\text{for}~~ j_k < n\leq j_{k+1},
    \end{equation}
    where
    \begin{equation}
        r_k = 0.f_{i_k}t_{i_k}\cdots f_{i_{k+1} - 1}t_{i_{k+1} - 1}
    \end{equation}
    is the real number whose binary expansion contains the encoding of $\textbf{x}$ from step $i_k$ to $i_{k+1} - 1$ (followed by zeros). Both $J$ and $R$ take at most $\rho+1$ different values and hence can be implemented by piecewise linear functions with at most $2\rho+1$ pieces. Thus, by Proposition \ref{piecewise-linear-function-proposition} we have $J,R\in \Upsilon^{5,2\rho}(\mathbb{R})$.

    We being our network construction as follows. We begin with the affine map
\begin{equation}
	x\rightarrow \begin{pmatrix}
            x\\
            x\\
            x
            \end{pmatrix}\in \Upsilon^0(\mathbb{R},\mathbb{R}^3),
\end{equation}
and use Lemmas \ref{composition-lemma} and \ref{select-coordinates-lemma} to apply $J$ to the first component and then apply $R$ to the second component to get
\begin{equation}
x\rightarrow \begin{pmatrix}
            J(x)\\
            x\\
            x
            \end{pmatrix}\rightarrow \begin{pmatrix}
            J(x)\\
            R(x)\\
            x
            \end{pmatrix}\in \Upsilon^{9, 4\rho}(\mathbb{R},\mathbb{R}^3).
\end{equation}
Composing with the affine map
\begin{equation}
	\begin{pmatrix}
            x\\
            y\\
            z
            \end{pmatrix}\rightarrow \begin{pmatrix}
            z-x\\
            y\\
            0
            \end{pmatrix}\in \Upsilon^0(\mathbb{R}^3,\mathbb{R}^3),
\end{equation}
and using Lemma \ref{composition-lemma} again we get that
\begin{equation}\label{first-decoding-1478}
	x\rightarrow \begin{pmatrix}
            x - J(x)\\
            R(x)\\
            0
            \end{pmatrix}\in\Upsilon^{9, 4\rho}(\mathbb{R},\mathbb{R}^3).
\end{equation}
    Applied to an integer $j_k < n\leq j_{k+1}$, this network maps
    \begin{equation}
        n\rightarrow \begin{pmatrix}
            n - j_k\\
            0.f_{i_k}t_{i_k}\cdots f_{i_{k+1} - 1}t_{i_{k+1} - 1}\\
            0
            \end{pmatrix}.
    \end{equation}
    Thus the first entry is the gap between $n$ and the index $j$ at the beginning of step $i_k$ and the last entry is the value of $\textbf{x}_n$ at the beginning of step $i_k$, while the middle entry contains the bits used by the algorithm between steps $i_k,...,i_{k+1} - 1$. The proof will now be completed by constructing a network which applies a single step of the decoding algorithm \ref{decoding-algorithm} to each of these entries, this is collected in the following technical Lemma.

    \begin{lemma}\label{algorithm-step-lemma-small-l1}
        Given positive integers $\alpha$ and $\beta$ there exists a network $g\in \Upsilon^{15,4\alpha+16}(\mathbb{R}^3,\mathbb{R}^3)$ such that
        \begin{equation}
            g:\begin{pmatrix}
            x\\
            0.f_{1}t_{1}\cdots f_{k}t_{k}\\
            \Sigma
            \end{pmatrix}\rightarrow \begin{pmatrix}
            x - f_1\\
            0.f_{2}t_{2}\cdots f_{k}t_{k}\\
            \Sigma + t_1\delta(x - f_1)
            \end{pmatrix}
        \end{equation}
        whenever $x\in \mathbb{Z}$, $k\leq \beta$ and $\text{\normalfont len}(f_i) = \alpha$. Here the $f_i$ denote integers encoded via binary expansion and $\text{\normalfont len}(f_i)$ is the length of this expansion, $t_i\in \{\pm 1, 0\}$ are encoded using two bits (specifically via $0=00$, $1 = 10$ and $-1 = 01$), $\Sigma$ denotes a running sum, and $\delta$ is the integer Dirac delta defined by
        \begin{equation}
            \delta(z) = \begin{cases}
                1 & z = 0\\
                0 & z\neq 0
            \end{cases}
        \end{equation}
        for integer inputs $z$.
    \end{lemma}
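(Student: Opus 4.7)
My plan is to decompose the desired map into a sequence of elementary operations, each realized by a small network, and then compose them using Lemma \ref{composition-lemma} together with Lemma \ref{select-coordinates-lemma} to carry the remaining coordinates through each block.

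First, I apply the bit-extraction network $f_{n,\alpha}\in \Upsilon^{9,4\alpha}(\mathbb{R},\mathbb{R}^2)$ from Proposition \ref{bit-extractor-network-proposition} (taking $n = \beta(\alpha+2)$, which upper bounds the number of nonzero bits in the middle component) to the middle input coordinate, carrying $x$ and $\Sigma$ along by Lemma \ref{select-coordinates-lemma}. Since $\text{len}(f_1)=\alpha$, the second output of $f_{n,\alpha}$ is exactly $f_1$ as an integer, and the first output is $0.t_1 f_2 t_2\cdots f_k t_k$. A zero-depth affine map then replaces $x$ by $x-f_1$ and discards $f_1$. Next, I apply $f_{n,2}\in \Upsilon^{9,8}(\mathbb{R},\mathbb{R}^2)$ to the new middle coordinate, again carrying the other two coordinates. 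This returns the integer $I = 2b_1 + b_2 \in \{0,1,2\}$ encoding $t_1$ (with $I=0\mapsto t_1=0$, $I=1\mapsto t_1=-1$, $I=2\mapsto t_1=1$) together with the desired output tail $0.f_2 t_2\cdots f_k t_k$.

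It remains to produce $\Sigma + t_1\delta(x-f_1)$. The decoding $I\mapsto t_1$ is the three-piece piecewise linear map $t_1 = -I + 6\sigma(I - 3/2)$, as one checks on $I\in\{0,1,2\}$, requiring one ReLU layer. For integer $z$ one has $\delta(z) = \max(0, 1-|z|) = \sigma\bigl(1 - \sigma(z) - \sigma(-z)\bigr)$, so $\delta(x-f_1)$ is computed in two ReLU layers, run in parallel with the $t_1$-decoding (padding the shorter branch by the identity $y = \sigma(y) - \sigma(-y)$). Finally, since $t_1\in\{-1,0,1\}$ and $\delta\in\{0,1\}$, the identity
\begin{equation}
t_1\delta = \sigma(t_1 + \delta - 1) - \sigma(-t_1 + \delta - 1)
\end{equation}
holds on this discrete set (verifiable by checking the six cases), giving an exact product in one more ReLU layer. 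A terminating affine map outputs $x - f_1 = \sigma(x-f_1) - \sigma(f_1-x)$ in the first coordinate (reconstructed from values computed in the $\delta$-branch and carried forward), the tail $0.f_2 t_2\cdots f_k t_k$ in the second, and $\Sigma + t_1 \delta(x-f_1)$ in the third. The total depth is $4\alpha + 8 + 2 + 1 = 4\alpha + 11 \le 4\alpha + 16$.

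The main obstacle is the width bookkeeping: the bit-extraction blocks have width $9$, and during the product stage I must simultaneously carry several signed quantities (the tail, $\Sigma$, and $x - f_1$) through each ReLU layer, which costs $2$ units apiece via the identity trick. A careful layer-by-layer count shows that the extraction stages use at most $9 + 2\cdot 2 = 13$ units (applying the width-9 extractor while carrying two signed coordinates through Lemma \ref{select-coordinates-lemma}), and the product stage uses at most $9$ units per layer, so the whole construction fits inside $\Upsilon^{15,4\alpha+16}(\mathbb{R}^3,\mathbb{R}^3)$ as claimed.
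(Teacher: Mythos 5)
Your construction is correct and follows the same overall strategy as the paper's: both peel off $f_1$ with the bit-extraction network $f_{n,\alpha}$ from Proposition \ref{bit-extractor-network-proposition}, both realize the integer Dirac delta as a ReLU tent function, and both rely on Lemmas \ref{composition-lemma} and \ref{select-coordinates-lemma} for the coordinate bookkeeping. Where you diverge is in the final assembly of $\Sigma + t_1\delta(x-f_1)$. The paper never forms $t_1$ explicitly; it extracts $b_1$ and $b_2$ one at a time and adds $\sigma(h(z_1)+b_1-1)$, then subtracts $\sigma(h(z_1)+b_2-1)$, silently using $t_1 = b_1 - b_2$ so that no product ever appears. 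You instead pull both bits out at once as the integer $I = 2b_1 + b_2$ with $f_{n,2}$, decode $t_1$ from $I$ by a three-piece piecewise-linear map, compute $\delta(x-f_1)$ separately, and then form the product $t_1\delta$ via the discrete identity
\begin{equation}
t_1\delta = \sigma(t_1 + \delta - 1) - \sigma(-t_1 + \delta - 1),
\end{equation}
which is exact on the relevant finite value set. I checked this identity on all six cases $(t_1,\delta)\in\{-1,0,1\}\times\{0,1\}$ and it holds, and the decoding $t_1 = -I + 6\sigma(I-3/2)$ is correct on $I\in\{0,1,2\}$ with the stated bit convention. Your version is marginally more compact, giving depth $4\alpha + 11$ versus the paper's $4\alpha + 16$, both comfortably within the statement's bound. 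Two small caveats: your width accounting is asserted rather than carried out, but a per-layer count (the extraction blocks cost $9 + 2\cdot 2 = 13$ by Lemma \ref{select-coordinates-lemma}; the three product layers need $8$, $8$, and $7$ units respectively, reusing $\sigma(z),\sigma(-z)$ to carry $x - f_1$) confirms $W\leq 13\leq 15$; and you should note explicitly that $z = x - f_1$ is always an integer because $x\in\mathbb{Z}$ and $f_1$ is an $\alpha$-bit integer, which is what makes the tent-function representation of $\delta$ exact.
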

Before proving this Lemma, let us complete the proof of Proposition \ref{small-l-1-small-reconstrution-proposition}. We set $\alpha = 1 + \lceil \log(N/M)\rceil$ and $\beta = 2S$, and compose the map in \eqref{first-decoding-1478} with $2S$ copies of the network given by Lemma \ref{algorithm-step-lemma-small-l1}. Then we finally compose with an affine map which selects the last coordinate. This gives a $g\in \Upsilon^{15,L}(\mathbb{R})$ with
\begin{equation}
\begin{split}
    L = 4\rho + 2S(4\alpha + 16) &= 4\lceil R/S\rceil + 8S(5+\lceil \log(N/M)\rceil)\\
    &\leq 8M/S + 8S(5+\lceil \log(N/M)\rceil) + 4,
\end{split}
\end{equation}
since $R \leq 2M$. 

When applied to an integer $n\in \{1,...,N\}$ with $j_k < n\leq j_{k+1}$, the map in \eqref{first-decoding-1478} sets the offset between $n$ and the index $j_k$ and the start of step $i_k$, outputs a number whose binary expansion contains the bits used from step $i_k$ to step $i_{k+1}-1$, and sets a running sum to $0$. 

Then the $2S$ copies of the network from Lemma \ref{algorithm-step-lemma-small-l1} implement algorithm \ref{decoding-algorithm} from step $i_k$ to step $i_{k+1}-1$. Note that if the number of steps is less than $2S$, the network pads with zero blocks $(f_i,t_i) = 0$, and these additional steps have no effect. Since by construction the entry $\textbf{x}_n$ is only modified during these steps, the running sum will now be equal to $\textbf{x}_n$. Finally, we select the last coordinate, which guarantees that $g(n) = \textbf{x}_n$.
    \begin{proof}[Proof of Lemma \ref{algorithm-step-lemma-small-l1}]
        We construct the desired network as follows. We use Lemma \ref{select-coordinates-lemma} to apply the bit extractor network $f_{n,\alpha}$ from Proposition \ref{bit-extractor-network-proposition} to the second component. Here we choose $n \geq \beta(\alpha+2)$, which is guaranteed to be larger than the length of the bit-string in the second component. This results in the map
\begin{equation}
	\begin{pmatrix}
            x\\
            0.f_1t_1\cdots f_{k}t_{k}\\
            \Sigma
            \end{pmatrix}\rightarrow\begin{pmatrix}
            x\\
            f_1\\
            0.t_1f_2t_2\cdots f_{k}t_{k}\\
            \Sigma
            \end{pmatrix}\in \Upsilon^{13,4\alpha}(\mathbb{R}^3,\mathbb{R}^4).
\end{equation}
Subtracting the second component from the first, this gives
\begin{equation}\label{first-part-small-l1-algorithm-step-lemma}
    	\begin{pmatrix}
            x\\
            0.f_1t_1\cdots f_{k}t_{k}\\
            \Sigma
            \end{pmatrix}\rightarrow\begin{pmatrix}
            x - f_1\\
            0.t_1f_2t_2\cdots f_{k}t_{k}\\
            \Sigma
            \end{pmatrix}\in \Upsilon^{13,4\alpha}(\mathbb{R}^3,\mathbb{R}^3),
\end{equation}
and completes the first part of the construction.

Next, we implement a network which extracts the two bits corresponding to $t$ and then adds $t$ to the third component iff the first component is $0$. Let $h(z)$ denote the continuous piecewise linear function
\begin{equation}\label{definition-of-h}
	h(z) = \begin{cases}
		0 & z\leq -1\\
		z+1 & -1 < z \leq 0\\
		1-z & 0 < z \leq 1\\
		0 & z > 1.
	\end{cases}
\end{equation}
For integer inputs, $h$ is simply the delta function, i.e. $h(z) = \delta(z)$ for $z\in \mathbb{Z}$, and by Proposition \ref{piecewise-linear-function-proposition} we have $h\in \Upsilon^{5,3}(\mathbb{R})$. We first apply an affine map which duplicates the first coordinate
\begin{equation}
	\begin{pmatrix}
            z_1\\
            z_2\\
            z_3
            \end{pmatrix}\rightarrow \begin{pmatrix}
            z_1\\
            z_1\\
            z_2\\
            z_3
            \end{pmatrix}\in \Upsilon^0(\mathbb{R}^3,\mathbb{R}^4).
\end{equation}
Then, we use Lemma \ref{select-coordinates-lemma} to apply $h$ to the second coordinate and apply the bit extractor network $f_{n,1}$ from Proposition \ref{bit-extractor-network-proposition} to the third component. As before, we choose $n \geq \beta(\alpha+2)$ which is guaranteed to be larger than the length of the bit-string in the second component. This gives (note that we write $b_1b_2$ for the two bits corresponding to $t_1$)
\begin{equation}
	\begin{pmatrix}
            z_1\\
            0.b_1b_2f_2t_2...f_{k}t_{k}\\
            z_3
            \end{pmatrix}\rightarrow\begin{pmatrix}
            z_1\\
            h(z_1)\\
            b_1\\
            0.b_2f_2t_2...f_{k}t_{k}\\
            z_3
            \end{pmatrix}\in \Upsilon^{15,7}(\mathbb{R}^3,\mathbb{R}^5).
\end{equation}
Now we compose this using Lemma \ref{composition-lemma} with the map
\begin{equation}
	\begin{pmatrix}
            z_1\\
            z_2\\
            z_3\\
            z_4\\
            z_5
            \end{pmatrix}\rightarrow\begin{pmatrix}
            z_1\\
            z_2 + z_3-1\\
            z_4\\
            z_5
            \end{pmatrix}\rightarrow\begin{pmatrix}
            z_1\\
            \sigma(z_2+z_3-1)\\
            z_4\\
            z_5
            \end{pmatrix}\rightarrow\begin{pmatrix}
            z_1\\
            z_4\\
            z_5 + \sigma(z_2+z_3-1)
            \end{pmatrix}\in\Upsilon^{7,1}(\mathbb{R}^5,\mathbb{R}^3).
\end{equation}
Here the first and last maps in the composition are affine and the middle map is in $\Upsilon^{7,1}(\mathbb{R}^4,\mathbb{R}^4)$ by Lemma \ref{select-coordinates-lemma}. This gives
\begin{equation}\label{positive-bit}
		\begin{pmatrix}
            z_1\\
            0.b_1b_2f_2t_2...f_{k}t_{k}\\
            z_3
            \end{pmatrix}\rightarrow\begin{pmatrix}
            z_1\\
            0.b_2f_2t_2...f_{k}t_{k}\\
            z_3 + \sigma(h(z_1) + b_1-1)\\
            \end{pmatrix}\in \Upsilon^{15,8}(\mathbb{R}^3,\mathbb{R}^3).
\end{equation}
Notice that $\sigma(h(z_1) + b_1-1)$ equals $1$ precisely when $z_1 = 0$ and $b_1 = 1$ and equals zero otherwise (for integral $z_1$). 

In an analogous manner, we get
\begin{equation}\label{negative-bit}
    \begin{pmatrix}
            z_1\\
            0.b_2f_2t_2...f_{k}t_{k}\\
            z_3
            \end{pmatrix}\rightarrow\begin{pmatrix}
            z_1\\
            0.f_2t_2...f_{k}t_{k}\\
            z_3 - \sigma(h(z_1) + b_2-1)\\
            \end{pmatrix}\in \Upsilon^{15,8}(\mathbb{R}^3,\mathbb{R}^3).
\end{equation}
Composing the networks in \eqref{positive-bit} and \eqref{negative-bit} will extract $t_1\in \{0,\pm 1\}$ (recall the encoding $0=00$, $1 = 10$ and $-1 = 01$) and add $t_1$ to the last coordinate iff the first coordinate is $0$. Composing this with the network in \eqref{first-part-small-l1-algorithm-step-lemma} gives a network $g\in \Upsilon^{15,4\alpha + 16}(\mathbb{R}^3,\mathbb{R}^3)$ as stated in the Lemma.
    \end{proof}

\end{proof}

Next, we consider the case $M > N$, which is somewhat complicated by the fact that the threshold parameter $S$ and the spacing of the blocks are no longer equal in this case. The key construction is contained in the following Proposition.

\begin{proposition}\label{large-l-1-small-reconstrution-proposition}
    Let $M > N$ and suppose that $\normalfont{\textbf{x}}\in \mathbb{Z}^N$ and satisfies $\|\normalfont{\textbf{x}}\|_{\ell^l} \leq M$ and $\|\normalfont{\textbf{x}}\|_{\ell^\infty} < S$. Then there exists a $g\in \Upsilon^{15,L}(\mathbb{R})$ such that $g(n) = \normalfont{\textbf{x}}_n$ for $n=1,...,N$ with
    \begin{equation}
        L \leq 8M/S + 8(SN/M+1)(4 + \lceil \log(M/N)\rceil) + 4.
    \end{equation}
\end{proposition}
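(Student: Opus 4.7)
The plan is to mirror the proof of Proposition \ref{small-l-1-small-reconstrution-proposition} almost verbatim, with three essential modifications to account for the different structure of Algorithm \ref{encoding-large-l1-norm}: the order $t_i f_i$ of each block, the different bit-length of the $t_i$ and $f_i$ fields, and the different length of maximal runs of $f_i = 0$. First I run Algorithm \ref{encoding-large-l1-norm} on $\textbf{x}$ to obtain $E(\textbf{x}) = t_1 f_1 \cdots t_R f_R$ with $R \leq 2N$, and I decompose $F := \{i : f_i = 0\} = \bigcup_m [B_m, U_m]$ into maximal intervals. A maximal interval $[B_m, U_m]$ consists of consecutive decoding steps staying at the same index $j$, each of which reduces $|\textbf{r}_j|$ by $\lceil M/N\rceil$; since $\|\textbf{x}\|_{\ell^\infty} < S$, the length of such an interval is bounded by $\lceil S/\lceil M/N\rceil\rceil \leq \lceil SN/M\rceil$. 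Setting $T := \lceil SN/M\rceil + 1$, $\rho := \lceil R/T\rceil$, and defining the break points $i_0 < i_1 < \cdots < i_\rho$ exactly as in the previous proof (with $i_\rho = R+1$ and $i_k$ equal to $1+kT$ if that index is outside $F$ and to $B_m - 1$ if $1+kT \in [B_m, U_m]$), I obtain $f_{i_k}\neq 0$ for each $k$, gaps $i_k - i_{k-1} < 2T$, and $\rho \leq 2M/S + 1$.

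Let $j_k$ be the value of the index $j$ at the start of step $i_k$ in the decoding Algorithm \ref{decoding-algorithm} and set $j_\rho := N$. Since $f_{i_k}\neq 0$, the sequence $(j_k)$ is strictly increasing, and all entries $\textbf{x}_n$ with $j_k < n \leq j_{k+1}$ are modified only during steps $i_k, \ldots, i_{k+1} - 1$. I define the piecewise constant (hence piecewise linear) functions $J(n) = j_k$ and $R(n) = 0.t_{i_k}f_{i_k}\cdots t_{i_{k+1}-1}f_{i_{k+1}-1}$ for $j_k < n \leq j_{k+1}$; each has at most $2\rho+1$ linear pieces, so by Proposition \ref{piecewise-linear-function-proposition}, $J, R \in \Upsilon^{5, 2\rho}(\mathbb{R})$. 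Using Lemmas \ref{composition-lemma} and \ref{select-coordinates-lemma} exactly as around equation \eqref{first-decoding-1478}, I assemble the map $n \mapsto (n - J(n), R(n), 0)$ of depth $4\rho$.

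The key new ingredient is the analog of Lemma \ref{algorithm-step-lemma-small-l1}: a network of width $15$ and depth $4(4 + \lceil\log(M/N)\rceil)$ implementing
\[
\begin{pmatrix} x\\ 0.t_1 f_1 \cdots t_k f_k\\ \Sigma\end{pmatrix} \longmapsto \begin{pmatrix} x - f_1\\ 0.t_2 f_2 \cdots t_k f_k\\ \Sigma + t_1\,\delta(x - f_1)\end{pmatrix},
\]
where each $t_i$ is encoded as one sign bit $\epsilon_i$ followed by $1 + \lceil\log(M/N)\rceil$ magnitude bits representing $|t_i|\in\{0,\ldots,\lceil M/N\rceil\}$. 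Using Proposition \ref{bit-extractor-network-proposition} (via Lemma \ref{select-coordinates-lemma}) I peel off $\epsilon_1$, then $|t_1|$ as an integer, then $f_1$ in sequence, compute $h(x - f_1)$ with $h$ as in equation \eqref{definition-of-h}, and then gate the contribution by the ReLU identity
\[
t_1 \,\delta(x-f_1) = \sigma\!\bigl(|t_1| + C(h(x-f_1) + \epsilon_1 - 2)\bigr) - \sigma\!\bigl(|t_1| + C(h(x-f_1) - \epsilon_1 - 1)\bigr)
\]
with $C = \lceil M/N\rceil + 1$. I expect this gating step to be the main obstacle, since unlike Lemma \ref{algorithm-step-lemma-small-l1}, where $t_i\in\{\pm 1, 0\}$ could be handled as two independent single-bit conditional increments, here $t_1$ is a multi-bit signed integer that must be combined with a Boolean delta in depth independent of $\log(M/N)$; the key observation is that the choice of $C$ above makes the two ReLU arguments uniformly negative whenever $h(x-f_1) = 0$ or when $\epsilon_1$ disagrees with the intended sign, so the identity holds. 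Composing the assembly of the previous paragraph with $2T$ copies of this step lemma (padding any block shorter than $2T$ steps by trailing all-zero blocks $(t_i,f_i) = 0$, which contribute nothing to $\Sigma$) and then selecting the third coordinate yields $g \in \Upsilon^{15, L}(\mathbb{R})$ with
\[
L \leq 4\rho + 2T\cdot 4\bigl(4 + \lceil\log(M/N)\rceil\bigr) \leq 8M/S + 8(SN/M + 1)\bigl(4 + \lceil\log(M/N)\rceil\bigr) + 4,
\]
as required.
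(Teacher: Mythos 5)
Your proposal follows the paper's proof quite closely in structure: run Algorithm~\ref{encoding-large-l1-norm}, decompose the steps into intervals $[B_m,U_m]$ of $f_i$-runs, bound their length by roughly $SN/M$ using the $\ell^\infty$ bound, choose break points $i_k$ spaced roughly $T\approx SN/M$ apart, build piecewise linear $J$ and $R$ maps, and compose with $2T$ copies of a step lemma. The one genuinely distinct ingredient is your single \emph{ReLU gating identity}
\[
t_1\,\delta(x-f_1)=\sigma\bigl(|t_1|+C(h(x-f_1)+\epsilon_1-2)\bigr)-\sigma\bigl(|t_1|+C(h(x-f_1)-\epsilon_1-1)\bigr),\quad C=\lceil M/N\rceil+1,
\]
which replaces the paper's two sequential conditional increments (equations~\eqref{first-network-1991} and~\eqref{second-network-2000}). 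I checked the case analysis and the formula is correct, except the sign convention is reversed: with the encoding algorithm's $b_1=0$ for positive sign, your identity outputs $-|t_1|$ at $\epsilon_1=b_1=0$; you would need to swap the two $\sigma$-terms or set $\epsilon_1=1-b_1$.

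There are two more substantive issues with the details. First, you read the bits in the order $t_i f_i$ (as in the displayed definition of $E$, which appears to be a typo in the paper --- both the decoding algorithm and the paper's Lemma~\ref{large-l-1-small-reconstrution-lemma} use $f_i t_i$). With the order $t_i f_i$, you peel off $\epsilon_1$, then $|t_1|$, then $f_1$. At the moment you extract $f_1$ you must carry four other quantities ($x,\epsilon_1,|t_1|,\Sigma$); using the bit extractor $f_{n,1}\in\Upsilon^{9,4}$ through Lemma~\ref{select-coordinates-lemma} gives width $9+2\cdot 4=17>15$. This is fixable by using the narrower one-bit maps $b_\epsilon,g_\epsilon\in\Upsilon^{5,2}$ from the proof of Proposition~\ref{bit-extractor-network-proposition} instead of $f_{n,1}$ for that last extraction, or by extracting the full $\alpha$-bit block $t_1$ at once and splitting the sign bit afterwards, but as written your step lemma does not fit in width $15$. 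The paper sidesteps this entirely by extracting the single bit $f_1$ first (fewer quantities in memory), which is why the $f_i t_i$ order is used there.

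Second, you set $T=\lceil SN/M\rceil+1$ (padding by one to guarantee strict inequality $U_m-B_m+1<T$ and hence strictly increasing $i_k$). That is arguably cleaner than the paper's $T=\lceil SN/M\rceil$, but it breaks your final inequality: $8T(4+\lceil\log(M/N)\rceil)\le 8(SN/M+1)(4+\lceil\log(M/N)\rceil)$ requires $T\le SN/M+1$, i.e.\ $\lceil SN/M\rceil\le SN/M$, which fails whenever $SN/M$ is not an integer. With your $T$ the honest bound is $8M/S+8(SN/M+2)(4+\lceil\log(M/N)\rceil)+4$. To get the stated constant you must take $T=\lceil SN/M\rceil$ as in the paper (and argue the $i_k$ remain distinct more carefully, e.g.\ via the tight bound $U_m-B_m\le K<SN/M$ on the number of full-decrement steps).

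In short, the high-level architecture is right and the compact gating identity is a nice simplification, but you have a width overflow in the step lemma and a choice of $T$ that does not reproduce the stated constant; both need repair to actually establish the proposition as stated.
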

Utilizing this Proposition, we complete the proof of Theorem \ref{sparse-approximation-theorem} in the case $M > N$. We apply Proposition \ref{large-l-1-small-reconstrution-proposition} to $\textbf{x}^s$ and use Proposition \ref{summing-networks} to add the network to the network representing $\textbf{x}^B$ to get a network $g\in \Upsilon^{17,L}(\mathbb{R})$ representing $\textbf{x}$ with
\begin{equation}
    L\leq 11M/S +  8(SN/M+1)(4 + \lceil \log(M/N)\rceil) + 4.
\end{equation}
Finally, we optimize in $S$, resulting in a value
\begin{equation}
    S = \frac{M\sqrt{4+\lceil \log(M/N)\rceil}}{\sqrt{N}}
\end{equation}
to get $L \leq C\sqrt{N(1+\log(M/N)}$ as desired.
\begin{proof}[Proof of Proposition \ref{large-l-1-small-reconstrution-proposition}]
    The proof proceeds in a very similar manner to the proof of Proposition \ref{small-l-1-small-reconstrution-proposition} and we only indicate the differences here.

    We begin with the same set $F$ and its decomposition into intervals $[B_m,U_m]$, except that $f_1t_1\cdots f_Rt_R$ is now the output of the encoding algorithm \ref{encoding-large-l1-norm}.

    Our bound on the block length becomes $U_m - B_m + 1 \leq SN/M$. This holds since the encoding algorithm stays at the same index for all steps $i\in [B_m,U_m]$, and thus this index in decremented by an amount $\lceil M/N\rceil$ a total of $U_m - B_m + 1$ times. The bound on the $\ell_\infty$-norm implies that $(M/N)(U_m - B_m + 1) < S$, which gives the desired bound.

    Thus, in this case we set $T = \lceil SN/M\rceil$ and $\rho = \lceil R/T\rceil$ and consider steps $i_0,...,i_\rho$ defined by $i_{\rho} = R+1$ (the end of the algorithm) and 
    \begin{equation}
        i_k = \begin{cases}
            1+kT & 1+kT\notin F\\
            B_m - 1 & 1+kT\in [B_m,U_m], 
        \end{cases}
    \end{equation}
    for $k=0,...,\rho - 1$.

    We now proceed with the same argument as in Proposition \ref{small-l-1-small-reconstrution-proposition}, except that the bound on $U_m - B_m + 1 < T$ implies that all block lengths are bounded by $2T$. The proof is finally completed with the following variant of Lemma \ref{algorithm-step-lemma-small-l1}, which implements a step of the decoding algorithm \ref{decoding-algorithm} with the values $f_i$ and $t_i$ encoded as they are for $M > N$.
    \begin{lemma}\label{large-l-1-small-reconstrution-lemma}
        Given positive integers $\alpha$ and $\beta$ there exists a network $g\in \Upsilon^{15,4\alpha + 8}(\mathbb{R}^3,\mathbb{R}^3)$ such that
        \begin{equation}
            g:\begin{pmatrix}
            x\\
            0.f_{1}t_{1}\cdots f_{k}t_{k}\\
            \Sigma
            \end{pmatrix}\rightarrow \begin{pmatrix}
            x - f_1\\
            0.f_{2}t_{2}\cdots f_{k}t_{k}\\
            \Sigma + t_1\delta(x - f_1)
            \end{pmatrix}
        \end{equation}
        whenever $x\in \mathbb{Z}$, $k\leq \beta$ and $\text{\normalfont len}(t_i) = \alpha$. Here $f_i\in \{0,1\}$ are single bits, $t_i\in \mathbb{Z}$ is encoded via binary expansion with a single bit giving its sign and $\text{\normalfont len}(t_i)$ is the length of this expansion, $\Sigma$ denotes a running sum, and $\delta$ is the integer Dirac delta defined by
        \begin{equation}
            \delta(z) = \begin{cases}
                1 & z = 0\\
                0 & z\neq 0
            \end{cases}
        \end{equation}
        for integer inputs $z$.
    \end{lemma}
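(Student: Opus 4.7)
The plan is to adapt the proof of Lemma \ref{algorithm-step-lemma-small-l1} to the new encoding, in which $f_1\in\{0,1\}$ is a single bit and $t_1$ is an $\alpha$-bit signed integer (encoded as a sign bit $s\in\{0,1\}$ followed by $\alpha-1$ magnitude bits $m=|t_1|$, with $0\le m<K:=2^{\alpha-1}$). The construction proceeds by composing (via Lemma \ref{composition-lemma}) a short sequence of sub-networks, each built from the bit-extractor of Proposition \ref{bit-extractor-network-proposition}, the piecewise-linear delta function $h$ from \eqref{definition-of-h} (which lies in $\Upsilon^{5,3}(\mathbb{R})$), and identity pass-throughs from Lemma \ref{select-coordinates-lemma}.

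First, apply the bit-extractor $f_{n,1}$ (with $n\ge\beta(\alpha+1)$, larger than the full bit-string length) to the second coordinate to peel off $f_1$, then affinely replace $x$ by $x-f_1$; this takes depth $4$ and width $13$. Next, apply $f_{n,1}$ again to the residual bit-string to extract the sign bit $s$ (depth $4$, width $13$), followed by $f_{n,\alpha-1}$ to extract the magnitude $m$ (depth $4(\alpha-1)$, width $15$), leaving the tail $0.f_2 t_2\cdots f_k t_k$ in one coordinate. Then duplicate the $x-f_1$ coordinate with an affine map and apply $h$ to the copy (depth $3$, width $15$), producing $h(x-f_1)\in\{0,1\}$ for integer inputs.

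Finally, in a single depth-$1$ layer, compute the exact update $\Sigma\mapsto\Sigma+t_1\,\delta(x-f_1)$ using the ReLU identity
\begin{equation}
t_1\,h(x-f_1) \;=\; \sigma\bigl(m-K(1-h(x-f_1))-Ks\bigr)\;-\;\sigma\bigl(m-K(1-h(x-f_1))-K(1-s)\bigr),
\end{equation}
which holds whenever $s,h(x-f_1)\in\{0,1\}$ and $0\le m<K$ (direct case-check in the four combinations of $s$ and $h(x-f_1)$). Together with identity pass-throughs for $x-f_1$, the residual bit-string, and $\Sigma$, this stage fits in width $8$, and an affine output map assembles the required triple. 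Summing depths gives $4+4+4(\alpha-1)+3+1=4\alpha+8$, and the maximum width across the stages is $15$, matching the lemma.

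The main obstacle is realizing the multiplication $t_1\cdot h(x-f_1)$ \emph{exactly} using ReLU, rather than using the approximate product network of Proposition \ref{produt-network-proposition}; this is what allowed the $\{0,\pm 1\}$-valued $t_i$ of Lemma \ref{algorithm-step-lemma-small-l1} to be absorbed into two conditional additions, but now $|t_1|$ can be as large as $K-1$. The displayed identity resolves this by exploiting that both $h(x-f_1)$ and $s$ are $\{0,1\}$-valued on integer inputs and that $|t_1|<K$ is known a priori, so a pair of appropriately shifted ReLU thresholds performs the sign-aware conditional addition without any approximation error.
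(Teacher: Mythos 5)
Your construction is correct and follows the same overall blueprint as the paper's: peel off the single bit $f_1$ with a bit extractor and subtract; peel off the sign bit $s$ and the $(\alpha-1)$-bit magnitude $m$ of $t_1$; form the integer delta $h(x-f_1)$; and then perform a sign-aware conditional addition. Where you differ is in the last stage. The paper realizes the conditional addition with two separate depth-$4$ subnetworks, equations \eqref{first-network-1991} and \eqref{second-network-2000}, one handling $t_1>0$ and one handling $t_1<0$, and each of these \emph{recomputes} $h(z_1)$ from scratch (depth $3$ plus one $\sigma$). You compute $h(x-f_1)$ once (depth $3$) and realize the full signed conditional addition in a single ReLU layer using the exact identity
\begin{equation}
t_1\,h(x-f_1) \;=\; \sigma\bigl(m - K(1-h(x-f_1)) - Ks\bigr) \;-\; \sigma\bigl(m - K(1-h(x-f_1)) - K(1-s)\bigr),
\end{equation}
which you correctly verify case-by-case for $(s,h)\in\{0,1\}^2$ under the a priori bound $0\le m < K = 2^{\alpha-1}$ (this bound is automatic since the magnitude is stored in $\alpha-1$ bits). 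This buys a savings of $4$ in depth over the paper's chain: the paper's construction as written (first extractor, depth $4$; double extraction \eqref{bits-extracted-network-large-1910}, depth $4\alpha$; \eqref{first-network-1991} and \eqref{second-network-2000}, depth $4+4$) actually totals $4\alpha+12$, slightly exceeding the lemma's stated $4\alpha+8$, whereas your chain ($4 + 4 + 4(\alpha-1) + 3 + 1$) hits $4\alpha+8$ exactly. The width bookkeeping ($\max\{13,13,15,15,8\}=15$) is also consistent, so your route is a correct and slightly cleaner realization of the lemma.
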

    Given this lemma, we complete the proof as before, setting $\alpha = 2+\lceil \log(M/N)\rceil$ and $\beta = 2T$ and composing the network implementing the maps $J$ and $R$ with $2T$ copies of the network from Lemma \ref{large-l-1-small-reconstrution-lemma}. This gives a network $g\in \Upsilon^{15,L}(\mathbb{R})$ with
    \begin{equation}
    \begin{split}
        L \leq 4\rho + 2T(4\alpha+8) &= 4\lceil R/T\rceil + 8T(4 + \lceil \log(M/N)\rceil)\\
        &\leq 8N/T + 8T(4 + \lceil \log(M/N)\rceil) + 4\\
        &\leq 8M/S + 8(SN/M+1)(4 + \lceil \log(M/N)\rceil) + 4,
    \end{split}
    \end{equation}
    since $R\leq 2N$ and $T = \lceil SN/M\rceil$.
    \begin{proof}[Proof of Lemma \ref{large-l-1-small-reconstrution-lemma}]
        We use Lemma \ref{select-coordinates-lemma} to apply the bit extractor network $f_{n,1}$ from Proposition \ref{bit-extractor-network-proposition} to the second component. Here we choose $n \geq \beta(\alpha+1)$, which is guaranteed to be larger than the length of the bit-string in the second component. Then we subtract the second component from the first. This results in the map
\begin{equation}\label{first-part-large-l1-algorithm-step-lemma}
    	\begin{pmatrix}
            x\\
            0.f_1t_1\cdots f_{k}t_{k}\\
            \Sigma
            \end{pmatrix}\rightarrow\begin{pmatrix}
            x - f_1\\
            0.t_1f_2t_2\cdots f_{k}t_{k}\\
            \Sigma
            \end{pmatrix}\in \Upsilon^{13,4}(\mathbb{R}^3,\mathbb{R}^3),
\end{equation}
and completes the first part of the construction.

Now we wish to extract the integer $t_1$ and add it to $\Sigma$ iff the first coordinate (which is an integer) is $0$. We do this by using Lemma \ref{select-coordinates-lemma} to apply the bit extractor network $f_{n,1}$ to the second coordinate and then apply $f_{n,\alpha - 1}$ to the third coordinate of the result to get
\begin{equation}\label{bits-extracted-network-large-1910}
	\begin{pmatrix}
            z_1\\
            0.b_1b_2...b_\alpha f_2t_2...f_{k}t_{k}\\
            z_3
            \end{pmatrix}\rightarrow\begin{pmatrix}
            z_1\\
            b_1\\
            b_2...b_\alpha\\
            0.f_2t_2...f_{k}t_{k}\\
            z_3
            \end{pmatrix}\in \Upsilon^{15,4\alpha}(\mathbb{R}^3,\mathbb{R}^5),
\end{equation}
where we have written $b_1b_2...b_\alpha$ for the bits of $t_1$.

Next, consider the following sequence of compositions, where $h$ is the function defined in \eqref{definition-of-h},
\begin{equation}\label{first-network-1991}
\begin{split}
    	\begin{pmatrix}
            z_1\\
            z_2\\
            z_3\\
            z_4\\
            z_5
            \end{pmatrix}\rightarrow
            \begin{pmatrix}
            z_1\\
            z_1\\
            z_2\\
            z_3\\
            z_4\\
            z_5
            \end{pmatrix}\rightarrow
            \begin{pmatrix}
            z_1\\
            h(z_1)\\
            z_2\\
            z_3\\
            z_4\\
            z_5
            \end{pmatrix}\rightarrow
            \begin{pmatrix}
            z_1\\
            z_2\\
            z_3\\
            z_3 - 2^\alpha(1 - h(z_1) + z_2)\\
            z_4\\
            z_5
            \end{pmatrix}
            \rightarrow&\\
            \rightarrow
            \begin{pmatrix}
            z_1\\
            z_2\\
            z_3\\
            \sigma(z_3 - 2^\alpha(1 - h(z_1) + z_2))\\
            z_4\\
            z_5
            \end{pmatrix}\rightarrow&\\
            \rightarrow
            \begin{pmatrix}
            z_1\\
            z_2\\
            z_3\\
            z_4\\
            z_5 + \sigma(z_3 - 2^\alpha(1 - h(z_1) + z_2))
            \end{pmatrix}&\in \Upsilon^{15,4}(\mathbb{R}^5,\mathbb{R}^5).
\end{split}
\end{equation}
Using a sequence of applications of Lemmas \ref{select-coordinates-lemma} and \ref{composition-lemma}, we obtain that this map can be implemented by a network in $ \Upsilon^{15,4}(\mathbb{R}^5,\mathbb{R}^5)$.

Note that when $z_1\in \mathbb{Z}$ and $z_2\in \{0,1\}$, we have that (recall that $h(z) = \delta(z)$ for integer $z$)
\begin{equation}
    (1 - h(z_1) + z_2) = \begin{cases}
        0 & z_1 = 0~\text{and}~z_2 = 0\\
        1 & z_1 \neq 0~\text{and}~z_2 = 0\\
        1 & z_1 = 0~\text{and}~z_2 = 1\\
        2 & z_1 \neq 0~\text{and}~z_2 = 1.
    \end{cases}
\end{equation}
If we also have that $z_3\in \{0,...,2^\alpha\}$, then it follows that
\begin{equation}
    \sigma(z_3 - 2^\alpha(1 - h(z_1) + z_2)) = \begin{cases}
        z_3 & z_1 = 0~\text{and}~z_2 = 0\\
        0 & \text{otherwise}.
    \end{cases}
\end{equation}
Thus, if we compose the network in \eqref{bits-extracted-network-large-1910} with the network in \eqref{first-network-1991}, we will add the number $b_2...b_\alpha$ (which is less than $2^\alpha$) to the last coordinate iff $z_1 = b_1 = 0$. As $b_1 = 0$ to indicate that $t_1$ is positive and $b_2...b_\alpha$ contain the value of $t_1$, his handles the case where $t_1$ is positive and has no effect when $t_1$ is negative.

Next, we construct a network which handles the negative part of $t_1$. This is given by the following composition
\begin{equation}\label{second-network-2000}
\begin{split}
    	\begin{pmatrix}
            z_1\\
            z_2\\
            z_3\\
            z_4\\
            z_5
            \end{pmatrix}\rightarrow
            \begin{pmatrix}
            z_1\\
            z_1\\
            z_2\\
            z_3\\
            z_4\\
            z_5
            \end{pmatrix}\rightarrow
            \begin{pmatrix}
            z_1\\
            h(z_1)\\
            z_2\\
            z_3\\
            z_4\\
            z_5
            \end{pmatrix}\rightarrow
            \begin{pmatrix}
            z_1\\
            z_2\\
            z_3\\
            z_3 - 2^\alpha(2 - h(z_1) - z_2)\\
            z_4\\
            z_5
            \end{pmatrix}
            \rightarrow&\\
            \rightarrow
            \begin{pmatrix}
            z_1\\
            z_2\\
            z_3\\
            \sigma(z_3 - 2^\alpha(2 - h(z_1) - z_2))\\
            z_4\\
            z_5
            \end{pmatrix}\rightarrow&\\
            \rightarrow
            \begin{pmatrix}
            z_1\\
            z_2\\
            z_3\\
            z_4\\
            z_5 - \sigma(z_3 - 2^\alpha(2 - h(z_1) - z_2))
            \end{pmatrix}&\in \Upsilon^{15,4}(\mathbb{R}^5,\mathbb{R}^5).
\end{split}
\end{equation}
When $z_1\in \mathbb{Z}$ and $z_2\in \{0,1\}$, we have that
\begin{equation}
    (2 - h(z_1) - z_2) = \begin{cases}
        1 & z_1 = 0~\text{and}~z_2 = 0\\
        2 & z_1 \neq 0~\text{and}~z_2 = 0\\
        0 & z_1 = 0~\text{and}~z_2 = 1\\
        1 & z_1 \neq 0~\text{and}~z_2 = 1.
    \end{cases}
\end{equation}
So, if we compose the network in \eqref{bits-extracted-network-large-1910} with the network in \eqref{second-network-2000}, we will subtract the number $b_2...b_\alpha$ (which is less than $2^\alpha$) from the last coordinate iff $z_1 = 0$ and $b_1 = 1$. As $b_1 = 1$ to indicate that $t_1$ is negative and $b_2...b_\alpha$ contain the value of $t_1$, this handles the case where $t_1$ is negative and has no effect when $t_1$ is positive.

We obtain the final network $g$ by successively composing the network in \eqref{bits-extracted-network-large-1910} with the networks in \eqref{first-network-1991} and \eqref{second-network-2000} and then dropping the second and third components.
    \end{proof}
\end{proof}

\end{proof}

\section{Optimal Approximation of Sobolev Functions Using Deep ReLU Networks}\label{sobolev-approximation-deep-networks-section}
In this section, we give the main construction and the proof of Theorems \ref{deep-network-upper-bound-theorem} and \ref{deep-network-upper-bound-theorem-besov}. A key component of the proof is the approximation of piecewise polynomial functions using deep ReLU neural networks. To describe this, we first introduce some notation. 

Throughout this section, unless otherwise specified, let $b \geq 2$ be a fixed integer. To avoid excessively cumbersome notation, we suppress the dependence on $b$ in the following notation. Let $l \geq 0$ be an integer and consider the $b$-adic decomposition of the cube $\Omega = [0,1)^d$ (note that by removing a zero-measure set it suffices to consider this half-open cube in the proof) at level $l$ given by
\begin{equation}\label{multiscale-decomposition-l}
    \Omega = \bigcup_{\textbf{i}\in I_l} \Omega_{\textbf{i}}^l,
\end{equation}
where the index $\textbf{i}$ lies in the index set $I_l := \{0,...,b^l-1\}^d$, and $\Omega^l_\textbf{i}$ is defined by
\begin{equation}\label{subcubes-definition}
    \Omega^l_{\textbf{i}} = \prod_{j=1}^d [b^{-l}\textbf{i}_j,b^{-l}(\textbf{i}_j + 1)).
\end{equation}
Note that for each $l$, the $b^{dl}$ subcubes $\Omega^l_{\textbf{i}}$ form a partition of the original cube $\Omega$. For an integer $k \geq 0$, we let $P_k$ denote the space of polynomials of degree at most $k$ and consider the space
\begin{equation}
    \mathcal{P}^l_k = \left\{f:\Omega\rightarrow \mathbb{R},~f_{\Omega^l_{\textbf{i}}}\in P_k~\text{for all $\textbf{i}\in I_l$}\right\}
\end{equation}
of (non-conforming) piecewise polynomials subordinate to the partition \eqref{multiscale-decomposition-l}. The space $\mathcal{P}^l_k$ has dimension $\binom{d+k}{k}b^{dl}$ and a natural ($L_\infty$-normalized) basis
\begin{equation}
    \rho_{l,\textbf{i}}^\alpha(x) = \begin{cases}
        \prod_{j=1}^d (b^lx_j - \textbf{i}_j)^{\alpha_j} & x\in \Omega^l_{\textbf{i}}\\
        0 & x\notin \Omega^l_{\textbf{i}}
    \end{cases}
\end{equation}
indexed by $\textbf{i}\in I_l$ and $\alpha$ a $d$-dimensional multi-index with $|\alpha|\leq k$.

In our construction, we will approximate piecewise polynomial functions from $\mathcal{P}_k^l$ by deep ReLU neural networks. However, since a deep ReLU network can only represent a piecewise continuous function, this approximation will not be over the full cube $\Omega$. Rather, we will need to remove an arbitrarily small region from $\Omega$. This idea is from the method in \cite{shen2022optimal}, where this region was called the trifling region. Given $\epsilon > 0$ we define sets
\begin{equation}\label{shrunk-subcube-definition}
    \Omega^l_{\textbf{i},\epsilon} = \prod_{j=1}^d \begin{cases}
        [b^{-l}\textbf{i}_j,b^{-l}(\textbf{i}_j + 1) - \epsilon) & \textbf{i}_j < b^l-1\\
        [b^{-l}\textbf{i}_j,b^{-l}(\textbf{i}_j + 1)) & \textbf{i}_j = b^l-1,
    \end{cases}
\end{equation}
which are slightly shrunk sub-cubes (except at one edge) from \eqref{subcubes-definition}. We then define the good region to be
\begin{equation}
    \Omega_{l,\epsilon} := \bigcup_{\textbf{i}\in I_l} \Omega_{\textbf{i},\epsilon}^l.
\end{equation}
Next, we will show how to approximate piecewise polynomials from $\mathcal{P}_k^l$ on the set $\Omega_{l,\epsilon}$. For this, we begin with the following Lemma, which first appears in \cite{shen2022optimal}. This Lemma is essentially a minor modification of the bit-extraction technique used to prove Proposition \ref{bit-extractor-network-proposition}. We give a detailed proof for the reader's convenience in Appendix \ref{bit-extraction-appendix}.
\begin{lemma}\label{mapping-to-integers-lemma}
    Let $l \geq 0$ be an integer and $0 < \epsilon < b^{-l}$. Then there exists a deep ReLU neural network $q_d\in \Upsilon^{9d,2(b-1)l}(\mathbb{R}^d)$ such that
    \begin{equation}
        q_d(\Omega^l_{\textbf{\upshape i},\epsilon}) = \text{\upshape ind}(\textbf{\upshape i}) := \sum_{j=1}^db^{l(j-1)}\textbf{\upshape i}_j.
    \end{equation}
\end{lemma}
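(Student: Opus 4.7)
The plan is to construct $q_d$ by extracting each one-dimensional index $\textbf{i}_j$ from $x_j$ in parallel and then combining via the affine map $\mathrm{ind}(\textbf{i}) = \sum_{j=1}^d b^{l(j-1)} \textbf{i}_j$. The core building block is a one-dimensional continuous piecewise linear ``base-$b$ shift'' $\phi_b:\mathbb{R}\to\mathbb{R}$ specified by $\phi_b(y)=by-k$ on the good interval $[kb^{-1},(k+1)b^{-1}-b^{l-1}\epsilon)$ for $k=0,\dots,b-1$ and linearly interpolating across each small gap $[(k+1)b^{-1}-b^{l-1}\epsilon,(k+1)b^{-1})$. The hypothesis $\epsilon<b^{-l}$ gives $b^{l-1}\epsilon<b^{-1}$, so the good intervals are non-degenerate and the gaps are disjoint. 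Hence $\phi_b$ has $2b-1$ linear pieces, and Proposition~\ref{piecewise-linear-function-proposition} yields $\phi_b\in\Upsilon^{5,2(b-1)}(\mathbb{R})$.

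Next I would package a single iteration as the map $T:(y,s)\mapsto(\phi_b(y),\,bs+by-\phi_b(y))$ on $\mathbb{R}^2$. To implement it I preserve an auxiliary copy of $y$ (equivalently, of $by$) alongside $s$ while $\phi_b$ acts on a separate slot, so that the digit $by-\phi_b(y)$ remains accessible to a single post-affine update. Using Lemma~\ref{select-coordinates-lemma} to apply $\phi_b$ (width $5$) while passing through $2$ extra coordinates gives width $5+2\cdot 2=9$; a final affine update produces the pair $(\phi_b(y),\,bs+by-\phi_b(y))$ without adding depth (Lemma~\ref{composition-lemma}). Thus $T\in\Upsilon^{9,2(b-1)}(\mathbb{R}^2,\mathbb{R}^2)$, and composing $l$ copies (Lemma~\ref{composition-lemma}) yields an $l$-fold iterate $T^{(l)}\in\Upsilon^{9,2(b-1)l}(\mathbb{R}^2,\mathbb{R}^2)$.

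An induction on $m$ then shows that for $x_j\in[b^{-l}\textbf{i}_j,b^{-l}(\textbf{i}_j+1)-\epsilon)$, writing the base-$b$ expansion $\textbf{i}_j=\sum_{t=0}^{l-1}d_t b^t$, the state $(y_m,s_m)$ produced by $m$ iterations of $T$ starting from $(x_j,0)$ satisfies $y_m\in[b^{-(l-m)}i^{(m)},\,b^{-(l-m)}(i^{(m)}+1)-b^m\epsilon)$ and $s_m=\sum_{t=l-m}^{l-1}b^{t-(l-m)}d_t$, where $i^{(m)}:=\textbf{i}_j-\sum_{t=l-m}^{l-1}d_t b^t$. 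The inductive step uses $b^m\epsilon\le b^{l-1}\epsilon$, which places $y_m$ strictly inside a good interval of $\phi_b$, so $\phi_b(y_m)=by_m-d_{l-m-1}$ exactly. Taking $m=l$ yields $s_l=\textbf{i}_j$. Concatenating $d$ such one-dimensional extractors in parallel via Lemma~\ref{concatenating-lemma} produces a network in $\Upsilon^{9d,2(b-1)l}(\mathbb{R}^d,\mathbb{R}^{2d})$, and composing with the depth-free affine readout $(y^l_1,s^l_1,\dots,y^l_d,s^l_d)\mapsto\sum_{j=1}^d b^{l(j-1)} s^l_j$ gives the desired $q_d\in\Upsilon^{9d,2(b-1)l}(\mathbb{R}^d)$.

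The main obstacle is that a single fixed function $\phi_b$ must correctly execute a digit-extraction step at every one of the $l$ iterations, even though the slack $b^m\epsilon$ inflates geometrically. Tuning $\phi_b$ to the most restrictive case $m=l-1$ is the natural fix, and the assumption $\epsilon<b^{-l}$ is precisely what guarantees that even at that step the good region $[kb^{-1},(k+1)b^{-1}-b^{l-1}\epsilon)$ is non-empty and the interpolating gaps remain disjoint. A smaller bookkeeping point is keeping $by$ accessible after $\phi_b$ has fired, which is what forces the per-dimension width up from $7$ to $9$ and accounts for the exact $9d$ in the width bound.
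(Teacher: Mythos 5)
Your overall blueprint---iterating a one-dimensional base-$b$ ``digit peel'' map, accumulating the extracted digits in a running sum, and running $d$ such pipelines in parallel before a final affine readout---is the same as the paper's construction, just reparameterized. Where the paper defines a digit-extractor $g_\epsilon$ together with the recursion $x_{n+1}=bx_n-g_\epsilon(x_n),\ q_{n+1}=bq_n+g_\epsilon(x_n)$, you define the residual shift $\phi_b$ directly and recover the digit as $by-\phi_b(y)$; these are equivalent, and your width and depth accounting is fine.

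There is, however, a genuine error in how you tune the gap width of $\phi_b$. You set the gaps to have width $b^{l-1}\epsilon$ on the reasoning that the \emph{last} step $m=l-1$ is the most restrictive because ``the slack $b^m\epsilon$ inflates geometrically.'' This is backwards: a larger slack means a \emph{wider} margin, so the binding constraint is the iteration with the \emph{smallest} slack, namely $m=0$ with slack $\epsilon$. Concretely, your inductive hypothesis gives $y_m<(d_{l-m-1}+1)b^{-1}-b^m\epsilon$, while landing in a good interval of your $\phi_b$ requires $y_m<(d_{l-m-1}+1)b^{-1}-b^{l-1}\epsilon$; the first implies the second only when $b^m\epsilon\ge b^{l-1}\epsilon$, which fails for every $m<l-1$. (Your stated inequality $b^m\epsilon\le b^{l-1}\epsilon$ is true but points the wrong way.) For an explicit failure, take $b=2$, $l=2$, $\epsilon=0.2<b^{-l}$, $\textbf{i}_j=1$: your $\phi_b$ has good intervals $[0,0.1)$ and $[0.5,0.6)$, but the shrunk subcube $[b^{-2},2b^{-2}-\epsilon)=[0.25,0.3)$ lies \emph{entirely} in the gap $[0.1,0.5)$, so already at $m=0$ the map computes the wrong thing. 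The correct choice, used by the paper's $g_\epsilon$, is gap width $\epsilon$: the $m=0$ iterate then just clears the gap, and every later iterate has strictly more room since the slack only grows. A smaller omission: you only argue for $x_j\in[b^{-l}\textbf{i}_j,b^{-l}(\textbf{i}_j+1)-\epsilon)$, but for $\textbf{i}_j=b^l-1$ the cell $\Omega^l_{\textbf{\upshape i},\epsilon}$ is the \emph{unshrunk} interval $[b^{-l}\textbf{i}_j,b^{-l}(\textbf{i}_j+1))$, and this case must be checked separately (it comes out automatically provided $\phi_b$ has no gap abutting the right endpoint, as the paper arranges for $g_\epsilon$).
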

Note that here $\text{\upshape ind}(\textbf{\upshape i})\in \{0,...,b^{dl}-1\}$ is just an integer index corresponding to the sub-cube position $\textbf{i}$.

Using this Lemma we prove the following key technical Proposition, which shows how to efficiently approximate piecewise polynomial functions on the good set $\Omega_{l,\epsilon}$.
\begin{proposition}\label{one-level-deep-relu-approximation}
    Let $l\geq 0$ be an integer and $\epsilon > 0$. Suppose that $f\in \mathcal{P}_k^l$ is expanded in terms of the bases $\rho_{l,\textbf{\upshape i}}^\alpha$,
    \begin{equation}
        f(x) = \sum_{|\alpha|\leq k,~\textbf{\upshape i}\in I_l} a^\alpha_{\textbf{\upshape i}}\rho_{l,\textbf{\upshape i}}^\alpha(x).
    \end{equation}
    Let $1 \leq q \leq p \leq \infty$ and choose a parameter $\delta > 0$ and an integer $m \geq 1$. Then there exists a deep ReLU network $f_{\delta,m}\in \Upsilon^{22d+18,L}(\mathbb{R}^d)$ such that
    \begin{equation}
        \|f - f_{\delta,m}\|_{L_p(\Omega_{l,\epsilon})} \leq C\left(\delta\min\left\{1, b^{-dl}\delta^{-q}\right\}^{1/p} + 4^{-m}\right)\left(\sum_{|\alpha|\leq k,~\textbf{\upshape i}\in I_l} |a^\alpha_{\textbf{\upshape i}}|^q\right)^{1/q}
    \end{equation}
    (with the standard modification when $q = \infty$), and whose depth satisfies
    \begin{equation}
        L \leq C\begin{cases}
        m + l + \delta^{-q/2}\sqrt{1+dl\log(b)+q\log(\delta)}& \delta^{-q} \leq b^{dl}\\
        m + l + b^{dl/2}\sqrt{1 - \log{\delta} - (dl/q)\log(b)}& \delta^{-q} > b^{dl}.
        \end{cases}
    \end{equation}
    Here the constants $C := C(p,q,d,k,b)$ only depend upon $p,q,d,k$ and the base $b$, but not on $f$, $\delta$, $l$, $\epsilon$, or $m$.
\end{proposition}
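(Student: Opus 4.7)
\emph{Plan.} I would quantize the coefficient vector at scale $\delta\lambda$, where $\lambda := (\sum|a^\alpha_\mathbf{i}|^q)^{1/q}$, encode the resulting sparse integer vector via Theorem~\ref{sparse-approximation-theorem}, and have the network read off each integer coefficient as a function of the subcube containing the input, multiply it by an approximate monomial in the local coordinates, and accumulate in a running sum.

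Set $n^\alpha_\mathbf{i} := \lfloor a^\alpha_\mathbf{i}/(\delta\lambda) \rceil$ and $\tilde a^\alpha_\mathbf{i} := (\delta\lambda) n^\alpha_\mathbf{i}$. The crucial observation is that only coefficients with $|a^\alpha_\mathbf{i}| \geq \delta\lambda/2$ generate a nonzero integer, so by Markov the support $S := \{(\alpha,\mathbf{i}): n^\alpha_\mathbf{i}\neq 0\}$ satisfies $|S|\leq (2/\delta)^q$, and H\"older on $S$ then yields
\begin{equation*}
    \|n\|_{\ell^1} \leq 2^q\delta^{-q},
\end{equation*}
\emph{independent} of $\lambda$. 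For the quantization error, let $\tilde f := \sum \tilde a^\alpha_\mathbf{i}\rho^\alpha_{l,\mathbf{i}}$. On each subcube one has $|(f-\tilde f)(x)|\leq \sum_\alpha \epsilon_{\alpha,\mathbf{i}}$ with $\epsilon_{\alpha,\mathbf{i}} \leq \min(\delta\lambda/2, |a^\alpha_\mathbf{i}|)$, and I would combine two complementary estimates: the ``spread'' bound $\epsilon^p\leq (\delta\lambda/2)^{p-q}|a|^q$ summed against $\sum|a|^q = \lambda^q$ gives $\|f-\tilde f\|_{L_p} \leq C\lambda\delta^{1-q/p}b^{-dl/p}$, while the uniform bound $\epsilon\leq \delta\lambda/2$ applied on each of the $b^{dl}$ subcubes gives $\|f-\tilde f\|_{L_p}\leq C\lambda\delta$. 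Their minimum is exactly $C\lambda\delta\min\{1, b^{-dl}\delta^{-q}\}^{1/p}$.

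To realize $\tilde f$ approximately as a network $f_{\delta,m}$, on input $x$ I would first use (a slight enhancement of) Lemma~\ref{mapping-to-integers-lemma} to produce in parallel the index $q_d(x) = \text{ind}(\mathbf{i})$ and the local coordinates $y := b^l x - \mathbf{i}\in [0,1)^d$. Then for each of the $\binom{d+k}{k}$ multi-indices $\alpha$ I would, sequentially: apply Theorem~\ref{sparse-approximation-theorem} to the integer vector $(n^\alpha_\mathbf{i})_{\mathbf{i}\in I_l}$ (length $b^{dl}$, $\ell^1$-norm $\leq 2^q\delta^{-q}$) with input $q_d(x)$ to read off $n^\alpha_\mathbf{i}$; approximate $\prod_j y_j^{\alpha_j}$ by iterating the product network of Proposition~\ref{produt-network-proposition} at most $k$ times to get $m^\alpha(y)$, with accumulated error $\leq C_k\cdot 4^{-m}$ (because all intermediate factors lie in $[0,1]$); and add $(\delta\lambda)\,n^\alpha_\mathbf{i}\,m^\alpha(y)$ to a running-sum channel. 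Summing the product errors in $L_p$ contributes $\|\tilde f - f_{\delta,m}\|_{L_p}\lesssim 4^{-m}\lambda$, which together with the quantization bound gives the total error claimed.

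For the depth, the $l$ term comes from the index/coordinate extractor, the $m$ term from each product network, and the remaining term is precisely the depth of Theorem~\ref{sparse-approximation-theorem} applied with $M = 2^q\delta^{-q}$ and $N = C_{d,k}b^{dl}$, which cleanly produces the two regimes $\delta^{-q}\leq b^{dl}$ and $\delta^{-q}>b^{dl}$ stated. The main obstacle is not the error analysis but the width budget of $22d+18$: the extractor alone consumes on the order of $9d$ channels, so the sparse-representation calls, the iterated product networks, and the pass-through of $y$, $q_d(x)$ and the running sum must all be threaded through shared width. Processing the multi-indices $\alpha$ serially rather than in parallel, and invoking Lemma~\ref{select-coordinates-lemma} to apply sub-networks only to the relevant coordinates while the rest pass through, is what keeps the width linear in $d$ with the claimed small constant.
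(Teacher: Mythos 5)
Your plan matches the paper's proof almost step for step: decompose across multi-indices $\alpha$ and process serially via Proposition~\ref{summing-networks}, extract the cube index and local coordinates with Lemma~\ref{mapping-to-integers-lemma}, quantize the coefficient vector at scale $\delta\lambda$, encode the resulting sparse integer vector with Theorem~\ref{sparse-approximation-theorem} keyed on $\text{ind}(\mathbf{i})$, then multiply by an iterated product approximation of the monomial. The quantization error analysis (the two complementary bounds combined into $\min\{1,b^{-dl}\delta^{-q}\}^{1/p}$) and the depth bookkeeping are the same as the paper's. Your Markov-plus-H\"older estimate $\|n\|_{\ell^1}\leq 2^q\delta^{-q}$ is slightly looser than the paper's $\delta^{-1}\min\{\delta^{-q},N\}^{1-1/q}$ in the regime $\delta^{-q}>b^{dl}$, but after plugging into Theorem~\ref{sparse-approximation-theorem} it only costs a factor $\sqrt{q}$ in depth, which is absorbed into $C(p,q,d,k,b)$, so this is fine.

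One small wrinkle to fix: you use nearest-integer rounding $\lfloor\cdot\rceil$, under which $|\tilde a^\alpha_\mathbf{i}|$ can exceed $|a^\alpha_\mathbf{i}|$ by up to $\delta\lambda/2$, so $|\tilde a^\alpha_\mathbf{i}/\lambda|$ can slightly exceed $1$. This breaks the premise ``all intermediate factors lie in $[0,1]$'' that you invoke when feeding the normalized coefficient into Proposition~\ref{produt-network-proposition}, which requires inputs in $[-1,1]$. The paper avoids this by rounding \emph{toward zero}: $\tilde a=\delta\,\mathrm{sgn}(a)\lfloor|a|/\delta\rfloor$, so that $|\tilde a|\leq|a|\leq\lambda$ automatically. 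You can either adopt that rounding or clamp/rescale by $1+\delta/2$ before the product network; either way the fix is trivial, but as written this step is not quite justified.
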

Before we prove this Proposition, let us explain the intuition behind it and the meaning of the parameter $\delta$. The parameter $\delta$ represents a discretization level for the coefficients $a^\alpha_{\textbf{\upshape i}}$. Specifically, we will round each coefficient down (in absolute value) to the nearest multiple of $\delta$ to produce an approximation to $f$. Then, we will represent this approximation by encoding these discretized coefficients using deep ReLU networks. This reduces to encoding an integer vector which can be done optimally using Theorem \ref{sparse-approximation-theorem}. The two regimes $\delta^{-q} \leq b^{dl}$ and $\delta^{-q} > b^{dl}$ correspond to the case of \textit{dense} and \textit{sparse} coefficients, which are handled differently in Theorem \ref{sparse-approximation-theorem}.
\begin{proof}[Proof of Proposition \ref{one-level-deep-relu-approximation}]
    We begin by decomposing
    $
        f = \sum_{|\alpha|\leq k} f_\alpha
    $
    where
    \begin{equation}
        f_\alpha(x) = \sum_{\textbf{\upshape i}\in I_d} a^\alpha_{\textbf{\upshape i}}\rho_{l,\textbf{\upshape i}}^\alpha(x).
    \end{equation}
    By Proposition \ref{summing-networks} and the triangle inequality, it suffices to prove the result for each $f_\alpha$ individually with width $W = 20d+17$ (at the expense of larger constants). So in the following we assume that $f= f_\alpha$ and write $a_{\textbf{\upshape i}} := a^\alpha_{\textbf{\upshape i}}$.
    By normalizing $f$ we may assume also without loss of generality that
    \begin{equation}\label{coefficient-lq-bound-assumption}
        \left(\sum_{\textbf{\upshape i}\in I_l} |a_{\textbf{\upshape i}}|^q\right)^{1/q} \leq 1.
    \end{equation}

    We construct the following network. First, duplicate the input $x\in \mathbb{R}^d$ three times using an affine map
\begin{equation}
    x\rightarrow \begin{pmatrix}
            x\\
            x\\
            x
            \end{pmatrix}\in \Upsilon^{0}(\mathbb{R}^d,\mathbb{R}^{3d}).
\end{equation}
Next, we use Lemmas \ref{select-coordinates-lemma} and \ref{composition-lemma} to apply the network $q_d$ from Lemma \ref{mapping-to-integers-lemma} to the last coordinate and apply $q_1$ from Lemma \ref{mapping-to-integers-lemma} to each entry of the first coordinate to get
\begin{equation}
    x\rightarrow \begin{pmatrix}
            q_1(x_1)\\
            \vdots\\
            q_1(x_d)\\
            x\\
            q_d(x)
            \end{pmatrix}\in \Upsilon^{20d,2(b-1)l}(\mathbb{R}^d,\mathbb{R}^{2d+1}).
\end{equation}
We now compose with the affine map
\begin{equation}
    \begin{pmatrix}
            x\\
            y\\
            r
    \end{pmatrix}\rightarrow
    \begin{pmatrix}
            b^ly - x\\
            r
    \end{pmatrix}\in \Upsilon^{0}(\mathbb{R}^{2d+1},\mathbb{R}^{d+1}),
\end{equation}
where $x,y\in \mathbb{R}^d$ and $r\in \mathbb{R}$, to get
\begin{equation}\label{intermediate-map-1652}
    x\rightarrow \begin{pmatrix}
            b^lx_1 - q(x_1)\\
            \vdots\\
            b^lx_d - q(x_d)\\
            q_d(x)
            \end{pmatrix}\in \Upsilon^{20d,2(b-1)l}(\mathbb{R}^d,\mathbb{R}^{d+1}).
\end{equation}
On the set $\Omega^l_{\textbf{i},\epsilon}$ from \eqref{shrunk-subcube-definition} this map becomes
\begin{equation}
    x\rightarrow \begin{pmatrix}
            b^lx_1 - \textbf{i}_1\\
            \vdots\\
            b^lx_d - \textbf{i}_d\\
            \text{ind}(\textbf{i})
            \end{pmatrix}.
\end{equation}
The next step in the construction will be to approximate the coefficients $a_{\textbf{\upshape i}}$. To do this we round the $a_{\textbf{\upshape i}}$ down to the nearest multiple of $\delta$ (in absolute value) to get approximate coefficients
\begin{equation}
 \tilde{a}_{\textbf{\upshape i}} :=  \delta\sign(a_{\textbf{\upshape i}})\left\lfloor\frac{|a_{\textbf{\upshape i}}|}{\delta}\right\rfloor.
\end{equation}
We estimate the $\ell^p$-norm of the error this incurs as follows. Write
\begin{equation}
    \|a - \tilde{a}\|_{\ell^p} = \left(\sum_{\textbf{i}\in I_l} |a_{\textbf{\upshape i}} - \tilde{a}_{\textbf{\upshape i}}|^p\right)^{1/p}
\end{equation}
with the standard modification when $p=\infty$. Note that
\begin{equation}
\|a - \tilde{a}\|_{\ell^q} \leq \|a\|_{\ell^q} \leq 1
\end{equation}
by \eqref{coefficient-lq-bound-assumption}. In addition, it is clear from the rounding procedure that $\|a - \tilde{a}\|_{\ell^\infty} \leq \delta$.
H\"older's inequality thus implies that (since $p \geq q$)
\begin{equation}
    \|a - \tilde{a}\|_{\ell^p} \leq \|a - \tilde{a}\|^{q/p}_{\ell^q}\|a - \tilde{a}\|_{\ell^\infty}^{1-q/p} \leq \delta^{1-q/p}.
\end{equation}
On the other hand, using that $|I_l| = b^{dl}$, we can use the bound $\|a - \tilde{a}\|_{\ell^\infty} \leq \delta$ to get
\begin{equation}
    \|a - \tilde{a}\|_{\ell^p} \leq b^{dl/p}\delta.
\end{equation}
Putting these together, we get
\begin{equation}\label{coefficient-error-contribution}
    \|a - \tilde{a}\|_{\ell^p} \leq \delta\min\{b^{dl},\delta^{-q}\}^{1/p}.
\end{equation}
Next we construct a ReLU neural network which maps the index $\text{ind}(\textbf{i})$ to the rounded coefficients $\tilde{a}_{\textbf{i}}$. For this Theorem \ref{sparse-approximation-theorem} will be key. We set $N = b^{dl}$ and write $\tilde{a}_{\textbf{i}} = \delta \textbf{x}_{\text{ind}(\textbf{i})}$ for a vector $\textbf{x}\in \mathbb{Z}^N$ defined by
\begin{equation}
    \textbf{x}_{\text{ind}(\textbf{i})} = \sign(a_{\textbf{\upshape i}})\left\lfloor\frac{|a_{\textbf{\upshape i}}|}{\delta}\right\rfloor.
\end{equation}
We proceed to estimate $\|\textbf{x}\|_{\ell^1}$. We observe that by \eqref{coefficient-lq-bound-assumption}
\begin{equation}\label{x-lq-bound}
    \sum_{i=1}^N |\textbf{x}_i|^q \leq \sum_{\textbf{i}\in I_l} \left(\frac{|a_{\textbf{\upshape i}}|}{\delta}\right)^q \leq \delta^{-q}.
\end{equation}
Thus $\|\textbf{x}\|_{\ell^q} \leq \delta^{-1}$. Moreover, since $\textbf{x}\in \mathbb{Z}^N$, \eqref{x-lq-bound} implies that the number of non-zero entries in $\textbf{x}$ satisfies
\begin{equation}
    |\{i:\textbf{x}_i \neq 0\}| \leq \min\{\delta^{-q},N\}.
\end{equation}
We can thus use H\"older's inequality to get the bound
\begin{equation}
    \|\textbf{x}\|_{\ell^1} \leq |\{i:\textbf{x}_i \neq 0\}|^{1-1/q}\|\textbf{x}\|_{\ell^q}\leq \delta^{-1}\min\{\delta^{-q},N\}^{1-1/q}.
\end{equation}
Using this we apply Theorem \ref{sparse-approximation-theorem} with $M = \delta^{-1}\min\{\delta^{-q},N\}^{1-1/q}$ to the vector $\textbf{x}$. We calculate that if $\delta^{-q} \leq N$, then
\begin{equation}
    M = \delta^{-1}\delta^{-q(1-1/q)} = \delta^{-q} \leq N,
\end{equation}
while if $\delta^q > N$, then
\begin{equation}
    M = \delta^{-1}N^{(1-1/q)} = N(\delta^{-q}N^{-1})^{1/q} \geq N.
\end{equation}
Thus, Theorem \ref{sparse-approximation-theorem} (combined with a scaling by $\delta$) gives a network $g\in \Upsilon^{17,L}(\mathbb{R})$ such that $g(\text{ind}(\textbf{i})) = \tilde{a}_{\textbf{i}}$, whose depth is bounded by
\begin{equation}
    L \leq C\begin{cases}
        \delta^{-q/2}\sqrt{1+dl\log(b)+q\log(\delta)}& \delta^{-q} \leq b^{dl}\\
        b^{dl/2}\sqrt{1 - \log{\delta} - (dl/q)\log(b)}& \delta^{-q} > b^{dl}.
    \end{cases}
\end{equation}
Using Lemma \ref{select-coordinates-lemma} to apply $g$ to the last coordinate of the output in \eqref{intermediate-map-1652} gives a network $\tilde{f}_\delta\in \Upsilon^{20d+17,L}$
with depth bounded by
\begin{equation}
    L \leq 2(b-1)l + C\begin{cases}
        \delta^{-q/2}\sqrt{1+dl\log(b)+q\log(\delta)}& \delta^{-q} \leq b^{dl}\\
        b^{dl/2}\sqrt{1 - \log{\delta} - (dl/q)\log(b)}& \delta^{-q} > b^{dl},
        \end{cases}
\end{equation}
such that for $x\in \Omega^l_{\textbf{i},\epsilon}$ we have
\begin{equation}\label{tilde-f-delta-equation}
    \tilde{f}_\delta(x) = \begin{pmatrix}
            b^lx_1 - \textbf{i}_1\\
            \vdots\\
            b^lx_d - \textbf{i}_d\\
            \tilde{a}_{\textbf{i}}
            \end{pmatrix}.
\end{equation}
Finally, to obtain the network $f_{\delta,m}$ we use Lemma \ref{composition-lemma} to compose $\tilde{f}_\delta$ with a network $P_m$ which approximates the product
\begin{equation}
    \begin{pmatrix}
            z_1\\
            \vdots\\
            z_d\\
            z_{d+1}
            \end{pmatrix}\rightarrow z_{d+1}\prod_{j=1}^dz_j^{\alpha_j}
\end{equation}
on the set where $|z_j|\leq 1$ for all $j=1,...,d+1$. Note from the bound \eqref{coefficient-lq-bound-assumption} we see that $|\tilde{a}_{\textbf{i}}|\leq |a_{\textbf{i}}| \leq \|a\|_{\ell^q} \leq 1$. In addition, it is easy to see that for $x\in \Omega^l_{\textbf{i},\epsilon}$ we have $|b^lx_j - \textbf{i}_j| \leq 1$ for $j=1,...,d$. Thus the output of $\tilde{f}_\delta$ satisfies these assumptions for any $x\in \Omega_{l,\epsilon}$.

We construct the network $P_m$ using Proposition \ref{produt-network-proposition} as follows. Choose a parameter $m \geq 1$. We first approximate a function which multiplies the last entry $z_{d+1}$ by the $i$-th entry $z_i$. We do this by duplicating the $i$-th entry using an affine map and then applying Lemma \ref{select-coordinates-lemma} to apply the network $f_m$ from Proposition \ref{produt-network-proposition} to the $i$-th and last entries
\begin{equation}
    \begin{pmatrix}
            z_1\\
            \vdots\\
            z_d\\
            z_{d+1}
            \end{pmatrix}\rightarrow
            \begin{pmatrix}
            z_1\\
            \vdots\\
            z_d\\
            z_{d+1}\\
            z_i
            \end{pmatrix}\rightarrow
            \begin{pmatrix}
            z_1\\
            \vdots\\
            f_m(z_{d+1},z_i)
            \end{pmatrix}\in \Upsilon^{2d + 13,6m+3}(\mathbb{R}^{d+1},\mathbb{R}^{d+1}).
\end{equation}
In order to ensure that the resulting approximate product is still bounded in magnitude by $1$ (so that we can recursively apply these products), we apply the map $z\rightarrow \max(\min(z,-1),1)\in \Upsilon^{5,2}(\mathbb{R})$ to the last component. This gives a network $P^m_i\in \Upsilon^{2d+13,6m+5}(\mathbb{R}^{d+1},\mathbb{R}^{d+1})$, which maps
\begin{equation}
    P^m_i:\begin{pmatrix}
            z_1\\
            \vdots\\
            z_d\\
            z_{d+1}
            \end{pmatrix}\rightarrow
            \begin{pmatrix}
            z_1\\
            \vdots\\
            z_d\\
            z_{d+1}\\
            z_i
            \end{pmatrix}\rightarrow
            \begin{pmatrix}
            z_1\\
            \vdots\\
            \tilde{f}_m(z_{d+1},z_i)
            \end{pmatrix},
\end{equation}
where $\tilde{f}_m(z_{d+1},z_i) = \max(\min(f_m(z_{d+1},z_i),-1),1)$. Observe that since the true product $z_{d+1}z_i\in [-1,1]$ the truncation cannot increase the error, so that Proposition \ref{produt-network-proposition} implies
\begin{equation}
    |\tilde{f}_m(z_{d+1},z_i) - z_iz_{d+1}| \leq |f_m(z_{d+1},z_i) - z_iz_{d+1}| \leq 6\cdot 4^{-m}.
\end{equation}
We construct $P_m$ by composing (using Lemma \ref{composition-lemma}) $\alpha_j$ copies of $P^m_j$ and then applying an affine map which selects the last coordinate. Thus $P_m\in \Upsilon^{2d+13,L}(\mathbb{R}^{d+1})$ with $L \leq k(6m+5)$. Moreover, since all entries $z_i$ are bounded by $1$, we calculate that
\begin{equation}\label{product-error-contribution}
    \left|P_m(\textbf{z}) - z_{d+1}\prod_{j=1}^dz_j^{\alpha_j}\right| \leq \sum_{j=1}^d \alpha_j|\tilde{f}_m(z_{d+1},z_j) - z_jz_{d+1}| \leq 6k\cdot 4^{-m}.
\end{equation}
We obtain the network $f_{\delta,m}\in \Upsilon^{20d + 17,L}(\mathbb{R}^d,\mathbb{R})$ by composing $\tilde{f}_{\delta}$ and $P_m$ using Lemma \ref{composition-lemma}. Its depth is bounded by
\begin{equation}
    L \leq 2(b-1)l + k(6m+5) + C\begin{cases}
        \delta^{-q/2}\sqrt{1+dl\log(b)+q\log(\delta)}& \delta^{-q} \leq b^{dl}\\
        b^{dl/2}\sqrt{1 - \log{\delta} - (dl/q)\log(b)}& \delta^{-q} > b^{dl},
        \end{cases}
\end{equation}
and we note that $k(6m+5) \leq Cm$ for integers $m \geq 1$ and a constant $C := C(k)$ which depends upon $k$.

We bound the error using equations \eqref{coefficient-error-contribution}, \eqref{tilde-f-delta-equation}, \eqref{product-error-contribution}, and the fact that the basis $\rho^\alpha_{l,\textbf{i}}$ is normalized in $L_\infty$ and has disjoint support for fixed $\alpha$ to get
\begin{equation}
    \|f - f_{\delta,m}\|^p_{L_p(\Omega_{l,\epsilon})} \leq 2^{-ld}\sum_{\textbf{i}\in I_l} |a_{\textbf{i}} - \tilde{a}_{\textbf{i}}|^p + (6k\cdot 4^{-m})^p,
\end{equation}
so that
\begin{equation}
    \|f - f_{\delta,m}\|_{L_p(\Omega_{l,\epsilon})} \leq 2^{-ld/p}\|a - \tilde{a}\|_{\ell^p} + 6k\cdot 4^{-m} \leq C\left(\delta\min\left\{1, 2^{-dl}\delta^{-q}\right\}^{1/p} + 4^{-m}\right),
\end{equation}
which completes the proof.
\end{proof}
Next, we use the construction in Proposition \ref{one-level-deep-relu-approximation} to approximate a target function $f\in W^s(L_q(\Omega))$ in $L_p(\Omega)$ using deep ReLU neural networks, again removing an arbitrarily small trifling set in the spirit of \cite{shen2022optimal}.
\begin{proposition}\label{main-theorem-trifling-region}
	Let $\Omega = [0,1)^d$, $1\leq q\leq p\leq \infty$ and $f\in W^s(L_q(\Omega))$ with $\|f\|_{W^s(L_q(\Omega))} \leq 1$ for $s > 0$. Suppose that the Sobolev embedding condition is strictly satisfied, i.e.
	\begin{equation}\label{strict-sobolev-embedding-condition}
		\frac{1}{q} - \frac{1}{p} - \frac{s}{d} < 0,
	\end{equation}
	which guarantees the compact embedding $W^s(L_q(\Omega))\subset\subset L_p(\Omega)$ holds. Let $\epsilon > 0$ and $l_0 \geq 1$ be an integer and set $l^* = \lfloor\kappa l_0\rfloor$ with
	\begin{equation}
		\kappa := \frac{s}{s + d/p-d/q}.
	\end{equation}
	Note that $1\leq \kappa < \infty$ by the Sobolev embedding condition. Then there exists a network $f_{l_0,\epsilon}\in \Upsilon^{24d+20,L}(\mathbb{R}^d)$ such that
	\begin{equation}
		\|f - f_{l_0,\epsilon}\|_{L_p(\Omega_{l^*,\epsilon})} \leq Cb^{-sl_0}
	\end{equation}
	and whose depth is bounded by
	\begin{equation}
		L \leq Cb^{dl_0/2}.
	\end{equation}
	Here the constants $C := C(s,p,q,d,b)$ do not depend upon $l_0,f$ or $\epsilon$.
\end{proposition}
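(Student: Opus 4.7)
The plan is to build $f_{l_0,\epsilon}$ as a sum of networks, one per level of a multiscale piecewise polynomial decomposition of $f$. Fix an integer polynomial degree $k > s$ and a bounded linear projection $P_l: W^s(L_q(\Omega)) \to \mathcal{P}_k^l$ (for instance an averaged Taylor polynomial) satisfying the classical estimate $\|f - P_l f\|_{L_q(\Omega)} \leq Cb^{-sl}|f|_{W^s(L_q)}$. Set $g_0 = P_0 f$ and $g_l = P_l f - P_{l-1} f$ for $1 \leq l \leq l^*$, so that $P_{l^*} f = \sum_{l=0}^{l^*} g_l$. The projection estimate gives $\|g_l\|_{L_q} \leq Cb^{-sl}$, and since the basis $\{\rho_{l,\textbf{i}}^\alpha\}$ is $L_\infty$-normalized on disjoint supports of measure $b^{-dl}$, the coefficient vector of $g_l$ satisfies $\|a_l\|_{\ell^q} \leq Cb^{(d/q - s)l}$.

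To control the tail $f - P_{l^*} f$ in $L_p$, I combine the $L_q$-estimate with the Markov-type inverse inequality $\|g_l\|_{L_p} \leq Cb^{\gamma l}\|g_l\|_{L_q}$ for piecewise polynomials on cells of measure $b^{-dl}$, where $\gamma := d(1/q - 1/p) \geq 0$. This gives $\|g_l\|_{L_p} \leq Cb^{(\gamma - s)l}$, and since the strict Sobolev embedding guarantees $s > \gamma$, the geometric tail converges and the choice $l^* = \lfloor \kappa l_0 \rfloor$ with $\kappa = s/(s - \gamma)$ yields $\|f - P_{l^*} f\|_{L_p(\Omega)} \leq Cb^{(\gamma - s)l^*} \leq Cb^{-sl_0}$.

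The main step is to apply Proposition \ref{one-level-deep-relu-approximation} to each $g_l$ with parameters $(\delta_l, m_l)$ and to sum the resulting networks via Proposition \ref{summing-networks}, which turns the width $22d + 18$ into $24d + 20$. Taking $m_l = O(l_0 + l)$ kills the $4^{-m_l}$ contribution at every level while adding only $O(l_0^2)$ to the total depth. The remaining per-level error is controlled by $C b^{(d/q - s)l}\delta_l\min\{1, b^{-dl}\delta_l^{-q}\}^{1/p}$, and the per-level depth by the corresponding bound from Proposition \ref{one-level-deep-relu-approximation}. The choice of $\delta_l$ splits at the critical level $l_0$: for $l \leq l_0$ the sparse regime $\delta_l^{-q} \geq b^{dl}$ (depth $O(b^{dl/2})$) is preferable, while for $l > l_0$ one must enter the dense regime $\delta_l^{-q} \leq b^{dl}$ (depth $O(\delta_l^{-q/2})$). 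The trifling parameters $\epsilon_l \leq \epsilon$ are chosen so that $\Omega_{l,\epsilon_l} \supseteq \Omega_{l^*,\epsilon}$ for every $l \leq l^*$ (possible since coarser cell boundaries are nested in finer ones), so that all per-level errors control the $L_p$-error on $\Omega_{l^*,\epsilon}$.

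The main obstacle is to choose $\delta_l$ so that per-level errors sum to $O(b^{-sl_0})$ \emph{and} per-level depths sum to $O(b^{dl_0/2})$ without accumulating a polylog blowup from the $O(l_0)$ levels. I prescribe the level-$l$ error to decay geometrically away from $l_0$, of order $b^{-sl_0 - c(l_0 - l)}$ for $l \leq l_0$ and $b^{-sl_0 - \beta(l - l_0)}$ for $l > l_0$, with $c > 0$ arbitrary and $\beta \in (0, s - \gamma)$; both geometric sums are $O(b^{-sl_0})$. For $l \leq l_0$ this pins down $\delta_l$ in the sparse regime and the depth $b^{dl/2}$ together with the $\sqrt{\log(M/N)} = O(\sqrt{l_0 - l})$ factor sums to $b^{dl_0/2}\sum_{j \geq 0} b^{-dj/2}\sqrt{j} = O(b^{dl_0/2})$. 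For $l > l_0$, a direct computation using the key identity $\gamma pq/(p-q) = d$ (equivalently $\gamma\mu q = d$ with $\mu = p/(p-q)$) shows $\delta_l^{-q/2} = Cb^{dl_0/2}b^{-\alpha(l - l_0)}$ with $\alpha = \mu q(s - \gamma - \beta)/2 > 0$, and the $O(\sqrt{l - l_0})$ log factor in the depth bound is absorbed by the geometric decay $b^{-\alpha(l - l_0)}$. Both sums are therefore dominated by $b^{dl_0/2}$, giving the claimed depth.
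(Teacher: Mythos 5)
Your proposal is correct and follows essentially the same route as the paper's proof: the same multiscale piecewise-polynomial decomposition, a per-level application of Proposition~\ref{one-level-deep-relu-approximation} with the discretization parameter $\delta_l$ split at the critical level $l_0$ between the two regimes $\delta_l^{-q}>b^{dl}$ (depth $\sim b^{dl/2}$) and $\delta_l^{-q}\le b^{dl}$ (depth $\sim\delta_l^{-q/2}$), and summation via Proposition~\ref{summing-networks}. The only variations are cosmetic --- an averaged Taylor projection in place of the best $L_q$ piecewise-polynomial approximation, a piecewise-polynomial inverse inequality in place of the coefficient-based tail bound, and swapped ``sparse''/``dense'' labels (the paper calls $\delta^{-q}\le b^{dl}$ the sparse case, since the rounded coefficient vector has small $\ell^1$ norm relative to its dimension $b^{dl}$) --- and none of these change the substance of the argument.
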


Before giving the detailed proof, let us comment on the intuition and the meaning of $\kappa$ and $l^*$. The idea is to decompose the function $f$ into different scales which consist of piecewise polynomial functions. We then appoximate these piecewise polynomial functions using neural networks via Proposition \ref{one-level-deep-relu-approximation} to varying degrees of accuracy dependent on the parameter $\delta$ used at each level. The parameter $l_0$ gives the finest level at which we approximate the coefficients in the dense regime $\delta^{-q} > b^{dl}$, while the level $l^*$ is the finest level which appears in the approximation. All levels between $l_0$ and $l^*$ are approximated in the sparse regime $\delta^{-q} \leq b^{dl}$. The parameter $\kappa$ controls the gap between $l_0$ and $l^*$ are essentially measures how adaptive the approximation must be. The proof is completed by choosing $\delta$ optimally at each level, analogous to the proof of the Birman-Solomyak Theorem \cite{birman1967piecewise} which calculates the metric entropy of the Sobolev unit ball.
\begin{proof}[Proof of Proposition \ref{main-theorem-trifling-region}]
	For a function $f\in L_q(\Omega)$, we write
	\begin{equation}
		\Pi_k^l(f) = \arg\min_{p\in \mathcal{P}_k^l} \|f - p\|_{L_q(\Omega)}
	\end{equation}
    for the $L_q$-projection of $f$ onto the space of piecewise polynomials of degree $k$. We will utilize the following well-known multiscale dyadic decomposition of the function $f$, which is a common tool in harmonic analysis \cite{birman1967piecewise,mallat1999wavelet,littlewood1931theorems} and the analysis of multigrid methods \cite{bramble1990parallel},
	\begin{equation}
		f = \sum_{l=0}^\infty f_l,
	\end{equation}
	where the components at level $l$ are defined by $f_0 = \Pi_k^0(f)$ and $f_l = \Pi_k^l(f) - \Pi_k^{l-1}(f)$ for $l \geq 1$. Expanding the components $f_l$ in the basis $\rho^\alpha_{l,\textbf{\upshape i}}$, we write
    \begin{equation}\label{expansion-of-fl}
        f_l(x) = \sum_{|\alpha|\leq k,~\textbf{\upshape i}\in I_l} a^\alpha_{l,\textbf{\upshape i}}\rho_{l,\textbf{\upshape i}}^\alpha(x).
    \end{equation}
    The key estimate in the proof is to establish the following coefficient bound
    \begin{equation}\label{bramble-hilbert-coefficient-bound}
        |a^\alpha_{l,\textbf{\upshape i}}| \leq Cb^{(d/q - s)l}|f|_{W^s(L_q(\Omega_{\textbf{i}^-}^{l-1}))},
    \end{equation}
    where $\Omega_{\textbf{i}^-}^{l-1} \supset \Omega_{\textbf{i}}^{l}$ is the parent domain of $\Omega_{\textbf{i}}^l$ when $l\geq 1$. When $l=0$, we have the simple modification 
    $$|a^\alpha_{0,\textbf{\upshape 0}}| \leq C\|f\|_{W^s(L_q(\Omega))}.$$
    
    We prove \eqref{bramble-hilbert-coefficient-bound} by utilizing the Bramble-Hilbert lemma \cite{bramble1970estimation} and a well-known scaling argument. For $l\geq 1$ consider the scaling map $S_{l,\textbf{i}}$ which scales the small domain $\Omega_{\textbf{i}^-}^{l-1}$ up to the large domain $\Omega$, defined by
    \begin{equation}
        S_{l,\textbf{i}}(f)(x) = f(b^{l-1}x - \textbf{i}^{-})\in L_q(\Omega)
    \end{equation}
    for $f\in L_q(\Omega_{\textbf{i}^-}^{l-1})$. We verify the following simple facts
    \begin{equation}\label{scaling-simple-facts}
    \begin{split}
        &|S_{l,\textbf{i}}(f)|_{W^s(L_q(\Omega))} = b^{(d/q-s)(l-1)}|f|_{W^s(L_q(\Omega_{\textbf{i}^-}))}\\
        &S_{l,\textbf{i}}(f_l) = S_{l,\textbf{i}}(\Pi_k^l(f) - \Pi_k^{l-1}(f)) = \Pi_k^1(S_{l,\textbf{i}}(f)) - \Pi_k^0(S_{l,\textbf{i}}(f))\\
        &S_{l,\textbf{i}}(\rho_{l,\textbf{\upshape i}}^\alpha) = \rho_{1,\textbf{\upshape j}}^\alpha,
    \end{split}
    \end{equation}
    where $\textbf{j}\in \{0,1,...,b\}^d$ is the index of $\Omega^l_\textbf{i}$ in $\Omega^{l-1}_{\textbf{i}^-}$, i.e. $\textbf{j}\equiv \textbf{i} \pmod{b}$. From the last two facts we deduce that 
    \begin{equation}
        \Pi_k^1(S_{l,\textbf{i}}(f)) - \Pi_k^0(S_{l,\textbf{i}}(f)) = \sum_{\textbf{j}\in I_1} a^\alpha_{l,(b\textbf{i}^- + \textbf{j})}\rho_{1,\textbf{\upshape j}}^\alpha,
    \end{equation}
    where the $a^\alpha_{l,(b\textbf{i}^- + \textbf{j})}$ are the coefficients from the expansion \eqref{expansion-of-fl} of $f_l$. Combining this with the first fact from \eqref{scaling-simple-facts}, it suffices to prove \eqref{bramble-hilbert-coefficient-bound} when $l=1$ and apply this to $S_{l,\textbf{i}}(f)$.

    To prove \eqref{bramble-hilbert-coefficient-bound} when $l=1$, we use the Bramble-Hilbert lemma \cite{bramble1970estimation}. We calculate using the Bramble-Hilbert lemma that
    \begin{equation}
    \begin{split}
        &\|\Pi_k^0(f) - f\|_{L_q(\Omega^1_\textbf{i})} \leq \|\Pi_k^0(f) - f\|_{L_q(\Omega)} \leq C|f|_{W^s(L_q(\Omega))}\\
        &\|\Pi_k^1(f) - f\|_{L_q(\Omega^1_\textbf{i})} \leq C|f|_{W^s(L_q(\Omega^1_\textbf{i}))} \leq C|f|_{W^s(L_q(\Omega))}.
    \end{split}
    \end{equation}
    Combining these two estimates, we get
    \begin{equation}
        \|\Pi_k^0(f) - \Pi_k^1(f)\|_{L_q(\Omega^1_\textbf{i})} \leq C|f|_{W^s(L_q(\Omega))}.
    \end{equation}
    When $l=0$ we make the modification
    \begin{equation}
        \|\Pi_k^0(f)\|_{L_q(\Omega)} \leq \|f\|_{L_q(\Omega)} + \|\Pi_k^0(f) - f\|_{L_q(\Omega)} \leq C\|f\|_{W^s(L_q(\Omega))}.
    \end{equation}
    Now we use the fact that all norms on the finite dimensional space of polynomials of degree at most $k$ are equivalent to transfer the $L_q$ bound to a bound on the coefficients. This implies \eqref{bramble-hilbert-coefficient-bound}. 

    From \eqref{bramble-hilbert-coefficient-bound}, we deduce the following bound on the $\ell^q$-norm of the coefficients of $f_l$:
    \begin{equation}\label{coefficient-l-q-bound}
        \left(\sum_{|\alpha|\leq k,~\textbf{\upshape i}\in I_l} |a^\alpha_{l,\textbf{\upshape i}}|^q\right)^{1/q} \leq Cb^{(d/q - s)l}\left(\sum_{|\alpha|\leq k,~\textbf{\upshape i}\in I_l} |f|^q_{W^s(L_q(\Omega_{\textbf{i}^-}^{l-1}))}\right)^{1/q} \leq Cb^{(d/q - s)l},
    \end{equation}
    since $\|f\|_{W^s(L_q(\Omega))} \leq 1$. This follows from the sub-additivity of the Sobolev norm,
    \begin{equation}\label{sobolev-subadditivity}
    \sum_{\textbf{i}\in I_{l-1}} |f|^q_{W^s(L_q(\Omega_{\textbf{i}}^{l-1}))} \leq |f|^q_{W^s(L_q(\Omega))},
    \end{equation}
    since each $\Omega_{\textbf{i}^-}^{l-1}$ appears a finite number of times  in the sum \eqref{coefficient-l-q-bound} (namely $\binom{k+d}{d}b^d$ which is independent of $l$). 
    
    We remark that the Sobolev sub-additivity \eqref{sobolev-subadditivity} immediately follows from the definitions \eqref{sobolev-semi-norm-definition} and \eqref{sobolev-semi-norm-definition-fractional}. Note also that the bound \eqref{coefficient-l-q-bound} also easily follows when the standard modifications are made for $q=\infty$. 
    
    Next, we derive the following bound, which follows from \eqref{coefficient-l-q-bound}, the $L_\infty$-normalization of the basis functions $\rho^\alpha_{l,\textbf{\upshape i}}$, the fact that for fixed $\alpha$ the functions $\rho^\alpha_{l,\textbf{\upshape i}}$ have disjoint support, and the assumption that $p \geq q$:
    \begin{equation}\label{l-p-tail-bounds}
        \|f_l\|_{L_p(\Omega)} \leq \sum_{|\alpha|\leq k}b^{-dl/p}\left(\sum_{\textbf{\upshape i}\in I_l} |a^\alpha_{l,\textbf{\upshape i}}|^p\right)^{1/p} \leq b^{-dl/p}\binom{k+d}{d}^{1-1/p}\left(\sum_{|\alpha|\leq k,~\textbf{\upshape i}\in I_l} |a^\alpha_{l,\textbf{\upshape i}}|^p\right)^{1/p} \leq Cb^{(d/q - d/p - s)l}.
    \end{equation}

    We now complete the proof by using Proposition \ref{one-level-deep-relu-approximation} to approximate each $f_l$ for $l=1,...,l^*$, for which we must choose appropriate parameters. First, we choose $\tau > 0$ such that 
    $$
    \frac{d}{q} - \frac{d}{p} - s + \left(1 - \frac{q}{p}\right)\tau < 0.
    $$
    Note that this condition can be satisfied since $q\leq p$ and the Sobolev embedding condition \eqref{strict-sobolev-embedding-condition} holds. For each level $l$ we choose parameters
    \begin{equation}\label{delta-choice-nn}
	\delta = \delta(l) = \begin{cases}
	b^{-dl_0/q + \tau(l-l_0)} & l\geq l_0\\
	b^{-dl/q + (s+1)(l-l_0)} & l < l_0
	\end{cases}
\end{equation}
and
\begin{equation}\label{m-choice-nn}
    m = m(l) = K_1 l_0 + K_2 l
\end{equation}
in Proposition \ref{one-level-deep-relu-approximation}, where $K_1,K_2 > 0$ are parameters to be chosen later. 
Note that $\delta(l)^{-q} \leq b^{dl}$ when $l\geq l_0$ and $\delta(l)^{-q} > b^{dl}$ when $l < l_0$. This means that the coarser levels are discretized finely and the coefficients are dense, while the finer levels are discretized coarsely so that the coefficients are sparse.

So, we define the network $f_{l_0,\epsilon}$ using Proposition \ref{summing-networks} to be
\begin{equation}
    f_{l_0,\epsilon} = \sum_{l=0}^{l^*} f_{\delta(l),m(l)},
\end{equation}
where $f_{\delta(l),m(l)}$ is constructed using Proposition \ref{one-level-deep-relu-approximation} applied to $f_l$ with parameters $\delta = \delta(l)$ and $m = m(l)$. 

Propositions \ref{summing-networks} and \ref{one-level-deep-relu-approximation} imply that $f_{l_0,\epsilon}\in \Upsilon^{24d+20,L}(\mathbb{R}^d)$ with
\begin{equation}
    L\leq \sum_{l=0}^{l^*} L_l,
\end{equation}
where the depths $L_l$ for each level are bounded using Proposition \ref{one-level-deep-relu-approximation} by
\begin{equation}
    L_l \leq C\begin{cases}
        m(l) + l + \delta(l)^{-q/2}\sqrt{1+dl\log(b)+q\log(\delta(l))}& \delta(l)^{-q} \leq b^{dl}\\
        m(l) + l + b^{dl/2}\sqrt{1 - \log{\delta(l)} - (dl/q)\log(b)}& \delta(l)^{-q} > b^{dl}.
        \end{cases}
\end{equation}
Plugging in the expressions for $\delta(l)$ and $m(l)$ given in \eqref{delta-choice-nn} and \eqref{m-choice-nn}, and using that $\delta(l)^{-q} \leq b^{dl}$ when $l\geq l_0$ and $\delta(l)^{-q} > b^{dl}$ when $l < l_0$, we get the bound
\begin{equation}
\begin{split}
    L \leq C\left(\sum_{l=0}^{l^*} K_1 l_0 +\left(K_2+1\right)l\right. &+ b^{dl_0/2}\sum_{l=0}^{l_0 - 1}b^{d(l - l_0)/2}\sqrt{1 + \log(b)(s+1)(l-l_0)}\\ &+ \left.b^{dl_0/2}\sum_{l=l_0}^{l^*} b^{-\tau(l-l_0)/2}\sqrt{1 + \log(b)(d/q + \tau)(l-l_0)}\right).
\end{split}
\end{equation}
Summing the series above (and noting that the latter two are bounded by convergent geometric series), we get
\begin{equation}
    L \leq C((l^*)^2 + b^{dl_0/2}).
\end{equation}
Note that here the constant $C$ depends upon the choice of parameters $K_1$ and $K_2$.
Since $l^* \leq \kappa l_0$ is a linear function of $l_0$, the quadratic term $(l^*)^2 \leq (\kappa l_0)^2$ is dominated by the exponential second term. Thus we get $L \leq Cb^{dl_0/2}$ (for a potentially larger constant $C$).

Finally, we bound the error. For this we use Proposition \ref{one-level-deep-relu-approximation} to bound 
\begin{equation}
    \|f_{\delta(l),m(l)} - f_l\|_{L_p(\Omega_{l,\epsilon})} \leq C\left(\delta(l)\min\left\{1, b^{-dl}\delta(l)^{-q}\right\}^{1/p} + 4^{-m(l)}\right)\left(\sum_{|\alpha|\leq k,~\textbf{\upshape i}\in I_l} |a^\alpha_{l,\textbf{\upshape i}}|^q\right)^{1/q}.
\end{equation}
Combining this with the bound \eqref{coefficient-l-q-bound}, plugging in the choices \eqref{delta-choice-nn} and \eqref{m-choice-nn} (here again we have $b^{-dl}\delta(l)^{-q} \leq 1$ when $l \geq l_0$ and $b^{-dl}\delta(l)^{-q} > 1$ when $l < l_0$), and noting that $\Omega_{l,\epsilon} \supset \Omega_{l^*,\epsilon}$ if $l \leq l^*$, we get
\begin{equation}
\begin{split}
    \sum_{l=0}^{l^*} \|f_{\delta(l),m(l)} - f_l\|_{L_p(\Omega_{l^*,\epsilon})} \leq C&\left(\sum_{l=0}^{l_0 - 1}b^{(d/q-s)l}\left[\delta(l) + 4^{-m(l)}\right]\right. + \\&~~~\left.\sum_{l=l_0}^{l^*}b^{(d/q-s)l}\left[\delta(l)^{1 - q/p}b^{-(d/p)l} + 4^{-m(l)}\right]\right).
\end{split}
\end{equation}
Plugging in our choices for $\delta(l)$ and $m(l)$, we calculate
\begin{equation}\label{bound-on-error-sum-2177}
\begin{split}
    \sum_{l=0}^{l^*} \|f_{\delta(l),m(l)} - f_l\|_{L_p(\Omega_{l^*,\epsilon})} \leq C&\left(b^{-sl_0}\sum_{l=0}^{l_0 - 1}b^{l-l_0}\right.\\
    &~~~~+ b^{-sl_0}\sum_{l=l_0}^{l^*} b^{(d/q-d/p-s+\tau(1-q/p))(l-l_0)}\\
    &~~~\left.+ \sum_{l=0}^{l^*}b^{(d/q-s)l}4^{-K_1l_0-K_2l}\right).
\end{split}
\end{equation}
The first sum above is a convergent geometric series and is bounded by $Cb^{-sl_0}$. Due to the choice of $\tau$, the second sum is also a convergent gemoetric series, and is also bounded by $Cb^{-sl_0}$. Choosing $K_1$ and $K_2$ large enough so that $4^{-K_1} \leq b^{-s}$ and $4^{-K_2} < b^{(s-d/q)}$, the final sum is also a convergent geometric series which is bounded by $Cb^{-sl_0}$. Thus, we obtain
\begin{equation}
    \sum_{l=0}^{l^*} \|f_{\delta(l),m(l)} - f_l\|_{L_p(\Omega_{l^*,\epsilon})} \leq Cb^{-sl_0}
\end{equation}
for an appropriate constant $C$. Finally, we estimate
\begin{equation}
\begin{split}
    \|f - f_{l_0,\epsilon}\|_{L_p(\Omega_{l^*,\epsilon})} \leq \sum_{l=0}^{l^*} \|f_{\delta(l),m(l)} - f_l\|_{L_p(\Omega_{l^*,\epsilon})} + \sum_{l=l^*+1}^\infty \|f_l\|_{L_p(\Omega_{l^*,\epsilon})}.
\end{split}
\end{equation}
Utilizing \eqref{bound-on-error-sum-2177} and \eqref{l-p-tail-bounds} we get
\begin{equation}
    \|f - f_{l_0,\epsilon}\|_{L_p(\Omega_{l^*,\epsilon})} \leq Cb^{-sl_0} + C\sum_{l=l^*+1}^\infty b^{(d/q - d/p - s)l}.
\end{equation}
The compact Sobolev embedding condition implies that the second sum is a convergent geometric series, bounded by a multiple of its first term. This gives
\begin{equation}
    \|f - f_{l_0,\epsilon}\|_{L_p(\Omega_{l^*,\epsilon})} \leq C(b^{-sl_0} + b^{(d/q - d/p - s)l^*}).
\end{equation}
Finally, we use the definition of $l^*$ and $\kappa$ to see that
\begin{equation}
    b^{(d/q - d/p - s)l^*} \leq Cb^{-sl_0},
\end{equation}
which completes the proof.
\end{proof}
We note that a completely analogous Proposition holds for the Besov spaces $B^s_r(L_q(\Omega))$, i.e. Proposition \ref{main-theorem-trifling-region} holds with the Sobolev space $W^s(L_q(\Omega))$ replaced by $B^s_r(L_q(\Omega))$. The proof is exactly the same, utilizing a piecewise polynomial approximation, with the main difference being that the Bramble-Hilbert lemma is replaced by the following bound on piecewise polynomial approximation of Besov functions, known as Whitney's theorem (see \cite{devore1988interpolation}, Section 3 for instance)
\begin{equation}
    \|\Pi_{k-1}^0(f) - f\|_{L_q(\Omega)} \leq C\omega_k(f,1)_q,
\end{equation}
where $\Omega = [0,1]^d$ is the unit cube, $\omega_k$ is the modulus of smoothness introduced in \eqref{definition-of-modulus-of-smoothness}, and the constant depends upon $d,q$ and $k$. The proof proceeds via the same scaling argument, with the sub-additivity \eqref{sobolev-subadditivity} replaced by the corresponding result for the modulus of smoothness
$$
\sum_{\textbf{i}\in I_{l-1}} \omega_k(f,t,\Omega_{\textbf{i}}^{l-1})^q_q \leq C\omega_k(f,t)^q_q.
$$
Note that here the modulus of smoothness on the left hand side is taken relative to subdomain $\Omega_{\textbf{i}}^{l-1}$, while on the right the modulus is taken relative to the whole domain $\Omega$. This inequality holds for a constant $C$ depending on $d,q$ and $k$ (see \cite{devore1993besov}).
Finally, we use the bound
\begin{equation}
    \omega_k(f,t)_q \leq Ct^s\|f\|_{B^s_r(L_q(\Omega))},
\end{equation}
which holds as long as $k > s$, to obtain a bound on the error in terms of the Besov norm from a bound in terms of the modulus of smoothness.

Finally, we show how to remove the trifling region to give a proof of Theorem \ref{deep-network-upper-bound-theorem}. This is a technical construction similar to the method in \cite{shen2022optimal,lu2021deep,shijun2021deep}, but we significantly reduce the size of the required network (in particular the width no longer depends exponentially on the input dimension) by using the sorting network construction from Corollary \ref{order-statistic-network-corollary}. 

In addition to using sorting network, in our approach we use different bases $b_i$ to create minimally overlapping trifling regions. This is somewhat different than the aforementioned approaches \cite{shen2022optimal,lu2021deep,shijun2021deep}, which shift the grid to achieve the same effect. The reason we did this is to avoid the use of Sobolev and Besov extension theorems, as our method allows everything to stay within the unit cube. Although such extension theorems could be used, they become quite technical in full generality, and so we have found our approach simpler.

The proof of Theorem \ref{deep-network-upper-bound-theorem-besov} is completely analogous using Proposition \ref{main-theorem-trifling-region} with the Sobolev spaces replaced by Besov spaces, and is omitted.
\begin{proof}[Proof of Theorem \ref{deep-network-upper-bound-theorem}]
    We assume without loss of generality that $f\in W^s(L_q(\Omega))$ has been normalized, i.e. so that $\|f\|_{W^s(L_q(\Omega))} \leq 1$.
    
    In order to remove the trifling region from the preceding construction we will make use of different bases $b$. Let $r$ be the smallest integer such that $2^r \geq 2d + 2$ (so that $2^r \leq 4d + 4$), set $m = 2^r$, and set $b_i = \pi_i$ (the $i$-th prime number) for $i=1,...,m$.

    Let $n \geq b_m$ be an integer. We will construct a network $f_L\in \Upsilon^{30d+24,L}(\mathbb{R}^d)$ such that
    \begin{equation}
        \|f - f_L\|_{L_p(\Omega)} \leq Cn^{-s}
    \end{equation}
    with depth $L \leq Cn^{d/2}$, which will complete the proof.

    For $i=1,...,m$, set $l_i = \lfloor\log(n)/\log(b_i)\rfloor$ to be the largest power of $b_i$ which is at most $n$, and write $l_i^* = \lfloor \kappa l_i\rfloor$ where $\kappa$ is defined as in Proposition \ref{main-theorem-trifling-region}.
    Note that since the $\pi_i$ are all pairwise relatively prime, the numbers
    \begin{equation}
        S := \left\{\frac{1}{\pi_1^{l^*_1}},...,\frac{\pi_1^{l^*_1}-1}{\pi_1^{l^*_1}},\frac{1}{\pi_2^{l^*_2}},...,\frac{\pi_2^{l^*_2}-1}{\pi_2^{l^*_2}},...,\frac{1}{\pi_m^{l^*_m}},...,\frac{\pi_m^{l^*_m}-1}{\pi_m^{l^*_m}}\right\}
    \end{equation}
    are all distinct. Choose an $\epsilon > 0$ which satisfies
    \begin{equation}\label{epsilon-condition}
        \epsilon < \min_{x\neq y\in S} |x-y|,
    \end{equation}
    i.e. which is smaller than the distance between the two closest elements of $S$. This $\epsilon$ has the property that any $x\in [0,1]$ is contained in at most one of the sets
    \begin{equation}\label{bad-sets-equation}
        [j\pi_i^{-l^*_i}-\epsilon,j\pi_i^{-l^*_i})~\text{for}~i=1,...,m~\text{and}~j=1,...,\pi_i^{l^*_i}-1.
    \end{equation}
    This means that for any $x\in \Omega$, we have
    $
        x\notin \Omega_{l_i^*,\epsilon}
    $
    for at most $d$ different values $i$. Here $\Omega_{l_i^*,\epsilon}$ is the good region at level $l_i^*$ with base $b_i$. This holds since  $x$ has $d$ coordinates and each coordinate can be contained in at most one bad set from \eqref{bad-sets-equation}.

    We now use Proposition \ref{main-theorem-trifling-region}, setting $l_0 = l_i$ and using an $\epsilon$ satisfying \eqref{epsilon-condition}, to construct $f_i\in \Upsilon^{24d+20,L}(\mathbb{R}^d)$ which satisfies
    \begin{equation}
        \|f - f_i\|_{L_p(\Omega_{l_i^*,\epsilon})} \leq C\pi_i^{-sl_i} \leq Cn^{-s}
    \end{equation}
    and has depth bounded by
    \begin{equation}
        L\leq C\pi_i^{dl_i/2} \leq Cn^{d/2}.
    \end{equation}

    Finally, we construct the following network. We sequentially duplicate the input and apply the network $f_i$ to the new copy using Lemma \ref{select-coordinates-lemma} to get
    \begin{equation}\label{apply-all-f-i-network}
        x\rightarrow \begin{pmatrix}
            x\\
            x
        \end{pmatrix}\rightarrow \begin{pmatrix}
            x\\
            f_1(x)
        \end{pmatrix}\rightarrow \begin{pmatrix}
            x\\
            x\\
            f_1(x)
        \end{pmatrix}\rightarrow \begin{pmatrix}
            x\\
            f_2(x)\\
            f_1(x)
        \end{pmatrix}\rightarrow \cdots \rightarrow \begin{pmatrix}
            f_m(x)\\
            \vdots\\
            f_2(x)\\
            f_1(x)
        \end{pmatrix}\in \Upsilon^{30d+24,L}(\mathbb{R}^d,\mathbb{R}^m)
    \end{equation}
    with $L\leq C\sum_{i=1}^m\pi_i^{dl_i/2} \leq Cn^{d/2}.$

    We construct the network $f_L\in \Upsilon^{30d+24,L}(\mathbb{R}^d)$ by composing the network from \eqref{apply-all-f-i-network} with the order statistic network which selects the median, i.e. the $m/2$-largest value. By construction the network depth of $f_L$ satisfies
    \begin{equation}
        L\leq Cn^{d/2} + \binom{m+1}{2} \leq Cn^{d/2},
    \end{equation}
    since $\binom{m+1}{2}$ is a constant independent of $n$.

    To bound the approximation error of $f_L$ we introduce the following notation. Given $x\in [0,1)^d$, we write
    \begin{equation}
        \mathcal{K}(x) = \{i:~x\in \Omega_{l_i^*,\epsilon}\}
    \end{equation}
    for the set of indices such that $x$ is contained in the good region for the base $b_i$ decomposition. Since $x$ fails to be in $\Omega_{l_i^*,\epsilon}$ for at most $d$ values of $i$, we get
    \begin{equation}
        |\mathcal{K}(x)| \geq m - d \geq m/2 + 1
    \end{equation}
    since $m \geq 2d+2$. Thus the $m/2$-largest element among the $f_1(x),...,f_m(x)$ is both smaller and larger than some element of $\{f_i(x),~i\in \mathcal{K}(x)\}$, which implies
    \begin{equation}
    	\min_{i\in \mathcal{K}(x)}f_i(x) \leq f_L(x) \leq \max_{i\in \mathcal{K}(x)}f_i(x),
    \end{equation}
    so that
    \begin{equation}
    	|f_L(x) - f(x)| \leq \max_{i\in \mathcal{K}(x)} |f_i(x) - f(x)|.
    \end{equation}
    This completes the proof when $p=\infty$, since if $i\in \mathcal{K}(x)$ then $|f_i(x) - f(x)|\leq Cn^{-s}$ by Proposition \ref{main-theorem-trifling-region} and the definition of $\mathcal{K}(x)$.

    For $p < \infty$, we note that
    \begin{equation}
    \begin{split}
    	\int_{\Omega} |f_L(x) - f(x)|^p dx \leq \int_{\Omega} \max_{i\in \mathcal{K}(x)} |f_i(x) - f(x)|^pdx &\leq \int_{\Omega}\sum_{i\in \mathcal{K}(x)} |f_i(x) - f(x)|^pdx\\
	&\leq \sum_{i=1}^{m} \|f_i - f\|_{L_p(\Omega_{l^*_i,\epsilon})}^p\\
	&\leq Cn^{-sp}.
    \end{split}
    \end{equation}
    Taking $p$-th roots completes the proof.
\end{proof}

\section{Lower Bounds}\label{lower-bounds-section}
In this section, we study lower bounds on the approximation rates that deep ReLU neural networks can achieve on Sobolev spaces. Our main result is to prove Theorem \ref{deep-network-lower-bound-theorem}, which shows that the construction of Theorem \ref{deep-network-upper-bound-theorem} is optimal in terms of the number of parameters. In addition, we show that the representation of sparse vectors proved in Theorem \ref{sparse-approximation-theorem} is optimal. 

The key concept is the notion of VC dimension, which was used in \cite{yarotsky2018optimal,shen2022optimal} to prove lower bounds for approximation in the $L_\infty$-norm. We generalize these results to obtain sharp lower bounds on the approximation in $L_p$ as well.
Let $K$ be a class of functions defined on $\mathbb{R}^d$. The VC-dimension \cite{vapnik2015uniform} of $K$ is defined to be the largest number $n$ such that there exists a set of points $x_1,...,x_n\in \Omega$ such that
\begin{equation}
    |\{(\sign(g(x_1)),...,\sign(g(x_n))),~g\in K\}| = 2^n,
\end{equation}
i.e. such that every sign pattern at the points $x_1,...,x_n$ can be matched by a function from $K$. Such a set of points is said to be shattered by $K$.

The VC dimension of classes of functions defined by neural networks has been extensively studied and the most precise results are available for piecewise polynomial activation functions. We will discuss two main results concerning the VC dimension of $\Upsilon^{W,L}(\mathbb{R}^d)$. The first bound is most useful when the depth $L$ is fixed and the width $W$ is large and is given by
\begin{equation}\label{VC-dimension-bound-1}
    \text{VC-dim}(\Upsilon^{W,L}(\mathbb{R}^d)) \leq C(W^2L^2\log(WL)).
\end{equation}
This was proved in Theorem 6 of \cite{bartlett2019nearly}. The second bound, which is most informative when the width $W$ is fixed and the depth $L$ is large is
\begin{equation}\label{VC-dimension-bound}
    \text{VC-dim}(\Upsilon^{W,L}(\mathbb{R}^d)) \leq C(W^3L^2).
\end{equation}
This was proved in Theorem 8 of \cite{bartlett2019nearly} using a technique developed in \cite{goldberg1993bounding}. In either case, the VC-dimension of a deep ReLU neural network with $P = O(W^2L)$ parameters is bounded by $CP^2$, with this bound achieved up to a constant only in the case where the width $W$ is fixed and the depth $L$ grows. This bound on the VC-dimension was used in \cite{yarotsky2018optimal,shen2022optimal} to prove Theorem \ref{deep-network-lower-bound-theorem} in the case $p=\infty$. However, in order to extend the lower bound to $p < \infty$ a more sophisticated analysis is required. The key argument is captured in the following Proposition.
\begin{theorem}\label{main-theorem}
    Let $p > 0$, $\Omega = [0,1]^d$ and suppose that $K$ is a translation invariant class of functions whose VC-dimension is at most $n$. By translation invariant we mean that $f\in K$ implies that $f(\cdot - v)\in K$ for any fixed vector $v\in \mathbb{R}^d$. Then there exists an $f\in W^s(L_\infty(\Omega)) \cap B^s_1(L_\infty(\Omega))$ such that
    \begin{equation}
    \inf_{g\in K} \|f - g\|_{L_p(\Omega)} \geq C(p,d,s)n^{-\frac{s}{d}} \max\left\{\|f\|_{W^s(L_\infty(\Omega))}, \|f\|_{B^s_1(L_\infty(\Omega))}\right\}.
\end{equation}
\end{theorem}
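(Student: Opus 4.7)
My plan is to construct an explicit hard family $\{f_\sigma\}_{\sigma\in\{\pm 1\}^N}$ of Sobolev- and Besov-normalized bump functions and then use the VC-dimension of $K$, via a counting argument, to show that some $f_\sigma$ resists approximation by $K$ at the stated rate. Fix a smooth bump $\phi\geq 0$ with $\phi(0)=1$ supported in $[-1/2,1/2]^d$, set $h=N^{-1/d}$, and place disjoint translates $\phi_i(x)=\phi((x-x_i)/h)$ on a regular grid $\{x_i\}_{i=1}^N\subset\Omega$; write $B_i=\mathrm{supp}(\phi_i)$ and $f_\sigma=A\sum_i\sigma_i\phi_i$. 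Because the supports are pairwise disjoint, scaling of derivatives (for $W^s$) and splitting the modulus-of-smoothness integral at scale $h$ (for $B^s_1$) give $\|f_\sigma\|_{W^s(L_\infty)},\|f_\sigma\|_{B^s_1(L_\infty)}\leq C A h^{-s}$, so choosing $A=cN^{-s/d}$ normalizes both semi-norms to order one. The parameter $N$ will eventually be taken of the form $N=C(p,d,s)\,n$.

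The second step converts $L_p$-closeness into Hamming closeness of sign patterns. For each $g\in K$ define the proxy sign $\tau_i(g):=\sign\bigl(\int_\Omega g(x)\phi_i(x)\,dx\bigr)$. If $\sigma_i\neq\tau_i(g)$, then H\"older applied to $\int_{B_i}(A\sigma_i\phi_i-g)\phi_i\,dx$, combined with $\int\phi_i^2=h^d\|\phi\|_{L_2}^2$, forces $\|f_\sigma-g\|_{L_p(B_i)}\geq cA h^{d/p}$ and hence $\int_{B_i}|f_\sigma-g|^p\geq c A^p h^d$. Summing the contributions from the ``wrong'' indices yields
\[
   \|f_\sigma-g\|_{L_p(\Omega)}^p \geq cA^p h^d\,d_H(\sigma,\tau(g)) \;=\; cA^p N^{-1}\,d_H(\sigma,\tau(g)),
\]
so $\|f_\sigma-g\|_{L_p}\leq c'A$ forces $d_H(\sigma,\tau(g))\leq N/4$ provided $c'$ is small enough.

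To conclude, suppose for contradiction that $\inf_{g\in K}\|f_\sigma-g\|_{L_p}<c'A$ for every $\sigma$. Then the set $T:=\{\tau(g):g\in K\}$ covers the hypercube $\{\pm 1\}^N$ by Hamming balls of radius $N/4$; since such a ball has cardinality at most $2^{H(1/4)N}\leq 2^{0.82N}$ we obtain $|T|\geq 2^{0.18N}$. On the other hand, the translation invariance of $K$ lets me write $\tau_i(g)=\sign\bigl((g*\check\phi)(x_i)\bigr)$, and a careful quadrature --- replacing $\phi$ by a step approximation on a sub-scale $\ll h$ --- expresses each $\tau_i(g)$ as the sign of a fixed positive weighted combination of point evaluations $g(y_{i,k})$ on an $O(N)$-sized enlarged grid. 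Applying Sauer--Shelah to $K$ on this grid and combining with a sign-pattern bound for weighted combinations (in the spirit of the Warren--Milnor polynomial sign count used in the Bartlett--Harvey--Liaw--Mehrabian VC-dimension arguments) then yields $|T|\leq (eN/n)^n$. For $N=Cn$ with $C=C(p,d,s)$ sufficiently large, $(eN/n)^n<2^{0.18N}$, a contradiction, and the resulting $\sigma$ gives the desired $f_\sigma$.

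The main obstacle in this plan is the upper bound on $|T|$: the proxy $\tau_i(g)$ is the sign of a linear functional of $g$ rather than a point evaluation, so Sauer--Shelah applied to $K$ does not directly produce the needed count. The two most delicate aspects are (i) choosing the quadrature scale so that the discretization error of $\int g\phi_i$ is dominated by the margin $Ah^{d/p}$ used in step two, which may require first truncating $g$ on a suitable scale, and (ii) bounding the number of sign patterns of $N$ positive weighted combinations of point evaluations of $g$ as $g$ ranges over a class of VC-dimension $n$. The translation invariance hypothesis is essential here: it ensures that the same base linear functional is applied at each grid point, reducing the count to a sign-pattern count for $K$ on an enlarged grid rather than to controlling the VC-dimension of a potentially richer convolved class, and it is the mechanism by which the bound avoids the extraneous logarithmic factors present in the parallel approach of \cite{achour2022general}.
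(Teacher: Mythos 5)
Your approach has a genuine gap in exactly the place you flag as ``the main obstacle,'' and it is not repairable under the hypotheses of the theorem. You define the proxy signs $\tau_i(g)=\sign\bigl(\int_\Omega g\,\phi_i\bigr)$ and then want to bound $|\{\tau(g):g\in K\}|$ using the VC-dimension of $K$. But the VC-dimension of $K$ controls the number of \emph{pointwise} sign patterns $(\sign(g(y_1)),\ldots,\sign(g(y_m)))$, and says nothing directly about the number of sign patterns of linear functionals such as weighted sums $\sum_k w_{i,k} g(y_{i,k})$. The Warren-style polynomial sign-counting arguments you invoke (as in Bartlett--Harvey--Liaw--Mehrabian) bound sign patterns of polynomials in the \emph{parameters} of an explicitly parametrized family; they are not applicable to an abstract class given only by a VC-dimension bound. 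And in fact the class $\{\sign(\int g\,\phi_i)\}$ can shatter arbitrarily large index sets even when $K$ has tiny pointwise-sign VC-dimension, since VC-dimension places no constraint on the magnitudes $g(y)$ that feed into a weighted average. Your quadrature step also accumulates error of size $\sup|g|\cdot(\text{mesh size})$, which is not controlled by any hypothesis on $K$; truncating $g$ changes the class and breaks the VC control.

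The paper sidesteps this entirely by never leaving the pointwise-sign setting. It takes a single shift $\lambda\in[0,\epsilon)^d$, looks at the pointwise sign pattern $\sign(g|_{X_\lambda})$ (directly controlled by Sauer--Shelah), and uses translation invariance to conclude that the \emph{set} of achievable pointwise sign patterns is the same for every shift; a single bad sign vector $\alpha$ is then found by a Hamming-ball covering argument. The crucial step you are missing is the one that converts pointwise Hamming distance into $L_p$ error: for a positive bump $\phi$, whenever $\alpha(z)\ne\sign(g(\lambda+\epsilon z))$ one has the pointwise lower bound $|\alpha(z)\phi(k\lambda)-g(\lambda+\epsilon z)|\ge\phi(k\lambda)$; summing over the $\ge ck^d$ mismatched indices and then \emph{integrating in the shift variable} $\lambda$ over the cell $[0,\epsilon)^d$ gives $\int_\Omega|f-g|^p\ge c\int_\Omega\phi^p$. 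Translation invariance enters not through convolution but through this averaging over $\lambda$: it guarantees that the same set of achievable sign patterns applies uniformly to all shifts, so the Hamming lower bound holds on every shifted grid simultaneously. No integral proxy and no quadrature are needed, and no sign-pattern bound beyond Sauer--Shelah is required. I recommend you replace your integral proxies with pointwise signs at shifted grids and use the averaging-over-shifts mechanism to close the argument.
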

Although the translation invariance holds for many function classes of interest, it is an interesting problem whether it can be removed. Before proving this result, we first show how Theorem \ref{deep-network-lower-bound-theorem} follows from this.
\begin{proof}[Proof of Theorem \ref{deep-network-lower-bound-theorem}]
	Note that the class of deep ReLU networks $\Upsilon^{W,L}(\mathbb{R}^d)$ is translation invariant. Combining this with the VC-dimension bounds \eqref{VC-dimension-bound-1} and \eqref{VC-dimension-bound}, Theorem \ref{main-theorem} implies Theorem \ref{deep-network-lower-bound-theorem} in the case $q = \infty$ and $r=1$. The general case follows trivially since $W^s(L_\infty(\Omega))\subset W^s(L_q(\Omega))$ for any $q \leq \infty$, and $B^s_1(L_\infty(\Omega) \subset B^s_r(L_q(\Omega))$ for $r \geq 1$ and $q\leq \infty$.
\end{proof}

Let us turn to the proof of Theorem \ref{main-theorem}. A key ingredient is the well-known Sauer-Shelah lemma \cite{sauer1972density,shelah1972combinatorial}.
\begin{lemma}[Sauer-Shelah Lemma]\label{sauer-shelah-lemma}
    Suppose that $K$ has VC-dimension at most $n$. Given any collection of $N$ points $x_1,...,x_N\in \Omega$, we have
    \begin{equation}\label{sauer-shelah-bound}
        |\{(\sign(g(x_1)),...,\sign(g(x_N))),~g\in K\}| \leq \sum_{i=0}^n\binom{N}{i}.
    \end{equation}
\end{lemma}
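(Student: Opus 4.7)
The plan is to reduce the statement to the classical combinatorial Sauer--Shelah bound on set systems and then prove that bound by induction. For each $g \in K$ record the sign vector $\sigma(g) = (\sign(g(x_1)),\ldots,\sign(g(x_N)))$, so the quantity to bound is $|F|$ where $F = \{\sigma(g) : g \in K\}$. The VC-dimension hypothesis says exactly that no subset $S \subseteq \{x_1,\ldots,x_N\}$ of size greater than $n$ is \emph{shattered} by $K$, i.e.\ the projection of $F$ to the coordinates indexed by $S$ does not attain all $2^{|S|}$ possible sign patterns. Regarding $F$ as a subset of $\{0,1\}^N$ (using the convention implicit in the paper's VC-dimension definition, where $\sign$ takes two effective values), $F$ corresponds to a set system $\mathcal{F}\subseteq 2^{[N]}$ that shatters no subset of size greater than $n$, and the lemma reduces to proving
\begin{equation}
|\mathcal{F}| \leq \sum_{i=0}^n \binom{N}{i}.
\end{equation}

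I would prove this combinatorial inequality by induction on $N + n$, with trivial base cases ($N = 0$, or $n = 0$ where any $\mathcal{F}$ that shatters no non-empty set must have at most one element, since two distinct sets $A\neq B$ would differ in some coordinate $i$ and hence shatter $\{i\}$). For the inductive step, split $\mathcal{F}$ according to the last coordinate by setting
\begin{equation}
\mathcal{F}_0 := \{A \setminus \{N\} : A \in \mathcal{F}\}, \qquad \mathcal{F}_1 := \{A \subseteq [N-1] : A \in \mathcal{F} \text{ and } A \cup \{N\} \in \mathcal{F}\},
\end{equation}
both viewed as set systems on $[N-1]$, and observe that $|\mathcal{F}| = |\mathcal{F}_0| + |\mathcal{F}_1|$. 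A set $S \subseteq [N-1]$ shattered by $\mathcal{F}_1$ would force $S \cup \{N\}$ to be shattered by $\mathcal{F}$, so $\mathcal{F}_1$ shatters no set of size exceeding $n - 1$; meanwhile every set shattered by $\mathcal{F}_0$ is shattered by $\mathcal{F}$, so $\mathcal{F}_0$ shatters no set of size exceeding $n$. The inductive hypothesis then gives $|\mathcal{F}_0| \leq \sum_{i=0}^n \binom{N-1}{i}$ and $|\mathcal{F}_1| \leq \sum_{i=0}^{n-1}\binom{N-1}{i}$, and Pascal's identity $\binom{N-1}{i} + \binom{N-1}{i-1} = \binom{N}{i}$ closes the induction.

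The only genuine subtlety is the reduction step: strictly speaking $\sign$ can take three values $\{-1,0,+1\}$, but since the paper's VC-dimension definition counts $2^n$ patterns the ambient alphabet is effectively binary, and one can make this precise by redefining $\sign(0) := +1$ (or by a small perturbation argument). This modification only enlarges $|F|$, so the desired upper bound transfers. After this cosmetic fix, the argument above is a standard induction requiring no ingredients beyond the set-splitting and Pascal's rule, and I do not expect any further obstacles.
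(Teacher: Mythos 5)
The paper does not prove this lemma at all---it simply cites the classical references \cite{sauer1972density,shelah1972combinatorial}---so there is no in-paper proof to compare against. Your inductive argument (splitting $\mathcal{F}$ into $\mathcal{F}_0$ and $\mathcal{F}_1$ on the last coordinate, observing that $\mathcal{F}_0$ has VC-dimension $\leq n$ and $\mathcal{F}_1$ has VC-dimension $\leq n-1$, and closing with Pascal's rule) is the standard and correct proof of the binary Sauer--Shelah lemma.

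One small slip: in addressing the three-valued-$\sign$ annoyance, you assert that redefining $\sign(0) := +1$ ``only enlarges $|F|$.'' That is backwards. Collapsing $0$ and $+1$ to a single symbol can only \emph{merge} formerly distinct sign patterns, so it can only \emph{shrink} $|F|$; it follows that bounding the collapsed (binary) count does not a priori bound the three-valued count the lemma nominally refers to. The correct resolution is that the paper's definition of VC-dimension is implicitly binary all along (it counts $2^n$ patterns, consistent with the literature it cites, e.g.\ \cite{bartlett2019nearly}), so one should simply read $\sign$ as a two-valued function from the outset (say $\sign(t)=1$ iff $t\geq 0$); then both the hypothesis and the conclusion are genuinely binary and your reduction goes through with no loss. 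Alternatively, a perturbation argument that replaces $g$ by $g+\varepsilon$ for generic $\varepsilon$ avoids zeros entirely. With that adjustment your proof is complete.
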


We will also utilize the following elementary bound on the size of a Hamming ball.
\begin{lemma}\label{binomial-sum-bound}
    Suppose that $N \geq 2n$, then
    \begin{equation}
        \sum_{i=0}^n\binom{N}{i} \leq 2^{NH(n/N)},
    \end{equation}
    where $H(p)$ is the entropy function
    \begin{equation}
        H(p) = -p\log(p)-(1-p)\log(1-p).
    \end{equation}
    (Note that all logarithms here are taken base $2$.)
\end{lemma}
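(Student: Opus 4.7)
The plan is to use the standard entropy/Chernoff style argument that bounds the lower tail of a binomial distribution. Set $p = n/N$, so by the hypothesis $N \geq 2n$ we have $p \leq 1/2$, and hence $p/(1-p) \leq 1$. The starting point is the identity
\begin{equation}
1 = \sum_{i=0}^{N} \binom{N}{i} p^i (1-p)^{N-i},
\end{equation}
which comes from expanding $(p + (1-p))^N$ by the binomial theorem.

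First I would drop all terms with $i > n$ to get
\begin{equation}
1 \geq \sum_{i=0}^{n} \binom{N}{i} p^i (1-p)^{N-i} = (1-p)^N \sum_{i=0}^{n} \binom{N}{i} \left(\frac{p}{1-p}\right)^i.
\end{equation}
Since the ratio $p/(1-p) \leq 1$ and each exponent $i$ is at most $n$, every factor $(p/(1-p))^i$ is bounded below by $(p/(1-p))^n$. Pulling this out of the sum yields
\begin{equation}
1 \geq (1-p)^N \left(\frac{p}{1-p}\right)^n \sum_{i=0}^{n} \binom{N}{i} = p^n (1-p)^{N-n} \sum_{i=0}^{n} \binom{N}{i}.
\end{equation}

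Rearranging and substituting $p = n/N$ gives
\begin{equation}
\sum_{i=0}^{n} \binom{N}{i} \leq (n/N)^{-n} (1 - n/N)^{-(N-n)}.
\end{equation}
Taking $\log_2$ of both sides and recognizing the right-hand side as $N\bigl[-(n/N)\log_2(n/N) - (1-n/N)\log_2(1-n/N)\bigr] = N H(n/N)$ completes the proof. There is no real obstacle here; the only subtlety is the hypothesis $N \geq 2n$, which is precisely what ensures $p/(1-p) \leq 1$ so that the monotonicity step in the third display is valid. (If one only had $n \leq N$ without $N \geq 2n$, the monotonicity would go the wrong way and the trivial bound $\sum_{i=0}^n \binom{N}{i} \leq 2^N$ would still be consistent with $H(n/N)$ possibly exceeding $1$, but equality is attained at $n = N/2$.)
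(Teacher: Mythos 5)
Your proof is correct and is essentially the same as the paper's: both arguments compare each term of $\sum_{i=0}^n \binom{N}{i}$ against the corresponding term of the binomial expansion of $(p+(1-p))^N$ with $p=n/N$, using $N\geq 2n$ (equivalently $p/(1-p)\leq 1$) to justify the termwise lower bound, and then rearrange and take logarithms. The paper just multiplies through by $p^n(1-p)^{N-n}$ at the start rather than dividing at the end.
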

\begin{proof}
    Observe that since $N - n \geq n$, we have
    \begin{equation}
    \begin{split}
        \left(\frac{N-n}{N}\right)^{N-n}\left(\frac{n}{N}\right)^n\sum_{i=0}^n\binom{N}{i} &\leq \sum_{i=0}^n\binom{N}{i}\left(\frac{N-n}{N}\right)^{N-i}\left(\frac{n}{N}\right)^i\\
        & < \sum_{i=0}^N\binom{N}{i}\left(\frac{N-n}{N}\right)^{N-i}\left(\frac{n}{N}\right)^i = 1.
    \end{split}
    \end{equation}
    This means that
    \begin{equation}
        \sum_{i=0}^n\binom{N}{i} \leq \left[\left(\frac{N-n}{N}\right)^{N-n}\left(\frac{n}{N}\right)^n\right]^{-1}.
    \end{equation}
    Taking logarithms, we obtain
    \begin{equation}
        \log\left(\sum_{i=0}^n\binom{N}{i}\right) \leq -N\left((N-n)\log\left(\frac{N-n}{N}\right) + n\log\left(\frac{n}{N}\right)\right) = NH(n/N)
    \end{equation}
    as desired.
\end{proof}
Utilizing these lemmas, we give the proof of Theorem \ref{main-theorem}.
\begin{proof}[Proof of Theorem \ref{main-theorem}]
    Let $c < 1/2$ be chosen so that $H(c) < 1/2$ (for instance $c = 0.1$ will work) and fix $$k := \lceil \sqrt[d]{n/c}\rceil \leq C(d)n^{1/d},$$
    and $\epsilon := k^{-1}$.
    Next, we consider shifts of an equally spaced grid with side length $\epsilon$. Specifically, for each $\lambda\in [0,\epsilon)^d$, define the point set
    \begin{equation}
        X_{\lambda} = \left\{\lambda + \epsilon z,~z \in [k]^d\right\},
    \end{equation}
    where we have written $[k]:=\{0,...,k-1\}$ for the set of integers from $0$ to $k-1$.
    
    Let us now investigate the set of sign patterns which the class $K$ can match on $X_\lambda$. To do this, we will introduce some notation. For a function $g\in K$, we write
    \begin{equation}
        \sign(g|_{X_\lambda}) \in \{\pm 1\}^{[k]^d},~\sign(g|_{X_\lambda})(z) = \sign(g(\lambda + \epsilon z))
    \end{equation}
    for the set of signs which $g$ takes at the (shifted) grid points $X_\lambda$. Here the vector $\sign(g|_{X_\lambda})$ is indexed by the coordinate $z\in [k]^d$ which specifies the location of a point in the shifted grid $X_\lambda$.
    
    We write
    \begin{equation}
        \sign(K|_{X_\lambda}) := \left\{\sign(g|_{X_\lambda}),~g\in K\right\}\subset \{\pm 1\}^{[k]^d}
    \end{equation}
    for the set of sign patterns attained by the class $K$ on $X_\lambda$. Observe that since $K$ is assumed to be translation invariant, the set $\sign(K|_{X_\lambda})$ is independent of the shift $\lambda$. To see this, let $\lambda,\mu\in [0,\epsilon)^d$ be two different shifts and let $g\in K$. By the translation invariance, we find that the function $g'$ defined by
    $$g'(x) = g(x + \lambda - \mu)$$
    is also in $K$. We easily calculate that
    \begin{equation}
        \sign(g|_{X_\lambda}) = \sign(g'|_{X_\mu}),
    \end{equation}
    which implies that $\sign(K|_{X_\lambda}) = \sign(K|_{X_\mu})$. In the following we simplify notation and write $\sign(K) \subset \{\pm 1\}^{[k]^d}$ for this set.

    Next, we will show that there exists a choice of signs $\alpha\in \{\pm 1\}^{[k]^d}$ which differs from every element of $\sign(K)$ in a constant fraction of its entries. To do this, it is convenient to use the notion of the Hamming distance between two sign patterns, which is defined as the number of indices in which they differ, i.e.
    \begin{equation}
        d_H(\alpha,\beta) := |\{z\in [k]^d:~\alpha(z) \neq \beta(z)\}|.
    \end{equation}
    We also use the notion of the Hamming ball of radius $m$ around a sign pattern $\alpha\in \{\pm 1\}^{[k]^d}$, which is defined to be the set of sign patterns which differ from $\alpha$ by at most $m$ entries, i.e.
    \begin{equation}
        B_H(\alpha,m) = \{\beta\in \{\pm 1\}^{[k]^d},~d_H(\alpha,\beta) \leq m\}.
    \end{equation}
    We note that Lemma \ref{binomial-sum-bound} implies the following estimate on the size of $B_H(\alpha,m)$ when $2m < k^d$:
    \begin{equation}
        |B_H(\alpha,m)| = \sum_{i=0}^m \binom{k^d}{i} \leq 2^{k^dH(m/k^d)}.
    \end{equation}
    
    Further, our assumption on the VC-dimension of $K$ combined with Lemmas \ref{sauer-shelah-lemma} and \ref{binomial-sum-bound} implies that
    \begin{equation}
        |\sign(K)| \leq 2^{k^dH(n/k^d)} \leq 2^{k^dH(c)} < 2^{k^d/2}
    \end{equation}
    from our choice of $c$. If we choose $m := \lfloor ck^d\rfloor \leq ck^d$, it follows that
    \begin{equation}
        \left|\bigcup_{\beta\in \sign(K)} B_H(\beta,m)\right| < 2^{k^d/2}2^{k^dH(m/k^d)} < 2^{k^d/2}2^{k^dH(c)} < 2^{k^d},
    \end{equation}
    so that there must exist an $\alpha\in \{\pm 1\}^{[k]^d}$ such that
    $$\alpha\notin \bigcup_{\beta\in \sign(K)} B_H(\beta,m),$$
    and hence
    \begin{equation}\label{hamming-distance-lower-bound}
        \inf_{\beta\in \sign(K)} d_H(\alpha, \beta) \geq m+1 \geq ck^d.
    \end{equation}

    Finally, we choose a compactly supported smooth positive bump function $\phi$ whose support is strictly contained in the unit cube $\Omega$ and consider the function
    \begin{equation}
        f(x) = \sum_{z\in [k]^d} \alpha(z) \phi(kx - z).
    \end{equation}
    Since the supports of the functions $\phi(kx - z)$ are all disjoint, we calculate
    \begin{equation}\label{besov-space-bound}
        \|f\|_{W^s(L_\infty(\Omega))} = \|\phi(kx - z)\|_{W^s(L_\infty(\Omega))} \leq k^s\|\phi\|_{W^s(L_\infty(\Omega))},
    \end{equation}
    and also
    \begin{equation}\label{besov-space-bound-2}
        \|f\|_{B^s_1(L_\infty(\Omega))} \leq 
        Ck^s\|\phi\|_{B^s_1(L_\infty(\Omega))},
    \end{equation}
    for an appropriate constant $C = C(d,s)$.
    Next, let $g\in K$ be arbitrary. We calculate
    \begin{equation}
    \begin{split}
        \int_{\Omega} |f(x) - g(x)|^pdx &= \int_{[0,\epsilon)^d}\sum_{z\in [k]^d} |f(\lambda + \epsilon z) - g(\lambda + \epsilon z)|^p d\lambda\\
        &= \int_{[0,\epsilon)^d} \sum_{z\in [k]^d} |\alpha(z)\phi(k\lambda) - g(\lambda + \epsilon z)|^pd\lambda.
    \end{split}
    \end{equation}
    From equation \eqref{hamming-distance-lower-bound} and the fact that $\sign(g|_{X_\lambda})\in \sign(K)$, we see that
    \begin{equation}
        |\{z\in [k]^d,~\alpha(z) \neq \sign(g(\lambda + \epsilon z))\}| \geq ck^d.
    \end{equation}
    Further, if $\alpha(z) \neq \sign(g(\lambda + \epsilon z))$, then we have the lower bound
    $$|\alpha(z)\phi(k\lambda) - g(\lambda + \epsilon z)| \geq \phi(k\lambda)$$
    since $\phi\geq 0$. This implies that for every $\lambda\in [0,\epsilon)^d$ we have the lower bound
    \begin{equation}
        \sum_{z\in [k]^d} |\alpha(z)\phi(k\lambda) - g(\lambda + \epsilon z)|^p \geq ck^d\phi(k\lambda)^p.
    \end{equation}
    We thus obtain
    \begin{equation}
        \int_{\Omega} |f(x) - g(x)|^pdx \geq ck^d\int_{[0,\epsilon)^d} \phi(k\lambda)^pd\lambda = c\int_{\Omega} \phi(x)^pdx,
    \end{equation}
    from which we deduce
    \begin{equation}
        \|f - g\|_{L_p(\Omega)} \geq c^{\frac{1}{p}}\|\phi\|_{L_p(\Omega)}.
    \end{equation}
Combining this with the bounds \eqref{besov-space-bound} and \eqref{besov-space-bound-2}, using that $k\leq C(d)n^{1/d}$, that $\phi$ is a fixed function, and that $g\in K$ was arbitrary, we get
\begin{equation}
    \inf_{g\in K} \|f - g\|_{L_p(\Omega)} \geq C(d,p)k^{-s}\|f\|_{W^s(L_\infty(\Omega))} \geq C(d,k)n^{-s/d}\max\left\{\|f\|_{W^s(L_\infty(\Omega))}, \|f\|_{B^s_1(L_\infty(\Omega))}\right\},
\end{equation}
as desired.
\end{proof}

We conclude this section by proving that Theorem \ref{sparse-approximation-theorem} is optimal up to a constant as long as the $\ell^1$-norm $M$ is not too large and not too small. Specifically, we have the following.
\begin{theorem}\label{sparse-approximation-lower-bound-theorem}
    Let $M,N\geq 1$ be integers and define
    \begin{equation}
        S_{N,M} = \{\textbf{\upshape  x}\in \mathbb{Z}^N,~\|\textbf{x}\|_{\ell^1} \leq M\}
    \end{equation}
    as in the proof of Theorem \ref{sparse-approximation-theorem}. Suppose that $W,L \geq 1$ are integers and that for any $\textbf{\upshape  x}\in S_{N,M}$ there exists an $f\in \Upsilon^{W,L}(\mathbb{R})$ such that $f(i) = \textbf{\upshape  x}_i$ for $i=1,..,N$. Then there exists a constant $C < \infty$ such that if $C\log(N) < M \leq N$, then
    \begin{equation}
        W^4L^2 \geq C^{-1}M(1+\log(N/M)),
    \end{equation}
    and if $N \leq M < \exp(N/C)$, then
    \begin{equation}
        W^4L^2 \geq C^{-1}N(1+\log(M/N)).
    \end{equation}
\end{theorem}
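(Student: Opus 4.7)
The plan is to argue by a counting argument combining the VC-dimension bounds for deep ReLU networks with the Sauer-Shelah lemma. The premise implies that the collection of integer tuples $(f(1),\ldots,f(N)) \in \{-M,\ldots,M\}^N$ realized by some $f \in \Upsilon^{W,L}(\mathbb{R})$ contains $S_{N,M}$ and hence has size at least $|S_{N,M}|$. Each such tuple is determined by the sign pattern of the $K = N(2M+2)$ differences $\{f(i) - k - 1/2 : i = 1,\ldots,N,\, k \in \{-M-1,\ldots,M\}\}$, so the augmented class $\mathcal{F} := \{(x,t) \mapsto f(x)-t : f \in \Upsilon^{W,L}(\mathbb{R})\}$ realizes at least $|S_{N,M}|$ distinct sign patterns on these $K$ fixed points in $\mathbb{R}^2$. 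Routing the auxiliary variable $t$ through the network and subtracting at the final layer embeds $\mathcal{F}$ into $\Upsilon^{W+2,L}(\mathbb{R}^2)$.

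By the VC-dimension bounds \eqref{VC-dimension-bound-1} and \eqref{VC-dimension-bound} applied to this augmented class, $V := \text{VC-dim}(\mathcal{F}) \leq C\min(W^2L^2\log(WL),\, W^3L^2)$, with the additive $+2$ in the width absorbed into $C$. Sauer-Shelah (Lemma \ref{sauer-shelah-lemma}) together with Lemma \ref{binomial-sum-bound} then yields
\begin{equation*}
|S_{N,M}| \leq (eK/V)^V, \qquad \text{so} \qquad \log|S_{N,M}| \leq V\log(eK/V) \leq V(1+\log K).
\end{equation*}
Elementary counting ($|S_{N,M}| \geq \binom{N}{M}2^M$ when $M \leq N$, and an analogous stars-and-bars argument when $M > N$) provides the matching lower bound $\log|S_{N,M}| \geq cB$, where $B := M(1+\log(N/M))$ in the first case and $B := N(1+\log(M/N))$ in the second. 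Combining these gives $V \geq cB/\log K$.

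To convert this into the claimed $W^4L^2$ lower bound, I would perform a case analysis on the size of $W$ relative to $\log K$. When $W \gtrsim \log K$, applying $V \leq CW^3L^2$ directly gives $W^3L^2 \gtrsim B/\log K$, and multiplying through by $W$ yields $W^4L^2 \gtrsim WB/\log K \gtrsim B$. When $W$ is smaller, using $V \leq CW^2L^2\log(WL)$ instead gives $W^2L^2 \gtrsim B/(\log K \cdot \log(WL))$; here the hypothesis $C\log N < M$ (respectively $M < \exp(N/C)$) with $C$ taken sufficiently large is precisely what guarantees $B \gtrsim \log^2 K$, so that multiplying through by $W^2$ recovers the desired bound. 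The second case $N < M < \exp(N/C)$ is handled by the same chain of inequalities using the corresponding lower bound on $|S_{N,M}|$ and the condition on $M$.

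The main obstacle is managing the logarithmic slack inherent in Sauer-Shelah: a naive VC-dimension argument yields only $W^4L^2 \gtrsim B/\log K$, and it is precisely the hypotheses bounding $M$ from below by $C\log N$ (or from above by $\exp(N/C)$) with $C$ sufficiently large that force $B$ to dominate $\log^2 K$, allowing this extra factor to be absorbed. Deploying both VC-dimension bounds \eqref{VC-dimension-bound-1} and \eqref{VC-dimension-bound} in tandem, choosing whichever is tighter in the regime of $W$ under consideration, is what permits the argument to yield the clean $W^4L^2$ bound uniformly across all admissible $W$ and $L$.
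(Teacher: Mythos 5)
The overall strategy — reduce to a sign-pattern count, use the lower bound $\log|S_{N,M}|\geq cB$ with $B$ the claimed quantity, and play that off against a capacity bound for the network class — is the right one, and the $\mathcal{F}=\{(x,t)\mapsto f(x)-t\}$ embedding with thresholds at $k+\tfrac12$ is a clean way to convert integer values to sign patterns. The gap is in the conversion from the VC-dimension inequality to the $W^4L^2$ lower bound, and it is a real one that the hypothesis on $M$ does not repair.

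Concretely, you derive (in the sparse-coefficient regime)
\[
W^2L^2\log(WL)\;\gtrsim\; V\;\gtrsim\; \frac{B}{\log K},
\qquad\text{equivalently}\qquad
W^2L^2\;\gtrsim\;\frac{B}{\log K\,\log(WL)},
\]
and then multiply by $W^2$. But that gives $W^4L^2\gtrsim W^2B/(\log K\,\log(WL))$, so to reach $W^4L^2\gtrsim B$ you need $W^2\gtrsim \log K\cdot\log(WL)$. In the case you intend this bound for, you have just assumed $W\lesssim\log K$, i.e.\ $W^2\lesssim\log^2 K$, and the second factor $\log(WL)$ involves $L$, which is not constrained by the hypothesis at all. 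Even restricting attention to the nontrivial regime $W,L<K$ (so that $\log(WL)\lesssim\log K$) you only get $W^4L^2\gtrsim W^2B/\log^2K$, which is strictly weaker than $B$ precisely when $W\lesssim\log K$ — the case you are trying to handle. The observation that $M>C\log N$ forces $B\gtrsim\log^2K$ is correct but irrelevant: the deficit is a multiplicative factor of $W^2/(\log K\log(WL))$ that cannot be paid by making $B$ large, since $B$ appears on both sides of the inequality you multiplied through. Using the other bound $V\lesssim W^3L^2$ alone has the same problem: it yields $W^4L^2\geq W^3L^2\gtrsim B/\log K$, again short by a $\log K$.

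The root cause is that routing through a precomputed VC-dimension number and then invoking Sauer--Shelah on $K$ points necessarily costs an extra factor of $\log(K/V)$, and this slack cannot be removed in the small-$W$, large-$L$ regime. The paper sidesteps this entirely: instead of bounding VC-dimension first, it applies Warren's theorem \emph{directly} to the family of $N(WL+1)2^{WL}$ (or $N(WL+M)2^{WL}$ when $M>2N$) polynomials in the $O(W^2L)$ weight/bias parameters obtained by ranging over both the inputs and the $2^{WL}$ possible layer activation sign patterns. Taking logarithms gives $B\lesssim W^3L^2+W^2L\log N$ — crucially, the $W^3L^2$ term carries no logarithmic factor, because the $\log(2^{WL})=WL$ arising from the sign enumeration replaces the $\log K$. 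A short contradiction argument (assume $W^4L^2<cB$, deduce $W^2L\log N\gtrsim B$ and $W^2L<\sqrt{cB}$, conclude $M\lesssim\log N$) then finishes. Your argument could be repaired along the same lines by applying Warren's theorem/Goldberg--Jerrum counting directly to your $K$ threshold polynomials together with the $2^{WL}$ activation-pattern systems, rather than compressing the information into a single VC-dimension number and paying Sauer--Shelah on the way back out.
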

This result implies that if $\Upsilon^{W,L}(\mathbb{R})$ can match the values of any vector in $S_{N,M}$ for $M$ in the range $(C\log(N),\exp(N/C))$, then the number of parameters must be larger than a constant multiple of the upper bound proved in Theorem \ref{sparse-approximation-theorem}. Thus Theorem \ref{sparse-approximation-theorem} is sharp in this range. If $M < C\log(N)$ then piecewise linear functions with $O(M)$ pieces can fit $S_{N,M}$, and if $M > \exp(N/C)$ then piecewise linear functions with $O(N)$ pieces can fit $S_{N,M}$. This implies that Theorem \ref{sparse-approximation-theorem} is no longer sharp outside this range.
\begin{proof}
    Suppose first that $N/2 < M \leq 2N$, i.e. that $M$ is of the same order as $N$. For any subset $S\subset \{1,...,N/2\}$ it is easy to construct an $\textbf{x}\in S_{N,M}$ such that $\textbf{x}_i > 0$ iff $i\in S$. Thus the class $\Upsilon^{W,L}(\mathbb{R})$ must shatter a set of size at least $N/2$ and the VC-dimension bound \eqref{VC-dimension-bound} implies the result.
    
    In the case where $M << N$ or $M >> N$, the proof proceeds in a similar manner as the VC-dimension bounds from \cite{goldberg1993bounding,bartlett2019nearly} although the VC-dimension cannot directly be used. 
    
    We begin with the case where $M \leq N/2$. We will bound the total number of sign patterns that $\Upsilon^{W,L}(\mathbb{R})$ can match on the input set $X = \{1,...,N\}$. For $i=1,...,L$, let $\epsilon_i\in \{0,1\}^W$ be a sign pattern. Given an input $x\in X$ and a neural network with parameters $\textbf{W}_i$ and $b_i$, consider the signs of the following quantities
    \begin{equation}\label{sign-collection-small-M}
    \begin{split}
        (A_{\textbf{W}_0,b_0}(x))_j,&~j=1,...,W\\
        (A_{\textbf{W}_1,b_1} \circ \epsilon_1 \circ A_{\textbf{W}_0,b_0}(x))_j,&~j=1,...,W\\
        (A_{\textbf{W}_2,b_2} \circ \epsilon_2 \circ A_{\textbf{W}_1,b_1} \circ \epsilon_1 \circ A_{\textbf{W}_0,b_0}(x))_j,&~j=1,...,W\\
        &\vdots\\
        (A_{\textbf{W}_{L-1},b_{L-1}} \circ \epsilon_{L-1} \circ \cdots \circ \epsilon_2 \circ A_{\textbf{W}_1,b_1} \circ \epsilon_1 \circ A_{\textbf{W}_0,b_0}(x))_j,&~j=1,..,W\\
        A_{\textbf{W}_L,b_L} \circ \epsilon_L \circ A_{\textbf{W}_{L-1},b_{L-1}} \circ \epsilon_{L-1} \circ \cdots \circ \epsilon_2 \circ A_{\textbf{W}_1,b_1} \circ \epsilon_1 \circ A_{\textbf{W}_0,b_0}(x).~~~&
    \end{split}
    \end{equation}
    Here $\epsilon_i$ represents pointwise multiplication by the sign vector $\epsilon_i$. For any input $x\in \mathbb{R}$ the definition of the ReLU activation function implies that if we recursively set
    \begin{equation}\label{recursive-epsilon-equation}
        \epsilon_i = \sign(A_{\textbf{W}_{i-1},b_{i-1}} \circ \epsilon_{i-1} \circ \cdots \circ \epsilon_2 \circ A_{\textbf{W}_1,b_1} \circ \epsilon_1 \circ A_{\textbf{W}_0,b_0}(x)),
    \end{equation}
    then we will have
    \begin{equation}
        A_{\textbf{W}_L,b_L} \circ \epsilon_L \circ  \cdots \circ \epsilon_2 \circ A_{\textbf{W}_1,b_1} \circ \epsilon_1 \circ A_{\textbf{W}_0,b_0}(x) = A_{\textbf{W}_L,b_L} \circ \sigma \circ \cdots \circ \sigma \circ A_{\textbf{W}_1,b_1} \circ \sigma \circ A_{\textbf{W}_0,b_0}(x).
    \end{equation}
    This implies that the signs of the quantities in \eqref{sign-collection-small-M} ranging over all sign vectors $\epsilon_1,...,\epsilon_L\in \{0,1\}^W$ uniquely determine the value of the network at $x$. Thus the number of sign patterns achieved on the set $X$ is bounded by the number of sign patterns achieved in \eqref{sign-collection-small-M} as $x$ ranges over the input set $X$, the $\epsilon_i$ range over the sign vectors $\{0,1\}^W$, and the parameters $\textbf{W}_i$, $b_i$ range of the set of all real numbers. As the $\epsilon_i$ range over the sign vectors $\{0,1\}^W$ and $x$ ranges over $X$, the quantities in \eqref{sign-collection-small-M} range over $N(WL+1)2^{WL}$ polynomials in the $P \leq CW^2L$ parameter variables $\textbf{W}_i$, $b_i$ of degree at most $L$. We can thus use Warren's Theorem (\cite{warren1968lower}, Theorem 3) to bound the total number of sign patterns by
    \begin{equation}
        \left(\frac{4eLN(WL+1)2^{WL}}{P}\right)^P\leq (4eLN(WL+1)2^{WL})^{CW^2L}.
    \end{equation}
    Suppose that $\Upsilon^{W,L}(\mathbb{R})$ can match the values of any element in $S_{N,M}$. Since the set $S_{N,M}$ contains the indicator function of every subset of $\{1,...,N\}$ of size $M$, we get that
    \begin{equation}
        \binom{N}{M} \leq (4eLN(WL+1)2^{WL})^{CW^2L}.
    \end{equation}
    Taking logarithms, we get
    \begin{equation}
    \begin{split}
        M\log(N/M) &\leq CW^3L^2 + CW^2L\log(N) + CW^2L\log(4eL(WL+1))\\
        &\leq CW^3L^2 + CW^2L\log(N) \leq CW^4L^2 + CW^2L\log(N).
    \end{split}
    \end{equation}
    Since $M \leq N/2$, we conclude that
    \begin{equation}\label{sign-pattern-bound-3725}
        M(1+\log(N/M)) \leq CM\log(N/M) \leq C\max\{W^4L^2, W^2L\log(N)\}.
    \end{equation}
    In the next few equations, let $C$ denote the constant in \eqref{sign-pattern-bound-3725}. Suppose that $W^4L^2 < C^{-1}M(1+\log(N/M))$. Then equation \eqref{sign-pattern-bound-3725} implies that
    \begin{equation}
        W^2L \geq M\frac{1+\log(N/M)}{C\log(N)}.
    \end{equation}
    But this would mean that
    \begin{equation}
        M\frac{1+\log(N/M)}{C\log(N)} \leq W^2L = \sqrt{W^4L^2} < \sqrt{C^{-1}M(1+\log(N/M))}.
    \end{equation}
    Rearranging this, we get the inequality
    \begin{equation}
        \sqrt{M} < \frac{\sqrt{C}\log(N)}{\sqrt{1+\log(N/M)}},
    \end{equation}
    from which we deduce that $M \leq C\log(N)$ for a (potentially larger) new constant $C$.

    Next, we consider the case where $M > 2N$. In this case we consider the following modification of \eqref{sign-collection-small-M}
    \begin{equation}\label{sign-collection-large-M}
    \begin{split}
        (A_{\textbf{W}_0,b_0}(x))_j,&~j=1,...,W\\
        (A_{\textbf{W}_1,b_1} \circ \epsilon_1 \circ A_{\textbf{W}_0,b_0}(x))_j,&~j=1,...,W\\
        (A_{\textbf{W}_2,b_2} \circ \epsilon_2 \circ A_{\textbf{W}_1,b_1} \circ \epsilon_1 \circ A_{\textbf{W}_0,b_0}(x))_j,&~j=1,...,W\\
        &\vdots\\
        (A_{\textbf{W}_{L-1},b_{L-1}} \circ \epsilon_{L-1} \circ \cdots \circ \epsilon_2 \circ A_{\textbf{W}_1,b_1} \circ \epsilon_1 \circ A_{\textbf{W}_0,b_0}(x))_j,&~j=1,..,W\\
        A_{\textbf{W}_L,b_L} \circ \epsilon_L \circ A_{\textbf{W}_{L-1},b_{L-1}} \circ \epsilon_{L-1} \circ \cdots \circ \epsilon_2 \circ A_{\textbf{W}_1,b_1} \circ \epsilon_1 \circ A_{\textbf{W}_0,b_0}(x) - k,&~k=0,...\lfloor M/N\rfloor-1.
    \end{split}
    \end{equation}
    The number of sign patterns that can be obtained as $x$ ranges over $X$, the $\epsilon_i$ range over $\{0,1\}^W$, and the parameters range over the set of real numbers is bounded (using Warren's Theorem \cite{warren1968lower}) by
    \begin{equation}
        \left(\frac{4eLN(WL+M)2^{WL}}{P}\right)^P\leq (4eLN(WL+M)2^{WL})^{CW^2L}.
    \end{equation}
    The only difference to the previous bound is that for each choice of $x\in X$ and each choice of signs $\epsilon_i\in \{0,1\}^W$, the number of equations in \eqref{sign-collection-large-M} is $WL + \lfloor M/N\rfloor \leq WL + M$.
    
    However, the set $S_{N,M}$ contains all $(\lfloor M/N\rfloor+1)^{N-1}$ vectors whose first $N-1$ coordinates are arbitrary integers in $\{0,1,...,\lfloor M/N\rfloor\}$ and whose last coordinate is chosen to make the $\ell^1$-norm equal to $M$. Thus, setting $\epsilon_i$ recursively according to \eqref{recursive-epsilon-equation}, we see that if every vector in $S_{N,M}$ can be represented by an element of $\Upsilon^{W,L}(\mathbb{R})$, then
    \begin{equation}
        (\lfloor M/N\rfloor+1)^{N-1} \leq (4eLN(WL+M)2^{WL})^{CW^2L}.
    \end{equation}
    Taking logarithms and calculating in a similar manner as before, we get
    \begin{equation}
        N(1+\log(M/N)) \leq CW^4L^2 + CW^2L\log(M).
    \end{equation}
    As before we now deduce that if $W^4L^2 < C^{-1}N(1+\log(M/N))$, then $N \leq C\log(M)$.
\end{proof}

\section{Acknowledgements}
We would like to thank Ron DeVore for suggesting this problem, and Andrea Bonito, Geurgana Petrova, Zuowei Shen, George Karniadakis, Jinchao Xu, and Qihang Zhou for helpful comments while preparing this manuscript. We would also like to thank the anonymous reviewers for their careful reading and helpful comments. This work was supported by the National Science Foundation (DMS-2111387 and CCF-2205004) as well as a MURI ONR grant N00014-20-1-2787.

\bibliographystyle{amsplain}
\bibliography{refs}

\appendix
\section{Elementary Constructions}\label{elementary-appendix}
Here we give the constructions of sums and of piecewise linear continuous functions using deep ReLU networks references in Section \ref{basic-network-constructions-section}.
\begin{proof}[Proof of Proposition \ref{summing-networks}]
    	We will show by induction on $j$ that
	\begin{equation}
	\begin{pmatrix}
            x\\
            0
        \end{pmatrix}\rightarrow
        \begin{pmatrix}
            x\\
            \sum_{i=1}^j f_i(x)
        \end{pmatrix}\in \Upsilon^{W+2d+2k,L}(\mathbb{R}^{d+k},\mathbb{R}^{d+k})
	\end{equation}
	for $L=\sum_{i=1}^j L_i$. The base case $j=0$ is trivial since the identity map is affine. Suppose we have shown this for $j-1$, i.e.
	\begin{equation}
	\begin{pmatrix}
            x\\
            0
        \end{pmatrix}\rightarrow
        \begin{pmatrix}
            x\\
            \sum_{i=1}^{j-1} f_i(x)
        \end{pmatrix}\in \Upsilon^{W+2d+2k,L}(\mathbb{R}^{d+k},\mathbb{R}^{d+k}),
	\end{equation}
	where $L = \sum_{i=1}^{j-1} L_i$. Compose this map with an affine map which duplicates the first entry to get
	\begin{equation}
	\begin{pmatrix}
            x\\
            0
        \end{pmatrix}\rightarrow
        \begin{pmatrix}
            x\\
            x\\
            \sum_{i=1}^{j-1} f_i(x)
        \end{pmatrix}\in \Upsilon^{W+2d+2k,L}(\mathbb{R}^{d+k},\mathbb{R}^{2d+k}).
	\end{equation}
	Now, we use Lemma \ref{select-coordinates-lemma} to apply $f_j$ to the middle entry. This gives
	\begin{equation}
	\begin{pmatrix}
            x\\
            0
        \end{pmatrix}\rightarrow
        \begin{pmatrix}
            x\\
            f_j(x)\\
            \sum_{i=1}^{j-1} f_i(x)
        \end{pmatrix}\in \Upsilon^{W+2d+2k,L+L_j}(\mathbb{R}^{d+k},\mathbb{R}^{d+2k}).
	\end{equation}
	We finally compose with the affine map
	\begin{equation}
		\begin{pmatrix}
            x\\
            y\\
            z
        \end{pmatrix}\rightarrow 
        		\begin{pmatrix}
            x\\
            y+z
        \end{pmatrix}\in \Upsilon^{0}(\mathbb{R}^{d+2k},\mathbb{R}^{d+k}),
	\end{equation}
	and apply Lemma \ref{composition-lemma} to obtain
	\begin{equation}
		\begin{pmatrix}
            x\\
            0
        \end{pmatrix}\rightarrow
        \begin{pmatrix}
            x\\
            \sum_{i=1}^j f_i(x)
        \end{pmatrix}\in \Upsilon^{W+2d+2k,L+L_j}(\mathbb{R}^{d+k},\mathbb{R}^{d+k}),
	\end{equation}
	which completes the inductive step. 
	
	Applying the induction up to $j=n$, we have that
	\begin{equation}
	x\rightarrow \begin{pmatrix}
            x\\
            0
        \end{pmatrix}\rightarrow \begin{pmatrix}
            x\\
            \sum_{i=1}^n f_i(x)
        \end{pmatrix}\rightarrow  \sum_{i=1}^n f_i(x)\in \Upsilon^{W+2k+2,L}(\mathbb{R}^d,\mathbb{R}^k),
	\end{equation}
	where $L = \sum_{i=1}^n L_i$, since the first and last maps above are affine (applying Lemma \ref{composition-lemma}).
\end{proof}

\begin{proof}[Proof of Proposition \ref{piecewise-linear-function-proposition}]
    First observe that any piecewise linear function $f$ with $k$ pieces can be written as
    \begin{equation}\label{piecewise-linear}
        f(x) = a_0x + c + \sum_{i=1}^{k-1}a_i\sigma(x - b_i)
    \end{equation}
    for appropriate weights $a_0,...,a_{k-1}$ and $b_1,...,b_{k-1}$. Specifically, the $b_i$ are simply equal to the breakpoints points at which the derivative of $f$ is discontinuous, while the $a_i$ give the jump in derivative at those points. $a_0$ is set equal to the derivative in the left-most component and $c$ is set to match the value at $0$.
    
    Now we apply Proposition \ref{summing-networks} to the sum \eqref{piecewise-linear} to get the desired result, since we easily see that
    $$
    x\rightarrow a_0x + c\in \Upsilon^0(\mathbb{R})
    $$
    and
    $$
    x\rightarrow a_i\sigma(x-b_i)\in \Upsilon^{1,1}(\mathbb{R}).
    $$
\end{proof}
\begin{proof}[Proof of Lemma \ref{max-min-networks-lemma}]
	We observe the basic formulas:
	\begin{equation}
		\max(x,y) = x + \sigma(y-x),~\min(x,y) = x - \sigma(x-y).
	\end{equation}
	We begin with the affine map
	\begin{equation}
		\begin{pmatrix}
            x\\
            y
        \end{pmatrix}\rightarrow \begin{pmatrix}
            x\\
            y-x\\
            x-y
        \end{pmatrix}\in \Upsilon^0(\mathbb{R}^2,\mathbb{R}^3).
	\end{equation}
	Next, we use the fact that $\sigma\in \Upsilon^{1,1}(\mathbb{R})$ and Lemmas \ref{composition-lemma} and \ref{select-coordinates-lemma} to apply $\sigma$ to the last two coordinates. We get
	\begin{equation}
		\begin{pmatrix}
            x\\
            y
        \end{pmatrix}\rightarrow \begin{pmatrix}
            x\\
            \sigma(y-x)\\
            \sigma(x-y)
        \end{pmatrix}\in \Upsilon^{4,1}(\mathbb{R}^2,\mathbb{R}^3).
	\end{equation}
	Finally, we use Lemma \ref{composition-lemma} to compose with the affine map
	\begin{equation}
	\begin{pmatrix}
            x\\
            y\\
            z
        \end{pmatrix}\rightarrow \begin{pmatrix}
            x+y\\
            x-z
        \end{pmatrix}.
	\end{equation}
\end{proof}
\section{Product Network Construction}\label{product-network-appendix}
\begin{proof}[Proof of Proposition \ref{produt-network-proposition}]
    Observe that the piecewise linear hat function
    \begin{equation}
        f(x) = \begin{cases}
            2x & x\leq 1/2\\
            2(1 - x) & x > 1/2
        \end{cases}
    \end{equation}
    satisfies $f\in \Upsilon^{5,1}(\mathbb{R})$ by Proposition \ref{piecewise-linear-function-proposition}. On the interval $[0,1]$, $f$ composed with itself $n$ times is the sawtooth function
    \begin{equation}
        f^{\circ n}(x) := (f \circ \cdots \circ f)(x) = f(2^{n-1}x - \lfloor 2^{n-1}x\rfloor),
    \end{equation}
    and one can calculate that (see \cite{yarotsky2017error})
    \begin{equation}\label{x-sq-formula}
        x^2 = x - \sum_{n=1}^\infty 4^{-n}f^{\circ n}(x)
    \end{equation}
    for $x\in [0,1]$. 
    
    Using this, we construct a network $g_k\in \Upsilon^{7,k}(\mathbb{R})$ such that
    \begin{equation}\label{x-sq-error}
        \sup_{x\in [0,1]}|x^2 - g_k(x)| \leq 4^{-k}.
    \end{equation}
    To do this, we first apply the affine map which duplicates the input
    \begin{equation}\label{eq-813}
        x\rightarrow \begin{pmatrix}
            x\\
            x
        \end{pmatrix}\in \Upsilon^0(\mathbb{R},\mathbb{R}^2).
    \end{equation}
    Next, we show by induction on $k$ that the map
    \begin{equation}\label{inductive-step-820}
        x\rightarrow \begin{pmatrix}
            x - \sum_{n=1}^k 4^{-n}f^{\circ n}(x)\\
            f^{\circ k}(x)
        \end{pmatrix}\in \Upsilon^{7,k}(\mathbb{R},\mathbb{R}^2).
    \end{equation}
    The base case $k=0$ is simply \eqref{eq-813}. 
    
    For the inductive step suppose that \eqref{inductive-step-820} holds for $k\geq 0$. We use Lemma \ref{select-coordinates-lemma} to apply $f\in \Upsilon^{5,1}(\mathbb{R})$ to the second coordinate, showing that
    \begin{equation}
        \begin{pmatrix}
            x\\
            y
        \end{pmatrix}\rightarrow \begin{pmatrix}
            x\\
            f(y)
        \end{pmatrix}\in \Upsilon^{7,1}(\mathbb{R}^2,\mathbb{R}^2).
    \end{equation}
    Using the inductive assumption and Lemma \ref{composition-lemma}, we compose this with the map in \eqref{inductive-step-820} to get
    \begin{equation}
        x\rightarrow \begin{pmatrix}
            x - \sum_{n=1}^k 4^{-n}f^{\circ n}(x)\\
            f^{\circ (k+1)}(x)
        \end{pmatrix}\in \Upsilon^{7,k+1}(\mathbb{R},\mathbb{R}^2).
    \end{equation}
    We again use Lemma \ref{composition-lemma} and compose with the affine map
    \begin{equation}
        \begin{pmatrix}
            x\\
            y
        \end{pmatrix}\rightarrow \begin{pmatrix}
            x+y\\
            y
        \end{pmatrix}\in \Upsilon^{0}(\mathbb{R}^2,\mathbb{R}^2)
    \end{equation}
    to complete the inductive step.
    
    To construct $g_k$ we then simply compose the map in \eqref{inductive-step-820} with the affine map
    \begin{equation}
        \begin{pmatrix}
            x\\
            y
        \end{pmatrix}\rightarrow x\in \Upsilon^{0}(\mathbb{R}^2,\mathbb{R})
    \end{equation}
    which forgets the second coordinate. Then for $x\in [0,1]$ we have
    \begin{equation}
        g_k(x) = x - \sum_{n=1}^k 4^{-n}f^{\circ n}(x)
    \end{equation}
    and by \eqref{x-sq-formula} we get the bound \eqref{x-sq-error}. This gives us a network which approximates $x^2$ on the interval $[0,1]$. 
    
    In order to obtain a network which approximates $x^2$ on $[-1,1]$ we observe that if $x\in [-1,1]$, then $\sigma(x),\sigma(-x)\in [0,1]$, and
    \begin{equation}
        x^2 = \sigma(x)^2 + \sigma(-x)^2.
    \end{equation}
    We begin with the single layer network
    \begin{equation}\label{expanding-eq-877}
        x\rightarrow \begin{pmatrix}
            \sigma(x)\\
            \sigma(-x)
        \end{pmatrix}\in \Upsilon^{2,1}(\mathbb{R},\mathbb{R}^2).
    \end{equation}
    Further, applying Lemma \ref{select-coordinates-lemma}, we see that
    \begin{equation}
        \begin{pmatrix}
            x\\
            y
        \end{pmatrix}\rightarrow \begin{pmatrix}
            g_k(x)\\
            y
        \end{pmatrix} \in \Upsilon^{9,k}(\mathbb{R}^2,\mathbb{R}^2)
    \end{equation}
    and also
    \begin{equation}
        \begin{pmatrix}
            x\\
            y
        \end{pmatrix}\rightarrow \begin{pmatrix}
            x\\
            g_k(y)
        \end{pmatrix} \in \Upsilon^{9,k}(\mathbb{R}^2,\mathbb{R}^2)
    \end{equation}
    Finally, composing all of these and then applying the affine summation map
    \begin{equation}
        \begin{pmatrix}
            x\\
            y
        \end{pmatrix}\rightarrow x + y\in \Upsilon^0(\mathbb{R}^2),
    \end{equation}
    we get, using Lemma \ref{composition-lemma} (note that we can expand the width of the network in \eqref{expanding-eq-877}), a function $h_k\in \Upsilon^{9,2k+1}(\mathbb{R})$ such that on $[-1,1]$, we have
    \begin{equation}
        |x^2 - h_k(x)| \leq |\sigma(x)^2 - h_k(\sigma(x))| + |\sigma(-x)^2 - h_k(\sigma(-x))| \leq 4^{-k}.
    \end{equation}
    (Since one of $\sigma(x)$ and $\sigma(-x)$ is $0$.)
    
    Finally, to construct a network which approximates products, we use the formula
    \begin{equation}
        xy = 2\left(\left(\frac{x+y}{2}\right)^2 - \left(\frac{x}{2}\right)^2 - \left(\frac{y}{2}\right)^2\right)
    \end{equation}
    If $x,y\in [-1,1]$, then all of the terms which are squared in the previous equation are also in $[-1,1]$, so that we can approximate these squares using the network $h_k$. Applying the affine map
    \begin{equation}
        \begin{pmatrix}
            x\\
            y
        \end{pmatrix}\rightarrow \begin{pmatrix}
            (x+y)/2\\
            x/2\\
            y/2
        \end{pmatrix}\in \Upsilon^0(\mathbb{R}^2,\mathbb{R}^3),
    \end{equation}
    then successively applying $h_k$ to the first, second, and third coordinates using Lemmas \ref{select-coordinates-lemma} and \ref{composition-lemma}, and finally applying the affine map
    \begin{equation}
        \begin{pmatrix}
            x\\
            y\\
            z
        \end{pmatrix}\rightarrow 2(x - y - z)\in \Upsilon^0(\mathbb{R}^3),
    \end{equation}
    we obtain a network $f_k\in \Upsilon^{13,6k+3}(\mathbb{R}^2)$ such that for $x,y\in [-1,1]$ we have
    \begin{equation}
        |f_k(x,y) - xy| \leq 6\cdot 4^{-k},
    \end{equation}
    as desired.
\end{proof}
\section{Bit Extraction Network Construction}\label{bit-extraction-appendix}
\begin{proof}[Proof of Proposition \ref{bit-extractor-network-proposition}]
    We begin by noting that for any $\epsilon > 0$ the piecewise linear maps
    \begin{equation}\label{approximate-bit-network-eq}
        b_\epsilon(x) = \begin{cases}
            0 & x \leq 1/2 - \epsilon\\
            \epsilon^{-1}(x - 1/2+\epsilon) & 1/2-\epsilon< x \leq 1/2\\
            1 & x > 1/2
        \end{cases}
    \end{equation}
    and
    \begin{equation}
            g_\epsilon(x) = \begin{cases}
            x & x\leq 1-\epsilon\\
            \frac{1-\epsilon}{\epsilon}(1-x) & 1-\epsilon < x \leq 1\\
            x-1 & x > 1
        \end{cases}
    \end{equation}
    satisfy $b_\epsilon,g_\epsilon\in \Upsilon^{5,2}(\mathbb{R})$ by Proposition \ref{piecewise-linear-function-proposition}. In addition, these functions have been designed so that if $\epsilon < 2^{-n}$, we have for any $x$ of the form \eqref{binary-expansion-x} that
    \begin{equation}
        b_\epsilon(x) = x_1,~g_\epsilon(2x) = 0.x_2x_3\cdots x_n.
    \end{equation}
    We now construct the network $f_{n,m}$ by induction on $m$. In what follows, we assume that all of our inputs $x$ are of the form \eqref{binary-expansion-x}. The base case when $m = 0$ is simply the affine map
    \begin{equation}
        x\rightarrow \begin{pmatrix}
            x\\
            0
        \end{pmatrix}\in \Upsilon^0(\mathbb{R},\mathbb{R}^2).
    \end{equation}
    For the inductive step, we suppose that we have constructed a map
    \begin{equation}
        f_{n,m-1}(x) = \begin{pmatrix}
            0.x_mx_{m+1}\cdots x_n\\
            x_1x_2\cdots x_{m-1}.0
        \end{pmatrix}\in \Upsilon^{9,4(m-1)}(\mathbb{R},\mathbb{R}^2)
    \end{equation}
    We then compose this network with an affine map which doubles and duplicates the first component
    \begin{equation}
        \begin{pmatrix}
            x\\
            y
        \end{pmatrix}\rightarrow \begin{pmatrix}
            2x\\
            x\\
            y
        \end{pmatrix}\in \Upsilon^0(\mathbb{R}^2,\mathbb{R}^3)
    \end{equation}
    to get the map
    \begin{equation}
        x\rightarrow \begin{pmatrix}
            x_m.x_{m+1}\cdots x_n\\
            0.x_mx_{m+1}\cdots x_n\\
            x_1x_2\cdots x_{m-1}.0
        \end{pmatrix}\in \Upsilon^{9,4(m-1)}(\mathbb{R},\mathbb{R}^3).
    \end{equation}
    Next we choose $\epsilon < 2^{-n}$ and use Lemmas \ref{composition-lemma} and \ref{select-coordinates-lemma} to apply $g_\epsilon$ to the first component and then $b_\epsilon$ to the second component. This gives a map
    \begin{equation}
        x\rightarrow \begin{pmatrix}
            0.x_{m+1}\cdots x_n\\
            x_m\\
            x_1x_2\cdots x_{m-1}.0
        \end{pmatrix}\in \Upsilon^{9,4m}(\mathbb{R},\mathbb{R}^3).
    \end{equation}
    Finally, we complete the inductive step by composing with the affine map
    \begin{equation}
        \begin{pmatrix}
            x\\
            y\\
            z
        \end{pmatrix}\rightarrow \begin{pmatrix}
            x\\
            2z + y
        \end{pmatrix}\in \Upsilon^0(\mathbb{R}^3,\mathbb{R}^2).
    \end{equation}
\end{proof}

\begin{proof}[Proof of Lemma \ref{mapping-to-integers-lemma}]
    	Start with the following piecewise linear function 
      \begin{equation}
        g_\epsilon(x) := \begin{cases}
            0 & x \leq 0 \\
            (j + \epsilon^{-1}(x - j/b)) & j/b - \epsilon < x \leq j/b,~\text{for $j=1,...,b-1$}\\
            j & j/b < x \leq (j+1)/b - \epsilon,~\text{for $j=0,...,b-1$}\\
            b-1 & x > 1.
        \end{cases}
      \end{equation}
      Note that this function has $2b-1$ pieces and so by Proposition \ref{piecewise-linear-function-proposition} we have $g_\epsilon\in \Upsilon^{5,2(b-1)}(\mathbb{R})$.
      
      We set $x_0 = x$ and $q_0 = 0$ and consider the following recursion
	\begin{equation}
		x_{n+1} = bx_n - g_\epsilon(x_n),~q_{n+1} = bq_n + g_\epsilon(x_n).
	\end{equation}
	It is easy to verify that if $x_0 = x\in [jb^{-l}, (j+1)b^{-l} - \epsilon)$, then $q_l = j$, since in this case all iterates $x_n\notin \cup_{j=1}^b(j/b - \epsilon, j/b)$ so that $g_\epsilon$ extracts the first bit in the $b$-ary expansion of $x_n$
    $$
        g_\epsilon(x_n) = 
            j~~\text{if}~~ j/b \leq x_n < (j+1)/b.
    $$
    In addition, when $x_0 = x\in [1-b^{-l},1]$, then $q_l = b^l-1$ since $g_\epsilon(x_n) = b-1$ for all $n$.
 
    We implement this recursion using a deep ReLU network as follows. We begin with the affine map
	\begin{equation}
		x\rightarrow \begin{pmatrix}
            x\\
            x\\
            0
        \end{pmatrix} = \begin{pmatrix}
            x_0\\
            x_0\\
            q_0
        \end{pmatrix}\in \Upsilon^0(\mathbb{R},\mathbb{R}^3).
	\end{equation}
	Now, we use induction. Suppose that the map
	\begin{equation}
	x\rightarrow \begin{pmatrix}
            x_n\\
            x_n\\
            q_n
        \end{pmatrix}\in \Upsilon^{9,2(b-1)n}(\mathbb{R},\mathbb{R}^3)
	\end{equation}
	has already been implemented. Then we use Lemmas \ref{composition-lemma} and \ref{select-coordinates-lemma} to apply $g_\epsilon$ to only the second coordinate. This gives the map
	\begin{equation}
	 x \rightarrow \begin{pmatrix}
            x_n\\
            g_\epsilon(x_n)\\
            q_n
        \end{pmatrix}\in \Upsilon^{9,2(b-1)(n+1)}(\mathbb{R},\mathbb{R}^3).
	\end{equation} 
	Finally, we compose with the affine map
	\begin{equation}
		\begin{pmatrix}
            x\\
            y\\
            z
            \end{pmatrix}\rightarrow \begin{pmatrix}
            2(x - y)\\
            2(x - y)\\
            2z + y
            \end{pmatrix}\in \Upsilon^0(\mathbb{R}^3,\mathbb{R}^3)
	\end{equation}
	to complete the inductive step. After $l$ steps of induction, we then compose with the affine map which selects the last coordinate to get the network $q_1\in \Upsilon^{9,2(b-1)l}(\mathbb{R})$.
	
	For higher dimensional cubes $\Omega = [0,1)^d$, we construct an indexing network $q_d\in \Upsilon^{9d,2(b-1)l}(\mathbb{R}^d)$. We use Lemma \ref{concatenating-lemma} to apply $q_1$ to each coordinate of the input.
	Then, we compose with the affine map
	\begin{equation}
		x\rightarrow \sum_{j=1}^d b^{l(j-1)}x_j
	\end{equation}
	to get $q_d\in \Upsilon^{9d,2(b-1)l}(\mathbb{R}^d)$ with
	\begin{equation}
		q_d(\Omega^l_{\textbf{\upshape i},\epsilon}) = \text{\upshape ind}(\textbf{\upshape i}) := \sum_{j=1}^db^{l(j-1)}\textbf{\upshape i}_j.
	\end{equation}
\end{proof}
\end{document}